\title{\LARGE On the Mechanisms of Weak-to-Strong Generalization:\\ A Theoretical Perspective
} 
\author{Behrad Moniri\thanks{Correspondence to \texttt{bemoniri@seas.upenn.edu}} \quad \hspace{1cm} \quad Hamed Hassani\\[0.4cm]
\textit{\small University of Pennsylvania}
}
\DeclareMathOperator{\bmtx}{\begin{bmatrix}}
\DeclareMathOperator{\emtx}{\end{bmatrix}}
\DeclareMathOperator{\trace}{Tr}
\DeclareMathOperator{\diag}{diag}
\DeclareMathOperator*{\argmin}{argmin}
\newcommand{\calL}{\ensuremath{\mathcal{L}}}
\newcommand{\bfSigma}{\ensuremath{\bm{\Sigma}}}
\newcommand{\R}{\mathbb{R}}
\newcommand{\prob}{\mathbb{P}}
\newcommand{\bfA}{\ensuremath{\mathbf{A}}}
\newcommand{\bfB}{\ensuremath{\mathbf{B}}}
\newcommand{\bfC}{\ensuremath{\mathbf{C}}}
\newcommand{\bfD}{\ensuremath{\mathbf{D}}}
\newcommand{\bfF}{\ensuremath{\mathbf{F}}}
\newcommand{\bfG}{\ensuremath{\mathbf{G}}}
\newcommand{\bfI}{\ensuremath{\mathbf{I}}}
\newcommand{\bfR}{\ensuremath{\mathbf{R}}}
\newcommand{\bfW}{\ensuremath{\mathbf{W}}}
\newcommand{\bfX}{\ensuremath{\mathbf{X}}}
\newcommand{\bfa}{\ensuremath{\mathbf{a}}}
\newcommand{\bfDelta}{\ensuremath{\boldsymbol{\Delta}}}
\newcommand{\bfGamma}{\ensuremath{\boldsymbol{\Gamma}}}
\newcommand{\bft}{\ensuremath{\mathbf{t}}}
\newcommand{\bfu}{\ensuremath{\mathbf{u}}}
\newcommand{\bfv}{\ensuremath{\mathbf{v}}}
\newcommand{\bfx}{\ensuremath{\mathbf{x}}}
\newcommand{\bfy}{\ensuremath{\mathbf{y}}}
\def\st/{\textsuperscript{st}}
\def\nd/{\textsuperscript{nd}}
\def\rd/{\textsuperscript{rd}}
\def\th/{\textsuperscript{th}}
\newcommand{\norm}[1]{\left \lVert #1 \right \rVert}
\newcommand{\parant}[1]{\left ( #1 \right )}
\newcommand{\bracket}[1]{\left [ #1 \right ]}
\newcommand{\bmat}[1]{\begin{bmatrix} #1\end{bmatrix} }
\newcommand{\dx}{\mathrm{d}_{\scriptscriptstyle\mathsf{X}}}
\newcommand{\bSigma}{\mathbf{\Sigma}}
\newcommand{\Ex}{\mathsf{E}}
\newtheoremstyle{mystyle}
  {6pt}
  {6pt}
  {\itshape}
  {}
  {\bfseries}
  {.}
  { }
  {}
\theoremstyle{mystyle}
\newtheorem{theorem}{Theorem}
\newtheorem{proposition}[theorem]{Proposition}
\newtheorem{lemma}[theorem]{Lemma}
\newtheorem{definition}[theorem]{Definition}
\newtheorem{remark}[theorem]{Remark}
\newtheorem{assumption}[theorem]{Assumption}
\def\ep{{\varepsilon}}
\newcommand{\vbeta}{\boldsymbol{\beta}}
\newcommand{\valpha}{\boldsymbol{\alpha}}
\newcommand{\normal}{{\sf N}}
\date{}
\begin{document}
\maketitle
\begin{abstract}
Weak-to-strong generalization—where a student model trained on imperfect labels generated by a weaker teacher nonetheless surpasses that teacher—has been widely observed, but the mechanisms that enable it have remained poorly understood. In this paper, through a theoretical analysis of simple models, we uncover three core mechanisms that can drive this phenomenon. First, by analyzing ridge regression, we study the interplay between the teacher and student regularization and prove that a student can compensate for a teacher’s under-regularization and achieve lower test error. We also analyze the role of the parameterization regime of the models. Second, by analyzing weighted ridge regression, we show that a student model with a regularization structure more aligned to the target, can outperform its teacher. Third, in a nonlinear multi‐index setting, we demonstrate that a student can learn easy, task-specific features from the teacher while leveraging its own broader pre-training to learn hard‐to‐learn features that the teacher cannot capture.
\end{abstract}

\section{Introduction}
Weak-to-strong generalization refers to the phenomenon where a strong (student) model trained on data produced by a weak (teacher) model can sometimes significantly surpass the teacher's performance. This concept was first introduced by \cite{burns2024weak}, where the authors fine-tuned the GPT-2 model \citep{GPT2} (the teacher) for a specific task using ground-truth labels, subsequently employing the fine-tuned model to generate synthetic samples for the same task. These synthetic samples were then used to fine-tune GPT-4 \citep{GPT4} (the student). Remarkably, the fine-tuned student model outperformed its teacher in certain settings despite having access only to the imperfect synthetic data generated by the teacher.

Weak-to-strong generalization is an especially important phenomenon from a practical perspective because of its implications for the emerging question of \textit{superalignment} \citep{openai2023superalignment}; i.e., can humans steer models with potentially superhuman capabilities to become aligned to human norms and values \citep{burns2024weak}?  Considering the weak model as a proxy for humans, the possibility of the weak-to-strong generalization phenomenon suggests that the answer can be affirmative.

Despite its practical importance, the mechanisms that enable weak-to-strong generalization are still not fully understood. Regularization has empirically been shown to play a critical role in enabling weak-to-strong generalization. However, despite recent theoretical progress demonstrating that regularizing the student is necessary in some settings \citep{medvedev2025weak}, the full picture of the effects  and the interplay of the regularization of both the student and teacher models in weak-to-strong generalization is still unclear, and most prior work on weak-to-strong generalization mainly focus on {ridgeless} regression (see e.g., \cite{dong2025discrepancies,xue2025representations,ildiz2024high}, etc.). 

Additionally, prior work assumes that the teacher and student models have frozen representations, and  only a linear head is trained through a convex objective (see e.g., \cite{ildiz2024high, medvedev2025weak,dong2025discrepancies,xue2025representations,charikar2024quantifying}, etc.).  However, fine-tuning can in practice go beyond this linearized regime and update model features as well. \cite{burns2024weak} empirically demonstrated that updating the features yields substantially stronger weak-to-strong gains than only updating a linear head. For these cases, a linearized theoretical model might not suffice to capture all the relevant phenomena. This motivates a theoretical study beyond the linearized regimes and an analysis of the role of feature learning on weak-to-strong generalization.

To take steps towards better understanding these aspects of training on weak-to-strong generalization, in this paper, we conduct a thorough theoretical study of this phenomenon in prototypical theoretical models. For the linear setting, we let the student and the teacher be high-dimensional standard and weighted ridge regression models and study how the explicit regularizations of the teacher and student models affect weak-to-strong generalization. We also investigate the role of the parameterization regime of the models. As the choice of regularization, we consider ridge and weighted ridge penalties. For the nonlinear case, we consider the problem of learning from multi-index models where the  models learn relevant features through a non-convex optimization objective. By studying this setting, we characterize how knowledge propagates between the models.

\subsection{Contributions}
Here we discuss the main contributions of the paper. We characterize three mechanisms that can enable weak-to-strong generalization.
\begin{itemize}
    \item In Section~\ref{sec:ridgeridge}, we consider a setting where the student and the teacher are trained with ridge regression. We fully characterize the test error of the models in the high-dimensional proportional regime by deriving asymptotic expressions for the test errors. Using these expressions, we study the conditions where the student model outperforms the teacher. We show that the student model can outperform the teacher by \textit{adequately compensating the under-regularization of the teacher}. We further prove that different parameterization regimes of the student model can result in qualitatively different phenomena.

    \item In Section~\ref{sec:gen_ridge}, we consider a setting where the teacher is again trained with ridge regression. However, we train the student with a \textit{weighted} ridge regularization. We again fully characterize the limiting test errors of the models in the high-dimensional proportional limit, and show that weak-to-strong generalization can happen \textit{when the regularization structure of the student is better suited for the task}.
    
    \item In Section~\ref{sec:feature-transfer}, we study learning form a nonlinear multi-index learning function that can be decomposed to a mix of \textit{easy}- and \textit{hard}-to-learn components by applying a single step of gradient descent on the first layers of two-layer neural networks.
    We assume that the easy component is highly specialized and task-specific, but, the hard component is a component shared across many tasks. We show that  even if the teacher model is not able to learn the hard components on its own, a pre-trained student can learn the easy component from the teacher while still retaining the knowledge from pre-training for the hard component.    
\end{itemize}

\subsection{Related Works}
The machine learning community has shown growing interest in weak-to-strong generalization. In this section, we review these results.

 \paragraph{Theoretical Results.}
 Prior work examines scenarios in which both the student and teacher rely on fixed, pre-trained feature representations. \cite{wu2024provable} analyze a stylized classification task under an over-parameterized spiked-covariance model with Gaussian covariates, where the teacher model does not have the capability to fit the target function, and the student has a structure that is better aligned with the target. \cite{ildiz2024high} investigate weak-to-strong generalization for high-dimensional {ridgeless} regression. Building on this line and in a similar setting, \cite{dong2025discrepancies,xue2025representations} study how mismatches between student and teacher features affect generalization: \cite{dong2025discrepancies} focus on a {ridgeless}, variance-dominated linear regime in which both models have negligible bias and show that weak-to-strong transfer occurs when the student’s features have lower intrinsic dimension.  \cite{xue2025representations} consider the same setting and propose that the overlap between the subspace of features that  teacher model has not learned, and the subspace  of features that the student model has learned during pre-training govern weak-to-strong generalization. In contrast, this paper analyzes linear models in the high-dimensional proportional regime, covering both under- and over-parameterized cases, where models can have large bias. We also explicitly investigate the role of regularization on weak-to-strong generalization.

Relatedly, \cite{medvedev2025weak} consider two-layer neural networks with random first-layer weights (random-features models) and show that, when the student is much wider than the teacher, early stopping is essential for weak-to-strong generalization. However, they assume the teacher is already optimally trained and do not analyze the role of its training. \cite{charikar2024quantifying} propose that the erroneous knowledge that the strong model does not obtain from the weak model characterizes how much the strong model improves over the weak model.

\paragraph{Empirical Studies.}
Following the pioneering work of \cite{burns2024weak}, different variants and applications of weak-to-strong generalization have been studied. \cite{bansal2024smaller}, \cite{yang2024weak} let the weak model generate data with chain-of-thought to supervise the student models. \cite{ji2024aligner,tao2024your} use weak-to-strong generalization for the problem of alignment. \cite{guo2024vision} study this phenomenon in vision foundation models. \cite{liu2024co} propose a hierarchical mixture of experts method to boost weak-to-strong generalization. \cite{mulgund2025relating} characterize the gain in performance of the student model over the teacher model in terms of the
misfit between the models. 

     


\subsection{Notation} We denote vector quantities by \textbf{bold} lower-case, and matrix quantities by \textbf{bold} upper-case.  We use $\norm{\cdot}_{\rm op}$, $\norm{\cdot}_{\mathrm{Fr}}$ to denote the operator (spectral) and Frobenius norms. Given an indexed set of vectors $\{\bfx_i\}_{i = 1}^{n}$, we use the upper case to denote the (row-wise) stacked matrix, e.g.\ $\bfX \triangleq \bmat{\bfx_1 & \cdots & \bfx_n}^\top$. Throughout the paper, we use the standard asymptotic notation $o(\cdot), O(\cdot), \Omega(\cdot), \Theta(\cdot)$. Finally, we use $\to_\prob$ to denote convergence in probability.

\section{The Linearized Case}
\label{sec:fixed-features}

During the fine-tuning of pre-trained large-scale models, the training dynamic often falls into a kernel regime where the features are not evolved \citep{wei2022more,malladi2023kernel}. Motivated by these observations, in this section we cast the fine-tuning problem as a linear regression problem over Gaussian features. We aim to analyze the role of student and teacher \emph{regularization}, and also the \emph{parameterization regimes} of the models in weak-to-strong generalization. 

Assume that the teacher model has access to $n_t$  independent samples $\mathcal{S}_t = \{(\bfx_i, y_i)\}_{i = 1}^{n_t}$ drawn according to
\begin{align}
    \label{eq:datagen}
    \bfx_i \sim \normal(\mathbf{0}, \bfI_{\dx}), \quad y_i = \vbeta_\star^\top \bfx_i + \ep_i
\end{align}
where $\vbeta_\star \in \R^{\dx}$ is an unknown target vector, and $\ep_i \sim \normal(0, \sigma_\ep^2)$ is an independent additive noise. The teacher model $\hat{f}_t:\R^{\dx}\to \R$ is fit on the features $\{\bfx_i\}_{i = 1}^{n_t}$ using these labeled samples. The teacher is then used to generate synthetic labels for $n_s \in \mathbb{N}$ unlabeled  covariates  $\mathcal{S}_s = \{\tilde\bfx_i\}_{i = 1}^{n_s}$ drawn independently from the same distribution according to $ \tilde\bfx_i \sim \normal(\mathbf{0}, \bfI_{\dx})$ as $\tilde y_i = \hat{f}_t(\tilde\bfx_i)$.  These samples are then used to train the student model $\hat{f}_s:\R^{\dx}\to \R$.  

We focus on the following two settings, each showcasing a different mechanism that can enable weak-to-strong generalization.

\paragraph{Setting 1: Ridge Regression.} We train the teacher $\hat{f}_t(\bfx) = \hat\vbeta_t^\top \bfx$ and  the student $\hat{f}_s(\bfx) = \hat\vbeta_s^\top \bfx$ using (standard) ridge regression. We prove that a properly regularized student can outperform the teacher, in the case where the regularization parameter of the teacher is set to be smaller that the optimal regularization parameter.
This is an example of weak-to-strong generalization through \emph{adequately compensating  under-regularization}. Furthermore, we show that 
\textit{two qualitatively different scenarios} can arise depending whether the student model is \textit{over- or under-parametrized}. This aligns with \cite{burns2024weak} and \cite{medvedev2025weak}, which show that student regularization is necessary for weak-to-strong generalization; we extend their work by analyzing the role of teacher regularization, and a finer-grained analysis of student regularization and model parameterization, revealing new phenomena.

 \paragraph{Setting 2: Weighted Ridge Regression.} We train $\hat{f}_t(\bfx) = \vbeta_t^\top \bfx$ using (standard) ridge regression and $\hat{f}_s(\bfx) = \vbeta_s^\top \bfx$ using weighted  ridge regression \citep{hoerl1970ridge,casella1980minimax}. We show that the strong student model can \emph{leverage better regularization structure} and outperform the weak teacher, \textit{even if} the regularization parameter for the teacher is tuned optimally. We argue that a student model can have a more suitable regularization either by using an architecture that is better tailored to the task or by benefiting from more effective pre-training.

We consider growing $n_s, n_t, \dx$ following the  high-dimensional limit. Although our results are proven for this asymptotic regime, through numerical experiments, we show that they still  match simulations very well, even for moderately large values of $n_s, n_t, \dx$.
\begin{assumption}
    \label{asump:high-dim}
    Assume that $n_t, n_s$ and $\dx$ all tend to infinity with a proportional rate; i.e.,
    \begin{align*}
        \dx/n_s \to \gamma_s >0, \quad \text{and}\quad \dx/n_t \to \gamma_t >0.
    \end{align*}
\end{assumption}
In this high-dimensional limit, we characterize the test errors achieved by the teacher and student models given by
\begin{align}
    \label{eq:test_errors}
    \calL_{\rm t} &= \Ex_{\bfx, y}\parant{y - \hat\vbeta_t^\top\bfx}^2 = \sigma_\ep^2 + {\|\hat\vbeta_t-\vbeta_\star\|_2^2}\\
    \calL_{\rm s} &= \Ex_{\bfx, y} \parant{y - \hat\vbeta_s^\top\bfx}^2 = \sigma_\ep^2 +
    {\|\hat\vbeta_s-\vbeta_\star\|_2^2}\nonumber
\end{align}
where $(\bfx, y)$ is an independent test sample drawn from \eqref{eq:datagen}. We then use these characterizations to study the conditions under which the student model outperforms the teacher.

\subsection{Setting 1: High-Dimensional Ridge Regression}
\label{sec:ridgeridge}
In this section, we assume that the teacher fits a linear regression model $\hat{f}_t(\bfx) = \hat\vbeta_t^\top \bfx$ trained on the samples $\mathcal{S}_t$, and is given by
\begin{align}
    \label{eq:teacher-setting1}
    \hat\vbeta_t  = \argmin_{\vbeta \in \R^{\dx}} \bracket{\frac{1}{n_t}\sum_{(\bfx_i, y_i) \in \mathcal{S}_t}\parant{y_i - \vbeta^\top \bfx_i}^2+ \lambda_t \|\vbeta\|_2^2}
\end{align}
where $\lambda_t \in \R$ is the teacher ridge regularization parameter. The student is also a linear model $\hat{f}_s(\bfx) = \hat\vbeta_s^\top \bfx$ trained on fresh samples ${\mathcal{S}}_s$ labeled by the teacher model, and is given by
\begin{align}
    \label{eq:student-setting1}
    \hat\vbeta_s  = \argmin_{\vbeta \in \R^{\dx}} \bracket{\frac{1}{n_s}\sum_{\tilde\bfx_i\in{\mathcal{S}}_s}\parant{\vbeta^\top \tilde\bfx_i - \hat\vbeta_t^\top \tilde\bfx_i}^2+ \lambda_s \|\vbeta\|_2^2}
\end{align}
in which $\lambda_s \in \R$ is the student regularization parameter. We characterize the test error of these models in the high-dimensional proportional limit of Assumption~\ref{asump:high-dim}. Our characterization of the test errors $\mathcal{L}_s, \mathcal{L}_t$ will be in terms of the following quantities from the random matrix theory literature (see e.g., \cite{bai2010spectral}).

\begin{definition}
    \label{def:m_Def}
    Let $m(\lambda;\gamma)$ be the Stieltjes transform of the Marchenko-Pastur law with parameter $\gamma$ evaluated at $-\lambda$; i.e.,
\begin{align*}
    m(\lambda;\gamma) = \int \frac{\mathrm{d} \mu_{{\rm MP}(\gamma)}(s)}{s+\lambda} = -\frac{1}{2\gamma \lambda} \bracket{1 - \gamma + \lambda - \sqrt{(1 + \gamma + \lambda)^2  -4\gamma}}.
\end{align*}
Also, for $p \in \{s, t\}$, we define $m_{p, 1} = m(\lambda_p, \gamma_p)$ and $m_{p, 2} = - \frac{\partial m }{\partial \lambda}\big|_{\lambda_p, \gamma_p}.$

\end{definition}

The test error of $\hat\vbeta_t$ in the high-dimensional proportional limit has been studied extensively in the literature \citep{tulino2004random,dobriban2018high,hastie2022surprises}. The following proposition characterizes the test error of $\hat\vbeta_t$ in our setting.
\begin{proposition}
    \label{thm:ridge-ridge-weak-error} 
    Under the condition that $\vbeta_\star \sim \normal(\mathbf{0}, \dx^{-1}\bfI_{\dx})$ independent of other sources of randomness in the problem, in the high-dimensional proportional limit of Assumption~\ref{asump:high-dim}, we have
    \begin{align*}
       \mathcal{L}_t \to_{\prob} \sigma_\ep^2 + \parant{\lambda_t - \sigma_\ep^2 \gamma_t} \lambda_t m_{t,2} + \sigma_\ep^2 \gamma_t m_{t,1}, 
    \end{align*}    
    where $m_{t,1}$ and $m_{t,2}$ are defined in Definition~\ref{def:m_Def}.
\end{proposition}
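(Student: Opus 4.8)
The plan is to start from the closed form of the ridge solution \eqref{eq:teacher-setting1} and reduce the random quadratic form $\|\hat\vbeta_t-\vbeta_\star\|_2^2$ to deterministic spectral functionals of the sample covariance. Writing $\bfX\in\R^{n_t\times\dx}$ for the stacked covariates, $\bveps$ for the stacked noise, and $\SigmaHat\triangleq\tfrac{1}{n_t}\bfX^\top\bfX$, the minimizer is $\hat\vbeta_t=(\SigmaHat+\lambda_t\bfI)^{-1}\tfrac{1}{n_t}\bfX^\top\bfy$ with $\bfy=\bfX\vbeta_\star+\bveps$. Substituting $\tfrac{1}{n_t}\bfX^\top\bfy=\SigmaHat\vbeta_\star+\tfrac{1}{n_t}\bfX^\top\bveps$ and using the resolvent identity $(\SigmaHat+\lambda_t\bfI)^{-1}\SigmaHat-\bfI=-\lambda_t(\SigmaHat+\lambda_t\bfI)^{-1}$, I would obtain the bias--variance decomposition
\begin{align*}
\hat\vbeta_t-\vbeta_\star=-\lambda_t(\SigmaHat+\lambda_t\bfI)^{-1}\vbeta_\star+\tfrac{1}{n_t}(\SigmaHat+\lambda_t\bfI)^{-1}\bfX^\top\bveps,
\end{align*}
separating the penalty-induced bias term (driven by $\vbeta_\star$) from the noise-driven variance term.

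Next I would square and take the three resulting contributions in turn. Since $\bveps$ is independent of $(\bfX,\vbeta_\star)$ and mean zero, the cross term has conditional mean zero and vanishes in probability, which I would make rigorous with a standard Hanson--Wright quadratic-form concentration bound. For the bias square $\lambda_t^2\,\vbeta_\star^\top(\SigmaHat+\lambda_t\bfI)^{-2}\vbeta_\star$, I would condition on $\bfX$ and use $\vbeta_\star\sim\normal(\mathbf 0,\dx^{-1}\bfI)$ independent of $\bfX$, so its conditional expectation is $\tfrac{\lambda_t^2}{\dx}\trace[(\SigmaHat+\lambda_t\bfI)^{-2}]$, with concentration around this trace again from Hanson--Wright. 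For the variance square I would take the conditional expectation over $\bveps$, using $\Ex[\bveps\bveps^\top]=\sigma_\ep^2\bfI$, to get $\tfrac{\sigma_\ep^2}{n_t^2}\trace[\bfX(\SigmaHat+\lambda_t\bfI)^{-2}\bfX^\top]=\tfrac{\sigma_\ep^2}{n_t}\trace[\SigmaHat(\SigmaHat+\lambda_t\bfI)^{-2}]$, where the last step uses $\bfX^\top\bfX=n_t\SigmaHat$ and cyclicity of the trace.

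It then remains to evaluate the two normalized traces in the limit. Letting $\hat\mu$ be the empirical spectral distribution of $\SigmaHat$, which converges weakly to the Marchenko--Pastur law $\mu_{{\rm MP}(\gamma_t)}$ under Assumption~\ref{asump:high-dim}, and noting that $s\mapsto(s+\lambda_t)^{-1}$ and $s\mapsto(s+\lambda_t)^{-2}$ are bounded and continuous on the support when $-\lambda_t$ lies outside it, I would read off
\begin{align*}
\tfrac{1}{\dx}\trace[(\SigmaHat+\lambda_t\bfI)^{-1}]\to_{\prob}m_{t,1},\qquad \tfrac{1}{\dx}\trace[(\SigmaHat+\lambda_t\bfI)^{-2}]\to_{\prob}m_{t,2},
\end{align*}
the second equalling $-\partial m/\partial\lambda$ at $(\lambda_t,\gamma_t)$ by Definition~\ref{def:m_Def}. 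Hence the bias contribution converges to $\lambda_t^2 m_{t,2}$. For the variance, the identity $\SigmaHat(\SigmaHat+\lambda_t\bfI)^{-2}=(\SigmaHat+\lambda_t\bfI)^{-1}-\lambda_t(\SigmaHat+\lambda_t\bfI)^{-2}$ together with $\dx/n_t\to\gamma_t$ yields $\sigma_\ep^2\gamma_t(m_{t,1}-\lambda_t m_{t,2})$. Summing the two pieces and adding $\sigma_\ep^2$ recovers the claimed $\sigma_\ep^2+(\lambda_t-\sigma_\ep^2\gamma_t)\lambda_t m_{t,2}+\sigma_\ep^2\gamma_t m_{t,1}$.

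The hard part will be justifying the convergence of the second trace to $m_{t,2}=-\partial m/\partial\lambda$, i.e.\ that the limit of the resolvent derivative equals the derivative of the limiting Stieltjes transform. I would either argue directly that $s\mapsto(s+\lambda_t)^{-2}$ is a bounded continuous test function against the weakly converging $\hat\mu$ (clean when $\lambda_t>0$, as then $-\lambda_t$ sits strictly to the left of the support $[0,\infty)$), or, to interchange derivative and limit uniformly in $\lambda$, invoke analyticity of both sides on a complex neighborhood of $-\lambda_t$ and a Vitali/normal-families argument. A secondary subtlety is controlling the smallest eigenvalue of $\SigmaHat$ so that $(\SigmaHat+\lambda_t\bfI)^{-1}$ is uniformly bounded with high probability; this is what legitimizes both the quadratic-form concentration and the boundedness of the test functions, and it is where keeping $-\lambda_t$ bounded away from the support (equivalently $\lambda_t$ in the admissible range) is needed.
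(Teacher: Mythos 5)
Your proposal is correct and follows essentially the same route as the paper's proof: the same bias--variance decomposition of $\hat\vbeta_t-\vbeta_\star$ (you just apply the resolvent identity $\hat\bfR\hat\bSigma-\bfI=-\lambda_t\hat\bfR$ before squaring rather than after), Hanson--Wright for the cross term and quadratic-form concentration, and Marchenko--Pastur convergence of the normalized traces $\dx^{-1}\trace[\hat\bfR]$ and $\dx^{-1}\trace[\hat\bfR^2]$ to $m_{t,1}$ and $m_{t,2}$. The extra care you flag about justifying the second trace limit and controlling the smallest eigenvalue is a welcome refinement but does not change the argument.
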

In the following theorem, we study test error of the student model $\hat\vbeta_s$.
\begin{theorem}
    \label{thm:L_s-L_t}
    Under the same assumptions as Proposition~\ref{thm:ridge-ridge-weak-error}, the test errors of  $\hat\vbeta_s$ and $\hat\vbeta_t$ satisfy
    \label{thm:ridge-ridge-strong-error} 
    \begin{align*}
        &\mathcal{L}_s - \mathcal{L}_t \to_{\prob} \Delta :=
        (\sigma_\ep^2 \gamma_t - \lambda_t)\bracket{\parant{m_{t,1} - \lambda_t m_{t,2}}\parant{\lambda_s^2 m_{s,2} - 2 \lambda_s m_{s,1}}} + \lambda_s^2 m_{s,2}\parant{1 - \lambda_t m_{t,1}}.    
    \end{align*}
        where $m_{t,1},m_{t,2},m_{s,1}, m_{s,2}$ are defined in Definition~\ref{def:m_Def}.
\end{theorem}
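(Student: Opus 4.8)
The plan is to first exploit the explicit form of the student solution. Because the student regresses onto the teacher's own predictions on fresh isotropic covariates, \eqref{eq:student-setting1} has the closed form $\hat\vbeta_s = (\widehat{\bfSigma}_s + \lambda_s\bfI)^{-1}\widehat{\bfSigma}_s\,\hat\vbeta_t = (\bfI - \lambda_s\bfR_s)\hat\vbeta_t$, where $\widehat{\bfSigma}_s = \tfrac{1}{n_s}\sum_i \tilde\bfx_i\tilde\bfx_i^\top$ and $\bfR_s = (\widehat{\bfSigma}_s + \lambda_s\bfI)^{-1}$. Thus the student is simply a shrunk copy of the teacher, and writing $\hat\vbeta_s - \vbeta_\star = (\hat\vbeta_t - \vbeta_\star) - \lambda_s\bfR_s\hat\vbeta_t$ and expanding the square gives the exact identity
\[
\mathcal{L}_s - \mathcal{L}_t = -2\lambda_s(\hat\vbeta_t - \vbeta_\star)^\top\bfR_s\hat\vbeta_t + \lambda_s^2\,\hat\vbeta_t^\top\bfR_s^2\hat\vbeta_t .
\]
This reduces the problem to evaluating two quadratic forms in the student resolvent.

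Next I would condition on all teacher-side randomness ($\bfX_t,\bveps_t,\vbeta_\star$), so that $\hat\vbeta_t$ and $\vbeta_\star$ become fixed vectors independent of the student data $\tilde\bfX_s$. Since $\widehat{\bfSigma}_s$ is an isotropic sample covariance, its resolvent admits the deterministic equivalent $\bfR_s \approx m_{s,1}\bfI$, and differentiating in $\lambda_s$ gives $\bfR_s^2 \approx m_{s,2}\bfI$; concretely, for fixed $\bfu,\bfv$ one has $\bfu^\top\bfR_s\bfv \to_\prob m_{s,1}\,\bfu^\top\bfv$ and $\bfu^\top\bfR_s^2\bfv \to_\prob m_{s,2}\,\bfu^\top\bfv$. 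Applying this with $\bfu,\bfv \in \{\hat\vbeta_t,\vbeta_\star\}$ collapses the display above to
\[
\mathcal{L}_s - \mathcal{L}_t \to_\prob (\lambda_s^2 m_{s,2} - 2\lambda_s m_{s,1})\,\|\hat\vbeta_t\|_2^2 + 2\lambda_s m_{s,1}\,\vbeta_\star^\top\hat\vbeta_t ,
\]
so all student-specific dependence is now carried by $m_{s,1},m_{s,2}$, and I only need the high-dimensional limits of $\|\hat\vbeta_t\|_2^2$ and $\vbeta_\star^\top\hat\vbeta_t$.

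For these two teacher functionals I would use $\hat\vbeta_t = (\bfI - \lambda_t\bfR_t)\vbeta_\star + \bfR_t\,\tfrac{1}{n_t}\bfX_t^\top\bveps_t$ with $\bfR_t = (\widehat{\bfSigma}_t + \lambda_t\bfI)^{-1}$. Cross terms between $\vbeta_\star$ and the noise vanish, being mean-zero in $\bveps_t$ with variance $O(1/n_t)$. The signal pieces reduce, via $\vbeta_\star \sim \normal(\mathbf{0},\dx^{-1}\bfI)$ independent of $\bfR_t$, to normalized traces of $\bfR_t$ and $(\bfI-\lambda_t\bfR_t)^2$, giving $\vbeta_\star^\top\hat\vbeta_t \to_\prob 1 - \lambda_t m_{t,1}$ and a signal contribution $1 - 2\lambda_t m_{t,1} + \lambda_t^2 m_{t,2}$ to $\|\hat\vbeta_t\|_2^2$. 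The noise piece $\tfrac{\sigma_\ep^2}{n_t}\trace[\bfR_t^2\widehat{\bfSigma}_t]$ is handled by the identity $\bfR_t^2\widehat{\bfSigma}_t = \bfR_t - \lambda_t\bfR_t^2$ together with $\tfrac{1}{n_t}\trace\bfR_t \to \gamma_t m_{t,1}$ and $\tfrac{1}{n_t}\trace\bfR_t^2 \to \gamma_t m_{t,2}$, yielding $\sigma_\ep^2\gamma_t(m_{t,1}-\lambda_t m_{t,2})$. Hence $\|\hat\vbeta_t\|_2^2 \to_\prob (1 - 2\lambda_t m_{t,1} + \lambda_t^2 m_{t,2}) + \sigma_\ep^2\gamma_t(m_{t,1}-\lambda_t m_{t,2})$; as a consistency check, combining these with \eqref{eq:test_errors} reproduces Proposition~\ref{thm:ridge-ridge-weak-error}.

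Substituting the two limits into the collapsed expression and regrouping, using the algebraic identity $1 - 2\lambda_t m_{t,1} + \lambda_t^2 m_{t,2} + \sigma_\ep^2\gamma_t(m_{t,1}-\lambda_t m_{t,2}) = (\sigma_\ep^2\gamma_t - \lambda_t)(m_{t,1} - \lambda_t m_{t,2}) + (1 - \lambda_t m_{t,1})$, yields exactly $\Delta$. I expect the main technical obstacle to be the rigorous justification of the deterministic-equivalent step: one must upgrade the conditional (given teacher data) convergence of $\bfu^\top\bfR_s\bfv$ and $\bfu^\top\bfR_s^2\bfv$ to the full limit when $\bfu,\bfv$ are themselves random but independent of $\tilde\bfX_s$, with almost surely convergent norms and inner products, and verify that $\|\hat\vbeta_t\|_2$ is bounded with high probability so the equivalents apply uniformly. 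The remaining teacher-side concentration (Hanson--Wright for the noise quadratic forms and trace convergence for the resolvents) is standard random-matrix machinery.
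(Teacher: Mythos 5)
Your proposal is correct and reaches the stated formula, but it takes a genuinely different route from the paper. The paper substitutes the teacher's closed form into the student's, obtaining $\hat\vbeta_s - \vbeta_\star = (\tilde\bfR\tilde\bSigma\hat\bfR\hat\bSigma - \bfI)\vbeta_\star + n_t^{-1}\tilde\bfR\tilde\bSigma\hat\bfR\bfX^\top\boldsymbol{\ep}$, reduces $\mathcal{L}_s$ to normalized traces of products of the two resolvents (e.g.\ $\dx^{-1}\trace[\hat\bfR^2\tilde\bfR^2]$), and factorizes those mixed traces via asymptotic freeness of independent Wishart matrices (plus Stone--Weierstrass), finally subtracting $\mathcal{L}_t$. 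You instead exploit the exact relation $\hat\vbeta_s = (\bfI - \lambda_s\tilde\bfR)\hat\vbeta_t$ to get the identity $\mathcal{L}_s - \mathcal{L}_t = -2\lambda_s(\hat\vbeta_t - \vbeta_\star)^\top\tilde\bfR\hat\vbeta_t + \lambda_s^2\hat\vbeta_t^\top\tilde\bfR^2\hat\vbeta_t$, so the teacher error cancels before any limit is taken; then, conditioning on the teacher-side randomness, you only need the isotropic deterministic equivalents $\bfu^\top\tilde\bfR\bfv \to_\prob m_{s,1}\bfu^\top\bfv$ and $\bfu^\top\tilde\bfR^2\bfv \to_\prob m_{s,2}\bfu^\top\bfv$ for vectors independent of $\tilde\bfX$, which follow from rotational invariance ($\Ex[\tilde\bfR]$ is a multiple of the identity) and bilinear-form concentration rather than free probability. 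This buys a more elementary argument and a cleaner separation of student- and teacher-side limits, with all teacher dependence carried by the two scalars $\|\hat\vbeta_t\|_2^2$ and $\vbeta_\star^\top\hat\vbeta_t$, whose limits you compute correctly and whose combination you verify against Proposition~\ref{thm:ridge-ridge-weak-error}; your final regrouping identity checks out and reproduces $\Delta$ exactly. The paper's freeness-based trace factorization is the more general tool (it would extend to settings where the relevant functionals are not bilinear forms against vectors independent of the student data), whereas your route leans on the isotropy of the covariates; as you note, the one step requiring care is upgrading the conditional deterministic-equivalent statements to hold for the random $\hat\vbeta_t,\vbeta_\star$ with norms converging in probability, which is standard.
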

This theorem fully characterizes the limit of the test error of the student model in the high dimensional proportional limit.  The formulas for the limiting errors derived in this theorem can be used to make numerical predictions for $\mathcal{L}_s - \mathcal{L}_t$. Figure~\ref{fig:ridge-ridge} shows an example, supporting that the theoretical
predictions of Theorem~\ref{thm:L_s-L_t} match very well with simulations even for moderately large $d, n_s, n_t$. See Section~\ref{sec:numerical} for more details on the experimental settings.  We use this theorem to study the test error of the models as a function of overparamterization in Section~\ref{sec:dd}. 

\begin{figure}
    \centering
    \includegraphics[width=0.9\linewidth]{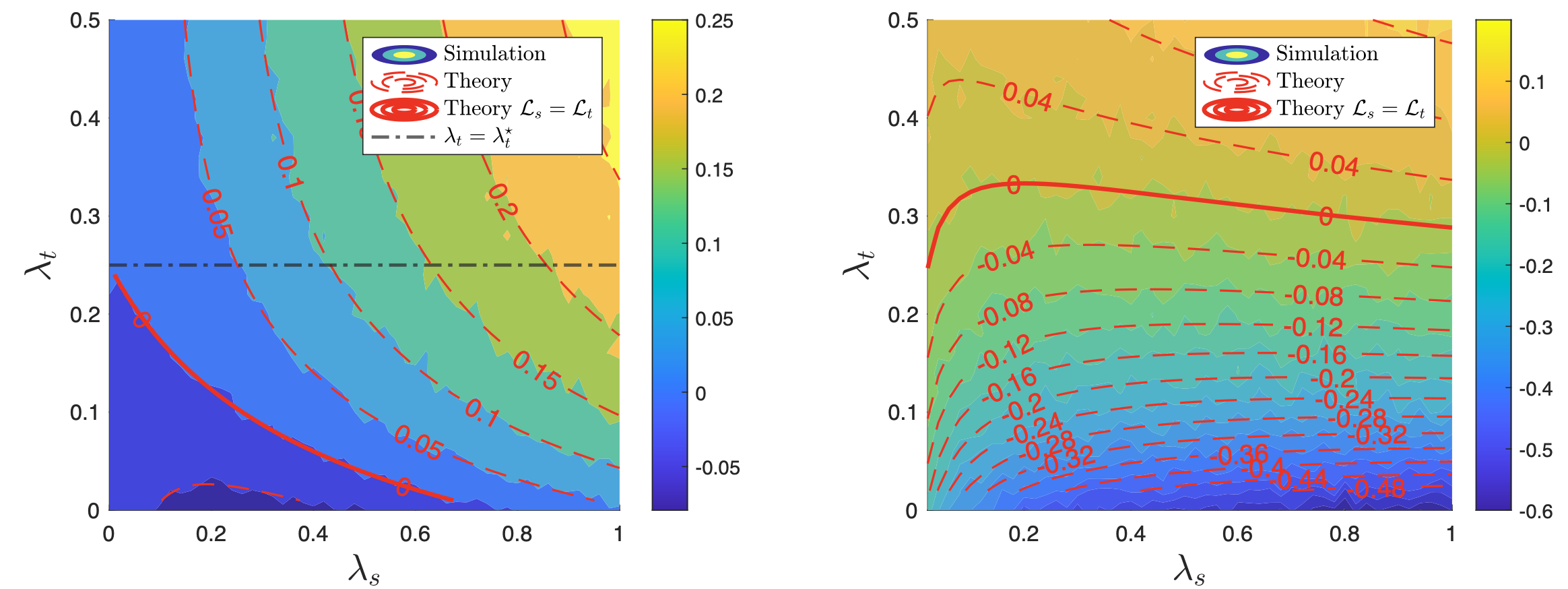}    
    \caption{Test-error difference $\mathcal{L}_s-\mathcal{L}_t$ as a function of $(\lambda_t,\lambda_s)$ in the setting of Section~\ref{sec:ridgeridge}. Filled contours are numerical simulations, and the dashed red contours follow the expressions of Theorem \ref{thm:ridge-ridge-strong-error}. The solid curve marks $\mathcal{L}_s=\mathcal{L}_t$, and the dashed black curve is $\lambda_t = \lambda_t^\star$. \textbf{Left}: under-parameterized student. \textbf{Right}: over-parameterized student.  See Section \ref{sec:numerical} for more details.}
    \label{fig:ridge-ridge}
\end{figure}

In the next theorem, we use the formula for the limiting value of $\mathcal{L}_s - \mathcal{L}_t$ from Theorem~\ref{thm:L_s-L_t} to study the conditions on $\gamma_s, \gamma_t, \lambda_s, \lambda_t, \sigma_\ep^2$ under which the student model outperforms the teacher; i.e., the conditions of weak-to-strong generalization.

\begin{theorem}
    \label{thm:W2S-RidgeRidge}
    Under the conditions of Theorem~ \ref{thm:ridge-ridge-strong-error}, the (limiting) test errors of the student and teacher models satisfy the following:
    \begin{itemize}
    \item If $\lambda_t \geq  \sigma_\ep^2 \gamma_t$, we have $\mathcal{L}_s \geq \mathcal{L}_t$.
        
    \item If $\lambda_t <  \sigma_\ep^2 \gamma_t$, two cases can happen:
    \begin{itemize}
        \item If $0<\gamma_s<1$, there exists $\bar\lambda\geq 0$ such that $\mathcal{L}_s < \mathcal{L}_t$ for all $\lambda_s \in (0, \bar\lambda)$.
        \item If $\gamma_s>1$ and if the parameters $\gamma_t, \gamma_s, \lambda_t, \sigma_\ep$ satisfy
        \begin{align}
        \label{eq:condition}
            \frac{\lambda_t - \gamma_t \sigma_\ep^2}{\sqrt{(1+\gamma_t + \lambda_t)^2 - 4 \gamma_t}} > \frac{1}{1- 4\gamma_s - 4 \sqrt{\gamma_s^2-\gamma_s} },
        \end{align}
        then there exists $\bar\lambda_-, \bar\lambda_+ \geq 0$ such that $\mathcal{L}_s < \mathcal{L}_t$ for all $\lambda_s \in (\bar\lambda_-, \bar\lambda_+)$. Moreover, if \eqref{eq:condition} does not hold, we have $\mathcal{L}_s \geq \mathcal{L}_t$.
    \end{itemize}
        
    \end{itemize}
\end{theorem}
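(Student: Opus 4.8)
The plan is to start from the closed form for $\Delta = \mathcal{L}_s - \mathcal{L}_t$ in Theorem~\ref{thm:ridge-ridge-strong-error} and reduce the sign of $\Delta$ to a comparison between a single teacher scalar and a student-only function of $\lambda_s$. Write $A = \sigma_\ep^2\gamma_t - \lambda_t$, $B = m_{t,1} - \lambda_t m_{t,2}$, $C = \lambda_s^2 m_{s,2} - 2\lambda_s m_{s,1}$, $D = \lambda_s^2 m_{s,2}$ and $E = 1 - \lambda_t m_{t,1}$, so that $\Delta = ABC + DE$. The first task is to record the signs of these quantities from the integral representations $m_{p,1} = \int (s+\lambda_p)^{-1}\,d\mu_{{\rm MP}(\gamma_p)}(s)$ and $m_{p,2} = \int (s+\lambda_p)^{-2}\,d\mu$: this gives $B = \int s(s+\lambda_p)^{-2}\,d\mu \ge 0$, $E = 1 - \lambda_t m_{t,1} \in (0,1)$ (a limiting normalized trace), $D \ge 0$, and $C = -\lambda_s\int (2s+\lambda_s)(s+\lambda_s)^{-2}\,d\mu \le 0$ for $\lambda_s>0$.

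The key simplification is an identity that collapses all teacher dependence into the single ratio $A/R_t$, with $R_t = \sqrt{(1+\gamma_t+\lambda_t)^2 - 4\gamma_t}$. Differentiating the self-consistent Marchenko--Pastur relation $\gamma_p\lambda_p m_{p,1}^2 + (1-\gamma_p+\lambda_p)m_{p,1} = 1$ in $\lambda_p$ yields $m_{p,2} = m_{p,1}(\gamma_p m_{p,1}+1)/R_p$, and substituting this into $B$ and $E$ shows $B = E/R_t$. Hence $\Delta = E\big(AC/R_t + D\big)$, and since $E>0$ the sign of $\Delta$ equals the sign of $AC/R_t + D$. Case~1 ($\lambda_t \ge \sigma_\ep^2\gamma_t$, i.e.\ $A \le 0$) is then immediate: $AC \ge 0$ and $D \ge 0$ force $AC/R_t + D \ge 0$, so $\mathcal{L}_s \ge \mathcal{L}_t$. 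For Case~2 ($A>0$), dividing by $\lambda_s>0$ reduces $\Delta < 0$ to the clean inequality $\psi(\lambda_s) < A/R_t$, where $\psi(\lambda_s) := \lambda_s m_{s,2}/(2m_{s,1} - \lambda_s m_{s,2})$ is strictly positive (both numerator and denominator are positive by the sign computations above) and depends only on the student.

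Weak-to-strong generalization in Case~2 therefore occurs exactly on the open set $\{\lambda_s : \psi(\lambda_s) < A/R_t\}$, and the split on the student parameterization is driven by the behavior of $\psi$ as $\lambda_s \to 0^+$. When $0<\gamma_s<1$ the law has no atom at the origin, so $m_{s,1},m_{s,2}$ stay finite and $\lambda_s m_{s,2} \to 0$, giving $\psi(0^+)=0 < A/R_t$; by continuity $\mathcal{L}_s < \mathcal{L}_t$ on an interval $(0,\bar\lambda)$, which is the first sub-case. When $\gamma_s>1$ the law carries an atom of mass $1-\gamma_s^{-1}$ at $0$, so $m_{s,1}\sim(1-\gamma_s^{-1})/\lambda_s$ and $\psi(0^+)=1$ (and likewise $\psi\to 1$ as $\lambda_s\to\infty$); the decisive quantity is the interior minimum of $\psi$. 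I would compute it by maximizing $\rho(\lambda_s) = m_{s,1}/(\lambda_s m_{s,2}) = R_s/\big(\lambda_s(\gamma_s m_{s,1}+1)\big)$; reparameterizing by $m_{s,1}$ (equivalently by $t = (\gamma_s-1)m_{s,1}$) turns $\rho$ into an explicit rational function with a unique interior stationary point, giving $\psi_{\min}$ as an explicit function of $\gamma_s$ alone. Since $\psi$ is unimodal in $\lambda_s$, the relevant sublevel set is a single interval $(\bar\lambda_-,\bar\lambda_+)$ precisely when $A/R_t > \psi_{\min}$, and is empty otherwise; rewriting $A/R_t > \psi_{\min}$ in terms of $\gamma_t,\lambda_t,\sigma_\ep,\gamma_s$ yields the threshold~\eqref{eq:condition}.

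The main obstacle is the $\gamma_s>1$ optimization: one must (i) justify the boundary asymptotics of $m_{s,1},m_{s,2}$ at $\lambda_s\to 0^+$ coming from the atom at the origin, (ii) establish unimodality of $\psi$ (equivalently, that $\rho$ has a single interior maximizer) so that the sublevel set is a genuine interval, and (iii) carry out the rational-function optimization and simplify $\psi_{\min}$ into the stated form. The teacher side, by contrast, is entirely subsumed by the identity $B=E/R_t$, which is the step that makes the comparison tractable; verifying that identity carefully---including the sign of $R_p$ and the branch of the square root in Definition~\ref{def:m_Def}---is the other place where care is needed.
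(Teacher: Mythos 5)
Your proposal is correct and follows essentially the same route as the paper: the identity collapsing the teacher-side factor to $(\sigma_\ep^2\gamma_t-\lambda_t)/\sqrt{(1+\gamma_t+\lambda_t)^2-4\gamma_t}$ and the reduction of $\mathcal{L}_s<\mathcal{L}_t$ to a level-set condition on a student-only function of $\lambda_s$ is precisely the paper's separation of $\Delta=0$ into the teacher term $H_2$ and the student term $H_1$, and your $\gamma_s \lessgtr 1$ dichotomy via the boundary behavior of $\psi$ as $\lambda_s\to 0^+$ (no atom versus an atom of mass $1-\gamma_s^{-1}$ at the origin) is the same mechanism the paper invokes. The one step you defer---unimodality of $\psi$ and the explicit value of its interior extremum when $\gamma_s>1$---is exactly the step the paper carries out by solving $H_1(\lambda_s)=c$ as an explicit quadratic in $\lambda_s$ and reading the threshold off the discriminant, which is an equivalent way to finish your rational-function optimization.
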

Note that under the setting of this section, the optimal ridge regularization parameter for the weak model is known to be equal to $\lambda_t^\star = \sigma_\ep^2 \gamma_t$ \citep[Theorem 2.1]{dobriban2018high}. Theorem~\ref{thm:W2S-RidgeRidge} states that if the teacher is \textit{over-regularized} ($\lambda_t \geq \lambda_t^\star)$, the student can never outperform it. In the case that the teacher is \textit{under-regularized} ($\lambda_t < \lambda_t^\star)$, the parameterization regime of the student model $\gamma_s$ plays a key role. In particular, if $\gamma_s<1$ (i.e., the student is \textit{under-parameterized}), the student model can outperform the teacher by further regularization as long as $0<\lambda_s < \bar{\lambda}$. However, if $\gamma_s>1$ (i.e., the student in \textit{over-parameterized}), as long as \eqref{eq:condition} holds, $\lambda_s$ should be larger that a certain threshold  for it to outperform the teacher. Otherwise, the student will always have a worse performance compared to the teacher. 

The phase transitions predicted in Theorem~\ref{thm:W2S-RidgeRidge} can be seen in Figure~\ref{fig:ridge-ridge}, where for each $(\lambda_t, \lambda_s)$ pair, we plot the contours of $\mathcal{L}_s - \mathcal{L}_t$ for a given $\gamma_s, \gamma_t, \sigma_\ep$. In these plots, the solid red curves show the pairs $(\lambda_w, \lambda_s)$ for which $\mathcal{L}_s = \mathcal{L}_t$. The left plot corresponds to the case where $\gamma_s < 1$. It can be seen that when $\lambda_t < \lambda_t^\star$, the student model outperforms the teacher as long $\lambda_s < \bar\lambda(\lambda_t;\gamma_s, \gamma_t, \sigma_\ep)$.  Moreover, the student is always worse than the teacher when $\lambda_t > \lambda_t^\star$. The right plot corresponds to the case with $\gamma_s>1$. In this case, it is seen that as predicted in Theorem~\ref{thm:W2S-RidgeRidge}, for some values of $\lambda_t$, the student outperforms the teacher only if $\lambda_s \in (\bar\lambda_-, \bar\lambda_+)$ for some $0<\bar\lambda_-<\bar\lambda_+$. See Section~\ref{sec:numerical} for more details on the experimental setting.
 
\paragraph{Mechanism of Weak-to-Strong Generalization.} In this section, we show that the student’s reduced error stems from \textit{compensating for the teacher’s insufficient regularization}. Thus, intuitively, similar to what is proven in Theorem~\ref{thm:W2S-RidgeRidge}, when the teacher is already over-regularized, the student is unable to achieve a better performance by leveraging this mechanism. Also, note that when $\gamma_s >1$, the  student model is over-parameterized and as a result, some information is lost. Thus, more regularization is required for the student to outperform the teacher. This can be seen as the reason why in this regime, a non-zero lower bound exists for $\lambda_s$ to ensure this. Our results complement the results of \cite{medvedev2025weak} who demonstrated that regularizing the student is essential to avoid overfitting to the mistakes in a setting where the teacher is optimally trained.

\paragraph{Other Related Work.}  The high-dimensional ridge regression setting considered in this section is related to the linear regression setting considered by \cite{dohmatobmodel} to study model collapse. However, in their setting, only the downstream model (which corresponds to the student model in our setting) has a non-zero ridge regularization. Similar settings have also been studied in the self-distillation literature (see e.g., \cite{das2023understanding,pareek2024understanding}, etc.). However, the training procedure of the models are different; e.g. in their setting, the teacher generates synthetic labels for its own training set and not for a fresh set of covariates. Additionally, in the self distillation setting, the student model still has access to ground truth labels.


\subsection{Setting 2: High-Dimensional Weighted Ridge Regression}
\label{sec:gen_ridge}

\begin{figure}
    
    \centering
    \includegraphics[width=0.9\linewidth]{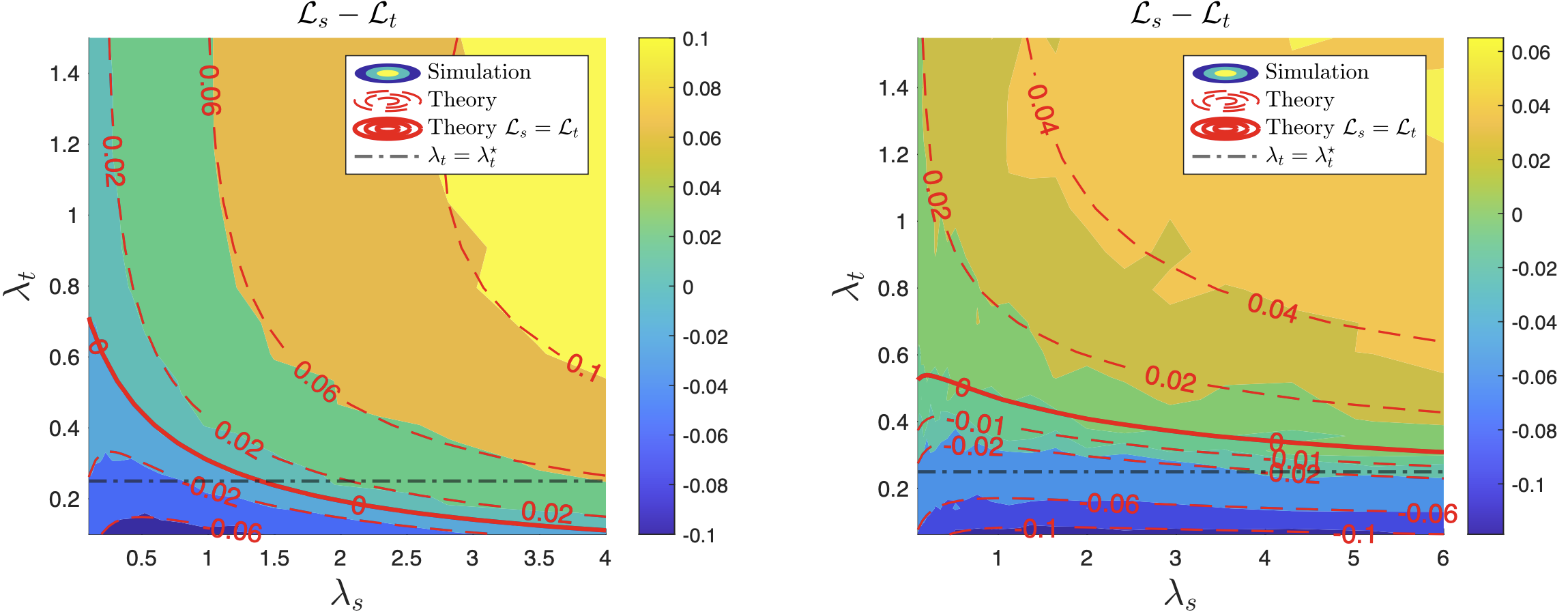}    
    \caption{Test-error difference $\mathcal{L}_s-\mathcal{L}_t$ as a function of $(\lambda_t,\lambda_s)$ in the setting of Section~\ref{sec:gen_ridge}. Filled contours are numerical simulations; dashed red contours follow the theory of Theorem \ref{thm:general_ridge}. The solid curve marks $\mathcal{L}_s=\mathcal{L}_t$, and the dashed black curve is $\lambda_t = \lambda_t^\star$. \textbf{Left}: under-parameterized student. \textbf{Right}: over-parameterized student.  See Section \ref{sec:numerical} for more details.}
    \label{fig:ridge-genridge}
\end{figure}

In this section, we consider a setting where the strong model is a linear model trained using weighted ridge regression \citep{hoerl1970ridge,casella1980minimax,wu2020optimal,richards2021asymptotics}. Given samples  ${\mathcal{S}} \subset \R^{\dx} \times \R$, the estimator $\mathrm{WRidge}({\mathcal{S}}, \lambda, \bfGamma)$ is defined as
\begin{align}
    \label{eq:gen_ridge}
    \mathrm{WRidge}({\mathcal{S}}, \lambda, \bfGamma) := \argmin_{\vbeta \in \R^{\dx}} \bracket{\frac{1}{n}\sum_{(\bfx_i, y_i) \in {\mathcal{S}} }\parant{y_i - \vbeta^\top \bfx_i}^2+ \lambda \|\bfGamma^{-1}\vbeta\|_2^2},
\end{align}
where $\bfGamma \in \R^{\dx\times \dx}$ is a weighting matrix and $\lambda\in \R$ is a scalar.  In this section, we assume that the teacher is still a (standard) ridge regression estimator. However, unlike the previous section, we let the student model be a weighted ridge estimator; i.e.,
\begin{align}
    \label{eq:setting2}
    &\hat\vbeta_t = \mathrm{WRidge}(\mathcal{S}_t, \lambda_t, \bfI_{\dx}), \quad \hat\vbeta_s = \mathrm{WRidge}(\mathcal{S}_s, \lambda_s, \bfGamma).
\end{align}

In this model, the matrix $\bfGamma$ is assumed to be given and fixed. The matrix $\bfGamma$ determines the structure of the student regularization enforcing different levels of regularization in different directions. In the next remark, we provide a linear neural-network interpretation for $\bfGamma$. 
\begin{remark}
    \label{remark:feature}
    The weighted ridge estimator $\mathrm{WRidge}(\mathcal{S}, \lambda, \bfGamma)$ can also be seen as training the second layer of a two-layer linear  neural network $f_{\rm NN}(\bfx) = \bfx^\top \bfGamma\valpha$; i.e., $\hat\vbeta= \bfGamma\hat\valpha$ in which 
    \begin{align*}
        \hat\valpha = \argmin_{\valpha \in \R^{\dx}} \bracket{\frac{1}{n}\sum_{(\bfx_i, y_i) \in \mathcal{S} }\parant{y_i - \bfx_i^\top \bfGamma\valpha}^2+ \lambda \|\valpha\|_2^2}.
    \end{align*}
\end{remark}

In light of the connection to linear neural networks in Remark~\ref{remark:feature}, one can think of the student model as a pre-trained neural network.  During the pre-training, we assume that the student model has had access to data from various sources with a shared structure with $\vbeta_\star$. The goal of the pre-training is to use this data to learn features that align well with the underlying task $\vbeta_\star$ \citep{sun2021towards}. Motivated by a recent line of results in deep learning theory  where the updated first layer weights are shown to have a \textit{spiked structure} with a few directions having information about the target function \citep{ba2022high,moniri_atheory2023,cuiasymptotics,zhang2025concurrence,ba2024learning,demir2024random,moniri2024signal,li2024generalization,mousavi2023gradient, radhakrishnan2024mechanism_RFM}, we model the alignment of $\bfGamma$ with the task structure using a non-informative bulk component plus an informative low-rank component.

\begin{assumption}  
    \label{assump:spiked_assumption}
    We assume that the matrix $\bfGamma \in \R^{\dx \times \dx}$ is given by
    \begin{align}
    \label{spiked_gamma}
    \bfGamma =  \bfI_{\dx} +  \dx\,\hat\vbeta\hat\vbeta^\top \quad \text{with} \quad {|\hat\vbeta^\top \vbeta_\star|}/({\|\hat\vbeta\|_2\,\|\vbeta_\star\|_2}) \to_{\prob} \zeta
    \end{align}
    where $\zeta \in [0,1]$ is the correlation of the learned direction $\hat\vbeta$ to the target direction $\vbeta_\star$ which is a measure of how much $\hat\vbeta$ aligns with the target direction $\vbeta_\star$.
\end{assumption}
 The prefactor $\dx$ for the spike term in \eqref{spiked_gamma} is chosen in a way to ensure that $\|\bfI\|_{\rm Fr} \asymp \|\dx\, \hat\vbeta \hat\vbeta^\top\|_{\rm Fr}$. This closely resembles the scaling of the updated weights with maximal update parameterization in the feature learning theory literature \citep{yang2021tensor,ba2022high}.
In the following theorem, we characterize the test error difference of the models $\mathcal{L}_s - \mathcal{L}_t$ for this setting in the high-dimensional proportional regime of Assumption~\ref{asump:high-dim}.

\begin{theorem}
    \label{thm:general_ridge}
    Under the conditions of Proposition~\ref{thm:ridge-ridge-weak-error}, the test errors of the student and teacher model from \eqref{eq:setting2} with $\bfGamma$ from Assumption~\ref{assump:spiked_assumption} satisfy $\mathcal{L}_s - \mathcal{L}_t \to_{\prob} \Delta - \zeta^2 \Delta_{\bfGamma}$
    where the expression for $\Delta$ is given in Theorem~\ref{thm:ridge-ridge-strong-error}, and
    \begin{align*}
        \Delta_{\bfGamma} := \lambda_s \parant{-1 + \lambda_t m_{t,1}} \Big[- 2 \lambda_t m_{s,1} m_{t,1} + \lambda_s m_{s,2} (-1 + \lambda_t m_{t,1})\Big].
    \end{align*}
    where $m_{t,1},m_{t,2},m_{s,1}, m_{s,2}$ are defined in Definition~\ref{def:m_Def}. Additionally, we have $\Delta_{\bfGamma}\geq 0$.
\end{theorem}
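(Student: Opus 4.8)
The plan is to realize the weighted-ridge student of \eqref{eq:setting2} as a rank-one perturbation of the \emph{standard}-ridge student of Section~\ref{sec:ridgeridge}, so that the Setting~1 computation is reused to produce $\Delta$ and only the perturbation needs fresh analysis. Writing $\widehat{\bSigma}_s := \tfrac{1}{n_s}\sum_{\tilde\bfx_i \in \mathcal{S}_s}\tilde\bfx_i\tilde\bfx_i^\top$, the first-order condition for \eqref{eq:setting2} gives $\hat\vbeta_s = (\widehat{\bSigma}_s + \lambda_s\bfGamma^{-2})^{-1}\widehat{\bSigma}_s\,\hat\vbeta_t$. Setting $\bfv := \hat\vbeta/\|\hat\vbeta\|_2$, Assumption~\ref{assump:spiked_assumption} yields $\bfGamma^{-2} = \bfI - c_{\dx}\,\bfv\bfv^\top$ with $c_{\dx} = 1 - (1+\dx\|\hat\vbeta\|_2^2)^{-2}\to 1$, so $\lambda_s\bfGamma^{-2}$ is a rank-one \emph{downward} perturbation of $\lambda_s\bfI$ that, in the limit, removes regularization along $\bfv$. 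Applying Sherman--Morrison with $\bfR_s := (\widehat{\bSigma}_s + \lambda_s\bfI)^{-1}$ (the Setting~1 resolvent),
\[
\hat\vbeta_s = \underbrace{\bfR_s\widehat{\bSigma}_s\hat\vbeta_t}_{=:\,\bfu_0} + \frac{\lambda_s c_{\dx}\,q}{D}\,\bfR_s\bfv, \qquad q := \bfv^\top\bfR_s\widehat{\bSigma}_s\hat\vbeta_t,\quad D := 1 - \lambda_s c_{\dx}\,\bfv^\top\bfR_s\bfv,
\]
where $\bfu_0$ is exactly the Setting~1 student. Expanding $\|\hat\vbeta_s - \vbeta_\star\|_2^2 = \|\bfu_0 - \vbeta_\star\|_2^2 + 2\tfrac{\lambda_s c_{\dx}q}{D}\langle \bfR_s\bfv,\,\bfu_0-\vbeta_\star\rangle + \big(\tfrac{\lambda_s c_{\dx}q}{D}\big)^2\|\bfR_s\bfv\|_2^2$ isolates $\|\bfu_0-\vbeta_\star\|_2^2$, whose limit (plus $\sigma_\ep^2$, minus $\mathcal{L}_t$) is precisely $\Delta$ by Theorem~\ref{thm:ridge-ridge-strong-error}.

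Second, I would evaluate the correction via deterministic equivalents. Because $\widehat{\bSigma}_s$ is rotationally invariant and $\bfv,\vbeta_\star,\hat\vbeta_t$ are all independent of the student data, quadratic forms in $\bfR_s$ concentrate on normalized traces: $\bfv^\top\bfR_s\bfv\to m_{s,1}$, $\bfv^\top\bfR_s^2\bfv\to m_{s,2}$, and for any fixed $\bfw$ independent of the student data $\bfv^\top\bfR_s\bfw\to m_{s,1}\langle\bfv,\bfw\rangle$ and $\bfv^\top\bfR_s^2\bfw\to m_{s,2}\langle\bfv,\bfw\rangle$. Using $\bfR_s\widehat{\bSigma}_s = \bfI-\lambda_s\bfR_s$, these give $D\to 1-\lambda_s m_{s,1}$ and $q\to(1-\lambda_s m_{s,1})\langle\bfv,\hat\vbeta_t\rangle$, hence the clean simplification $q/D\to\langle\bfv,\hat\vbeta_t\rangle$. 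The only teacher-dependent inputs are the scalars $\langle\vbeta_\star,\bfv\rangle$ and $\langle\hat\vbeta_t,\bfv\rangle$: the former is $\zeta\|\vbeta_\star\|_2\to\zeta$ by \eqref{spiked_gamma} (with $\|\vbeta_\star\|_2\to1$), and decomposing the teacher as $\hat\vbeta_t = (\widehat{\bSigma}_t+\lambda_t\bfI)^{-1}(\widehat{\bSigma}_t\vbeta_\star + \tfrac1{n_t}\bfX^\top\bveps)$ (with $\bfX$ the teacher design) and using $(\widehat{\bSigma}_t+\lambda_t\bfI)^{-1}\widehat{\bSigma}_t = \bfI-\lambda_t\bfR_t$ gives $\langle\hat\vbeta_t,\bfv\rangle\to(1-\lambda_t m_{t,1})\langle\vbeta_\star,\bfv\rangle\to\zeta(1-\lambda_t m_{t,1})$ (the noise term vanishes). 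Thus every coupling of $\bfv$ to the target carries one factor of $\zeta$, and since the correction is quadratic in these couplings it acquires the overall $\zeta^2$ prefactor. Substituting $\langle\bfR_s\bfv,\bfu_0\rangle\to(m_{s,1}-\lambda_s m_{s,2})\langle\bfv,\hat\vbeta_t\rangle$, $\langle\bfR_s\bfv,\vbeta_\star\rangle\to m_{s,1}\langle\bfv,\vbeta_\star\rangle$, and $\|\bfR_s\bfv\|_2^2\to m_{s,2}$, then collecting terms and using $1-(1-\lambda_t m_{t,1}) = \lambda_t m_{t,1}$, yields exactly $-\zeta^2\Delta_{\bfGamma}$.

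The main obstacle is this deterministic-equivalent bookkeeping, delicate precisely because $\bfv$ and $\vbeta_\star$ are \emph{correlated} (with correlation $\zeta$): one cannot treat $\bfv$ as a generic vector and must carry the joint equivalent tracking the $\{\bfv,\vbeta_\star,\hat\vbeta_t\}$ alignments, including the teacher-side alignment $\langle\hat\vbeta_t,\bfv\rangle$ above. One must also verify that replacing $c_{\dx}$ by its limit $1$ is harmless, which requires $\dx\|\hat\vbeta\|_2^2\to\infty$ (so the spike dominates, consistent with the scaling remark after \eqref{spiked_gamma}) and that $D = 1-\lambda_s c_{\dx}\bfv^\top\bfR_s\bfv$ stays bounded away from $0$.

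Finally, the sign claim $\Delta_{\bfGamma}\ge 0$ (for $\lambda_s,\lambda_t\ge0$) is elementary. Put $a := 1-\lambda_t m_{t,1}$; then $a\ge0$ because $\lambda\,m(\lambda;\gamma) = \int \tfrac{\lambda}{s+\lambda}\,\mathrm{d}\mu_{\mathrm{MP}(\gamma)}(s)\le 1$. Moreover $m_{s,1},m_{t,1}>0$ (Stieltjes transforms of positive measures at positive argument) and $m_{s,2} = -\partial_\lambda m(\lambda;\gamma)\big|_{\lambda_s}>0$ since $m(\lambda;\gamma)$ is strictly decreasing in $\lambda$. Substituting $-1+\lambda_t m_{t,1} = -a$ into $\Delta_{\bfGamma}$ gives $\Delta_{\bfGamma} = \lambda_s\,a\,\big[\,2\lambda_t m_{s,1}m_{t,1} + \lambda_s\,a\,m_{s,2}\,\big]$, a sum and product of non-negative quantities, hence $\Delta_{\bfGamma}\ge0$.
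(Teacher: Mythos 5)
Your proposal is correct and follows essentially the same route as the paper: a Sherman--Morrison reduction of the weighted resolvent to the standard ridge resolvent plus a rank-one correction, isolation of the Setting-1 term yielding $\Delta$, and trace/concentration asymptotics for the correction in which each coupling of the spike direction to $\vbeta_\star$ contributes a factor of $\zeta$. Two minor remarks: you correctly use $\bfGamma^{-2}$ in the stationarity condition (the paper's proof writes $\bfGamma^{-1}$, but both equal $\bfI - \hat\vbeta\hat\vbeta^\top + o(1)$ for the unit-norm spike, so the limit is unaffected), and your explicit verification that $\Delta_{\bfGamma}\ge 0$ via $1-\lambda_t m_{t,1}\ge 0$ and $m_{s,1},m_{s,2}>0$ is a welcome addition, since the paper asserts this inequality without proof.
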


In the limiting formula for $\mathcal{L}_s - \mathcal{L}_t$, the term $\Delta$ is equal to the limiting value $\mathcal{L}_s - \mathcal{L}_t$ in the setting of Section~\ref{sec:ridgeridge} where both models are trained using (standard) ridge regression, and the benefit of the learned features for the student is due to the term $-\zeta^2 \Delta_{\bfGamma}$, which is always non-positive. Because of this term, even in the settings that $\Delta\geq 0$ (i.e., the mechanism of Section~\ref{sec:ridgeridge} is not enough on its own for weak-to-strong generalization), the student may still outperform the teacher.

Figures~\ref{fig:ridge-genridge} and \ref{fig:different_zeta} demonstrate that the asymptotic characterization of Theorem~\ref{thm:general_ridge} match simulations very well even for moderately large $n_s, n_t, \dx$. In Figure~\ref{fig:ridge-genridge}, we fix $\gamma_s, \gamma_t$ and $\sigma_\ep^2$ and plot the contours of $\mathcal{L}_s - \mathcal{L}_t$ for different $(\lambda_t, \lambda_s)$ pairs. We show that unlike Section~\ref{sec:ridgeridge}, in both settings with $\gamma_s>1$ or $\gamma_s<1$, pairs  $(\lambda_t, \lambda_s)$ with $\lambda_t > \lambda_t^\star = \sigma_\ep^2 \gamma_t$ (i.e., over-regularized teacher) exist where the student model outperforms the teacher.  In Figure~\ref{fig:different_zeta}, we set $\lambda_t = \lambda_t^\star$ and plot $\mathcal{L}_s$ as a function of $\lambda_s$ for different values of the feature quality parameter $\zeta$. We see that for small $\zeta$, the student never outperforms the teacher, similar to the case in Section~\ref{sec:ridgeridge}. However, this changes when $\zeta$ is increased, and the student can have a smaller test error for some values of $\lambda_s$.

\paragraph{Mechanism of Weak-to-Strong Generalization.} 
Theorem~\ref{thm:general_ridge} shows that if the student model has been pre-trained and has learned features  that are better suited for the task of predicting the target function (or equivalently is fine-tuned using a better regularization structure), it can leverage this advantage to achieve a better performance compared to the teacher, despite being trained on labels generated by the teacher. This shows yet another mechanism of weak-to-strong generalization.

\begin{figure}
  \centering
  \includegraphics[width=0.4\textwidth]{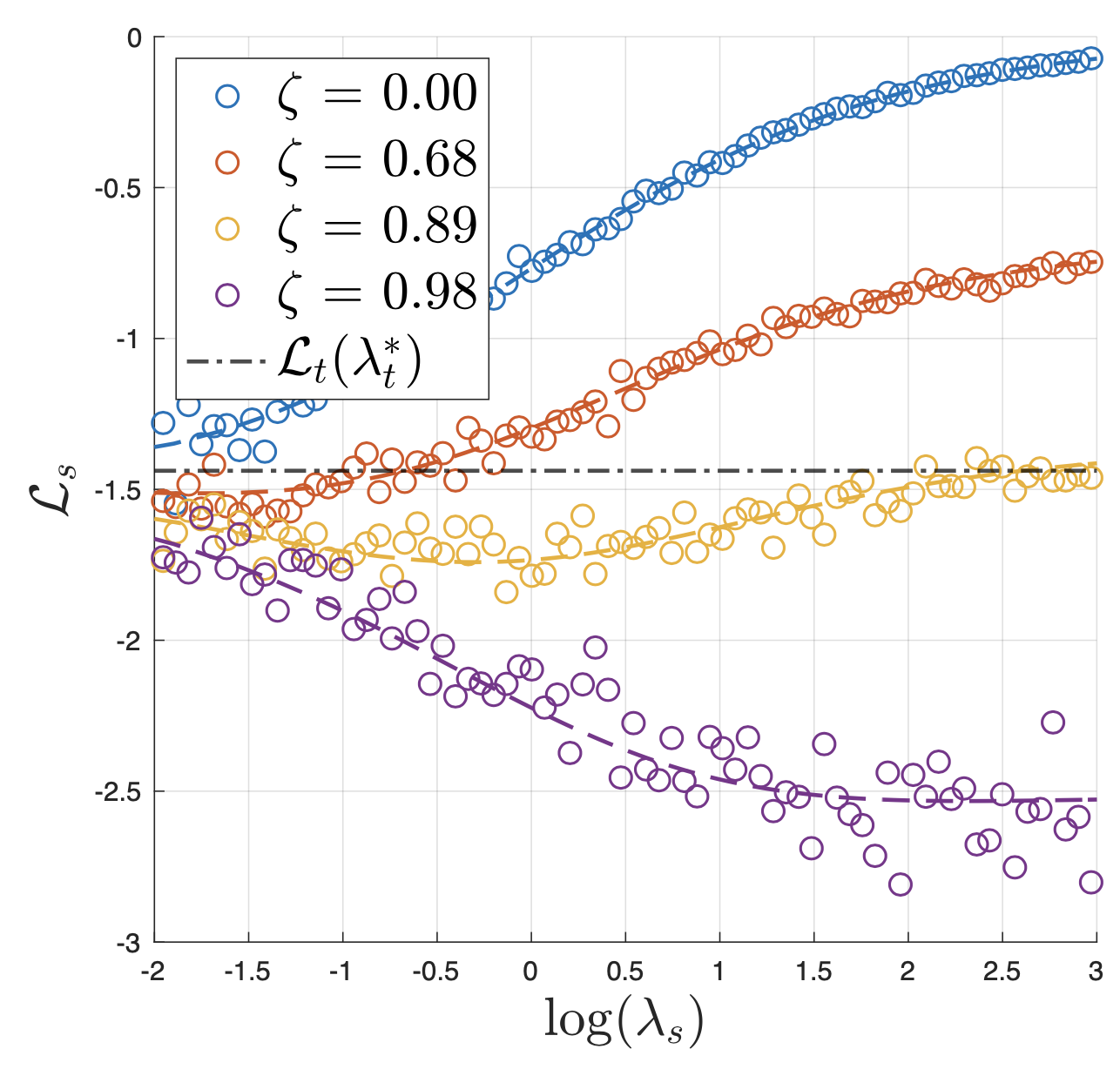}%
  \caption{Student error $\mathcal{L}_s$ versus $\log \lambda_s$ in the setting of Section \ref{sec:gen_ridge}, plotted for several values of $\zeta$ with the teacher optimally regularized. Circles show simulation results, and dashed curves are the predictions of Theorem \ref{thm:general_ridge}. The dashed black line marks the teacher error $\mathcal{L}_t$. See Section \ref{sec:numerical} for more details.}
  \label{fig:different_zeta}
  
\end{figure}

\section{The Nonlinear Case}
\label{sec:feature-transfer}

In Section \ref{sec:fixed-features}, we considered a setting where both the student and the teacher model trained a linear head through a convex objective. Although this is a very effective model for the analysis of the roles of regularization and overparameterization, it coincides to the linearized regime of neural network training where features are frozen at their initialization. As a result, the linearized models are not rich enough to study phenomena that happen as a result of (nonlinear) \textit{feature learning}.

Consider the problem of learning a few distinct skills and abilities using data from a compositional task, where a blend of different skills are needed in order to succeed. We model each skill as a vector in the high-dimensional input space $\R^{\dx}$, and learning a skill as learning the corresponding direction. Take, as a running example the task of holding a coherent conversation in an unfamiliar language. To succeed, the model needs to blend abilities such as logical reasoning, with language-specific skills. Abilities such as logical reasoning are difficult to acquire, but transferable across domains. However, language-specific abilities such as vocabulary and grammar, are conceptually straightforward, but they are task-specific and often cannot be learned from other related tasks. 

Motivated by this setting, in our model, we let the samples for the teacher model be generated according to a multi-index function (e.g., \citet{box1964analysis}, \citet{bickel1981analysis}) with an easy and a hard component. Here, the teacher has access to $\mathcal{S}_t = \{(\bfx_i, y_i)\}_{i = 1}^{n_t}$ drawn from
\begin{align}
    \label{eq:datagen_multi_index}
    \bfx_i \sim \normal(\mathbf{0}, \bfI_{\dx}), \quad\text{and}\quad y_i = \sigma_e (\bfx^\top \vbeta_{e}) + \sigma_h (\bfx^\top \vbeta_{h}),
\end{align}
where $\vbeta_e, \vbeta_h \in \R^{\dx}$ are two orthonormal directions that we want to learn,  and $\sigma_e, \sigma_h: \R \to \R$ are two link functions. In this problem, the hardness of learning each direction $\vbeta_e, \vbeta_h$ from these samples is known to be characterized by the \textit{information-exponent} of their link function \citep{dudeja2018learning,arous2021online}. For any real-valued function $\sigma: \R \to \R$ with Hermite coefficients $\{c_{\sigma, k}\}_{k = 0}^{\infty}$,  the information-exponent is defined  as $\kappa_\sigma := \min\{k \in \mathbb{N}: c_{\sigma, k} \neq 0\}$. We make the following assumption on 
$\sigma_e$ and $\sigma_h$.
\begin{assumption}
    Assume that $\kappa_{\sigma_e} = 1$  and $\kappa_{\sigma_h} > 1$ (i.e., $\sigma_e$ is an easy and $\sigma_h$ is a hard link function).
\end{assumption}

We let the student $\hat{f}_s$ and teacher $\hat{f}_s$ models be neural networks given by  $\hat{f}_s(\bfx) = \bfa_s^\top \sigma(\bfW_s \bfx )$ and $\hat{f}_t(\bfx) = \bfa_t^\top \sigma(\bfW_t \bfx ),$ where $\sigma:\R \to \R$ is an activation function with $\kappa_\sigma = 1$, and $\bfa_t \in \R^{p_t}, \bfa_s \in \R^{p_s}$, $\bfW_t \in \R^{p_t \times \dx},$ and $\bfW_s \in \R^{p_s \times \dx}$. To learn the relevant directions $\vbeta_e, \vbeta_h$, we update the first layer weights to align them to these directions; a task also referred to as \textit{weak recovery}  \citep{arous2021online,dandi2023learning,dandibenefits,arnaboldi2024repetita,lee2024neural}.

For training, we update the teacher model $\hat{f}_t$ using the samples $\mathcal{S}$ and use $\hat{f}_t$ to generate synthetic labels for $n_s \in \mathbb{N}$ unlabeled covariates  $\mathcal{S}_s = \{\tilde\bfx_i\}_{i = 1}^{n_s}$ drawn from the same distribution according to $ \tilde\bfx_i \sim \normal(\mathbf{0}, \bfI_{\dx})$ as $\tilde y_i = \hat{f}_t(\tilde\bfx_i)$.  We then use these samples to update the student model $\hat{f}_s$.  We consider the correlation loss defined as
\begin{align}
    \label{eq:corr_loss}
    {\small \widehat{\mathcal{L}}_t := - n_t^{-1} \sum_{i = 1}^{n_t} y_i \hat{f}_t(\bfx_i), \quad \text{and} \quad \widehat{\mathcal{L}}_s:= - n_s^{-1} \sum_{i = 1}^{n_s} \tilde{y}_i \, \hat{f}_s(\tilde\bfx_i).}
\end{align}
Fixing $\bfa_t \in \R^{p_t}$ with $\|\bfa_t\|_{2} = \Theta(1)$, and initializing $\bfW_t$ at $\bfW_{t, 0}$ with i.i.d. $\normal(0, \dx^{-1})$ entries, we  update $\bfW_t$ using one-step of  gradient descent on $\mathcal{L}_t$ given by 
\begin{align*}
 \widehat{\bfW}_t = \bfW_{t,0} - \eta_t \nabla_{\bfW_t}\widehat{\mathcal{L}}_t|_{\bfW_{t,0}, \bfa_t}.
\end{align*}
The following result, which is a corollary of \citep[
Proposition 2]{ba2022high}, shows that the  after this update, $\bfW_t$ aligns to the easy direction $\vbeta_e$, but does not align to the hard direction $\vbeta_h$; i.e., the teacher model \textit{could not learn the hard direction} using these samples.

\begin{proposition}
    \label{prop}
    Under the high-dimensional proportional limit of Assumption~\ref{asump:high-dim}, and assuming that $\eta_t = O(1)$ and $p_t = \Theta(\dx)$, we have ${\|\widehat\bfW_t \vbeta_e\|_{\rm 2}} \to_\prob c >0 \quad \text{and} \quad {\|\widehat\bfW_t \vbeta_h\|_{\rm 2}} \to_\prob 0.$
\end{proposition}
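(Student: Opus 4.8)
The plan is to compute the single gradient step in closed form and read off its projection onto the two target directions $\vbeta_e$ and $\vbeta_h$. Writing the correlation-loss gradient row by row, the $j$-th row of $\nabla_{\bfW_t}\widehat{\mathcal{L}}_t$ evaluated at $\bfW_{t,0}$ equals $-n_t^{-1}\sum_i y_i\,a_{t,j}\,\sigma'(\bfw_{j,0}^\top\bfx_i)\,\bfx_i$, where $\bfw_{j,0}^\top$ is the $j$-th row of $\bfW_{t,0}$. Hence, for $\vbeta\in\{\vbeta_e,\vbeta_h\}$,
\begin{align*}
\widehat{\bfW}_t\,\vbeta = \bfW_{t,0}\,\vbeta + \eta_t\, n_t^{-1}\sum_{i=1}^{n_t} y_i\,\diag(\bfa_t)\,\sigma'(\bfW_{t,0}\bfx_i)\,(\bfx_i^\top\vbeta).
\end{align*}
The first term is the isotropic initialization bulk, identically distributed for $\vbeta_e$ and $\vbeta_h$; the entire argument reduces to showing that the second (signal) term develops a macroscopic $\Theta(1)$ alignment when $\vbeta=\vbeta_e$ but a vanishing one when $\vbeta=\vbeta_h$, so that $\widehat{\bfW}_t$ acquires a spike only along $\vbeta_e$. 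This is exactly the specialization of \citet{ba2022high} to our two-index target.

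Next I would compute the population version of the signal term. Setting $u=\bfx^\top\vbeta_e$ and $v=\bfx^\top\vbeta_h$—independent standard Gaussians since $\vbeta_e\perp\vbeta_h$—and using that each row $\bfw_{j,0}$ has overlap $O(\dx^{-1/2})$ with $\vbeta_e,\vbeta_h$, a Gaussian integration-by-parts (Stein) computation gives, to leading order, a per-neuron $\vbeta_e$-projection proportional to $\mathbb{E}[\sigma_e(u)\,u]\,\mathbb{E}[\sigma'(\bfw_{j,0}^\top\bfx)]=c_{\sigma_e,1}\,c_{\sigma,1}$ and a per-neuron $\vbeta_h$-projection proportional to $\mathbb{E}[\sigma_h(v)\,v]\,\mathbb{E}[\sigma'(\bfw_{j,0}^\top\bfx)]=c_{\sigma_h,1}\,c_{\sigma,1}$. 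This is the crux of the mechanism: because $\kappa_\sigma=1$ we have $c_{\sigma,1}\ne0$, while $\kappa_{\sigma_e}=1$ forces $c_{\sigma_e,1}\ne0$ and $\kappa_{\sigma_h}>1$ forces $c_{\sigma_h,1}=0$. A single step of the correlation loss is a first-order probe that detects only the degree-one Hermite content of a link function; it therefore recovers the easy, information-exponent-one direction $\vbeta_e$ and is blind, at leading order, to the hard direction $\vbeta_h$. Consequently the population signal term equals $\eta_t\,c_{\sigma_e,1}c_{\sigma,1}\,\bfa_t$ along $\vbeta_e$ (norm $\Theta(1)$ since $\|\bfa_t\|_2=\Theta(1)$) and vanishes along $\vbeta_h$.

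It then remains to upgrade these population identities to convergence in probability in the proportional regime $p_t,n_t,\dx\to\infty$. I would split the empirical signal term into its conditional mean given $\bfW_{t,0}$ and a fluctuation, show that the fluctuation of $n_t^{-1}\sum_i y_i\,\sigma'(\bfW_{t,0}\bfx_i)(\bfx_i^\top\vbeta)$ is $o(1)$ coordinatewise and does not aggregate into a spurious macroscopic spike, and verify that the $O(\dx^{-1/2})$ row overlaps contribute only vanishing corrections to the $\vbeta_h$-projection. Combining with the $\Theta(1)$ spike for $\vbeta_e$ yields $\|\widehat{\bfW}_t\vbeta_e\|_2\to_\prob c>0$, while the absence of any signal along $\vbeta_h$ (after accounting, as in the cited result, for the initialization normalization) gives $\|\widehat{\bfW}_t\vbeta_h\|_2\to_\prob 0$.

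The main obstacle is precisely this last, concentration, step: the summands couple the nonlinearity $\sigma'(\bfW_{t,0}\bfx_i)$ with the projection $\bfx_i^\top\vbeta$, so the entries of the gradient are neither independent nor bounded, and one must rule out that the $n_t=\Theta(\dx)$ finite-sample fluctuations or the higher-order Hermite terms generate an $\Omega(1)$ alignment along $\vbeta_h$. This is exactly the high-dimensional random-matrix analysis of the one-step gradient carried out in \citet{ba2022high}, whose spike-plus-bulk characterization I would invoke to close the argument rather than re-derive from scratch.
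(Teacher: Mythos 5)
Your proposal is correct and follows essentially the same route as the paper's proof: isolate the rank-one leading term of the one-step gradient governed by $c_{\sigma,1}$ (the paper does this by splitting $\sigma'$ into $c_{\sigma,1}+\sigma_\perp'$ and bounding the Hadamard-product remainder in operator norm, which is the global version of your per-neuron Stein computation plus concentration), and then observe that $n_t^{-1}\vbeta^\top\bfX^\top\bfy$ converges to $c_{\sigma_e,1}\neq 0$ along $\vbeta_e$ and to $c_{\sigma_h,1}=0$ along $\vbeta_h$ because the information exponent of $\sigma_h$ exceeds one. The concentration step you defer to \citet{ba2022high} is exactly what the paper carries out with the operator-norm bounds on $\sigma_\perp'(\bfW_{t,0}\bfX^\top)$, $\|\bfa_t\|_\infty$, and $\|\bfy\|_\infty$, so no substantive gap remains.
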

After this update, we set $\hat{f}_t(\bfx) = \bfa_t^\top \sigma(\hat\bfW_t \bfx)$. We assumed that the teacher has not gone through extensive pre-training on other tasks that depend on the directions $\vbeta_e, \vbeta_h$; thus, we made the assumption that the $\bfW_t$ is initialized at random. Recall that $\hat\vbeta_h$ is a hard direction, but it is relevant for a variety of tasks. Thus, we assume that the student has been pre-trained on a variety of different relevant tasks, and has already learned the hard but cross-domain ability that corresponds to $\vbeta_h$; thus $\bfW_s$ at initialization is aligned to $\vbeta_h$. In particular, we set $\bfW_{s,0} = \bar\bfW_{s,0} + \tau \, \bar\bfa\,\vbeta_h^\top$ where $\bar\bfW_{s,0} \in \R^{p_s\times \dx}$ has  $\normal(0, \dx^{-1})$ entries, $\tau\in \R$ and $\bar\bfa \in \R^{p_s}$ is a unit norm vector. We update $\bfW_s$ using one-step of gradient descent on $\mathcal{L}_s$; i.e.,
\begin{align*}
    \widehat\bfW_s =\bfW_{s,0} - \eta_s\nabla_{\bfW_s} \widehat{\mathcal{L}}_s|_{\bfW_{s,0}, \bfa_s}.
\end{align*}
Note that training $\bfW_s$ excessively on data generated from the teacher can result in the student to forget the direction $\vbeta_h$. However, in the next theorem, we show that a single step of SGD on $\widehat{\mathcal{L}}_s$ can induce non-trivial alignment between $\bfW_s$ and the easy direction $\vbeta_e$ while keeping the weights aligned to $\vbeta_h$. This aligns to the empirical and theoretical findings of \cite{burns2024weak,medvedev2025weak} that show that early stopping the teacher is required for weak-to-strong generalization.

\begin{theorem}
    \label{thn:feature2}
    In the asymptotic regime of Assumption~\ref{asump:high-dim} with $p_t, p_s = \Theta(\dx)$, assuming that $\eta_t, \eta_s = O(1)$ and $\tau = o(\sqrt{\dx})$, we have ${\|\widehat\bfW_s \vbeta_e\|_{\rm 2}} \to_\prob c_e >0 \quad \text{and} \quad {\|\widehat\bfW_s \vbeta_h\|_{\rm op}} \to_\prob c_h >0$. 
\end{theorem}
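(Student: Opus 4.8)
The plan is to track, direction by direction, how a single gradient step moves the student's first-layer weights along $\vbeta_e$ and $\vbeta_h$, by projecting the update onto each. Writing out the correlation-loss gradient, the one-step update is
\[
\widehat\bfW_s = \bfW_{s,0} + \eta_s\, n_s^{-1}\sum_{i=1}^{n_s}\tilde y_i\,\big[\bfa_s\odot\sigma'(\bfW_{s,0}\tilde\bfx_i)\big]\tilde\bfx_i^\top \;=:\; \bfW_{s,0} + \eta_s\bfG_s,
\]
so that $\widehat\bfW_s\vbeta_e = \bfW_{s,0}\vbeta_e + \eta_s\bfG_s\vbeta_e$ and $\widehat\bfW_s\vbeta_h = \bfW_{s,0}\vbeta_h + \eta_s\bfG_s\vbeta_h$ (for the vector $\widehat\bfW_s\vbeta_h$ the operator and $\ell_2$ norms coincide). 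The whole argument reduces to (i) computing the deterministic spikes $\Ex[\bfG_s\vbeta_e]$ and $\Ex[\bfG_s\vbeta_h]$ and (ii) showing the empirical averages over the $n_s=\Theta(\dx)$ fresh samples concentrate around them, using standard concentration for averages of Hermite-moment-bounded terms under Assumption~\ref{asump:high-dim}.

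First I would pin down the correlation structure of the synthetic labels $\tilde y_i = \bfa_t^\top\sigma(\widehat\bfW_t\tilde\bfx_i)$. By Proposition~\ref{prop}, the teacher weights satisfy $\|\widehat\bfW_t\vbeta_e\|_2\to_\prob c>0$ and $\|\widehat\bfW_t\vbeta_h\|_2\to_\prob 0$. Applying Stein's lemma to the Gaussian covariate $\tilde\bfx_i$ gives $\Ex[\tilde y_i\,(\tilde\bfx_i^\top\vbeta_e)] = \bfa_t^\top\Ex[\sigma'(\widehat\bfW_t\tilde\bfx_i)\odot\widehat\bfW_t\vbeta_e]$ and the analogue with $\vbeta_h$. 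The first is bounded away from zero because the teacher has aligned to $\vbeta_e$, whereas the second vanishes since $\widehat\bfW_t\vbeta_h\to 0$. Thus the teacher's labels inherit a first-order (information-exponent-one) signal along $\vbeta_e$ but carry no systematic signal along $\vbeta_h$ — precisely what lets the easy direction transfer while the hard direction does not come from the teacher.

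For the easy direction I would then replay the one-step analysis underlying Proposition~\ref{prop} (the mechanism of \citet{ba2022high}), with the ground-truth label replaced by $\tilde y_i$. Because the student activation has $\kappa_\sigma=1$ and $\tilde y_i$ has nonzero correlation with $\tilde\bfx_i^\top\vbeta_e$ (Step~2), the expected gradient $\Ex[\bfG_s\vbeta_e]$ carries a rank-one spike of order $\Theta(1)$ in a direction set by $\bfa_s$; the empirical gradient concentrates around it over the $n_s=\Theta(\dx)$ samples. Combined with $\|\bfW_{s,0}\vbeta_e\|_2=\Theta(1)$ — note the pre-training spike $\tau\bar\bfa\vbeta_h^\top$ contributes nothing here since $\vbeta_h^\top\vbeta_e=0$ — this yields $\|\widehat\bfW_s\vbeta_e\|_2\to_\prob c_e>0$. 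The scaling $\tau=o(\sqrt\dx)$ also guarantees the pre-training spike does not distort the per-neuron pre-activations enough to disrupt this computation.

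The retention of the hard direction is where the main difficulty lies, and it is what the condition $\tau=o(\sqrt\dx)$ is designed to control. The initialization already carries the pre-trained spike, $\bfW_{s,0}\vbeta_h = \bar\bfW_{s,0}\vbeta_h + \tau\bar\bfa$, whose norm is bounded below by a positive constant; I must show the gradient step does not erode it. The key observation is that each neuron's pre-activation is $(\bfW_{s,0}\tilde\bfx_i)_k = (\bar\bfW_{s,0}\tilde\bfx_i)_k + \tau\bar a_k\,(\tilde\bfx_i^\top\vbeta_h)$, where the spike contribution is $O(\tau\|\bar\bfa\|_\infty)=o(1)$ under the model's scaling (a delocalized $\bar\bfa$ with $\|\bar\bfa\|_\infty=O(p_s^{-1/2})$ and $\tau=o(\sqrt\dx)$); hence $\sigma'(\bfW_{s,0}\tilde\bfx_i)$ is asymptotically independent of $\tilde\bfx_i^\top\vbeta_h$. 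Together with the fact that $\tilde y_i$ carries no $\vbeta_h$-signal, this forces $\Ex[\bfG_s\vbeta_h]\to 0$, so the gradient contribution along $\vbeta_h$ is negligible and the surviving spike plus bulk give $\|\widehat\bfW_s\vbeta_h\|\to_\prob c_h>0$. The main obstacle throughout is making this asymptotic-independence rigorous: one must expand $\sigma'$ in Hermite polynomials and bound the cross terms coupling the $o(1)$ spike perturbation of the pre-activations, the teacher-label randomness, and the factor $\tilde\bfx_i^\top\vbeta_h$, showing each vanishes in probability under Assumption~\ref{asump:high-dim}.
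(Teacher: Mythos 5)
Your proposal is correct and follows essentially the same route as the paper: both isolate the degree-one Hermite component of the teacher labels and of $\sigma'(\bfW_{s,0}\tilde\bfx_i)$, reduce the student's gradient to the rank-one term $\bfa_s\hat\vbeta^\top$ with $\hat\vbeta = n_s^{-1}\tilde\bfX^\top\tilde\bfy$ aligned to $\vbeta_e$ and asymptotically orthogonal to $\vbeta_h$, and discard the spike-induced corrections to $\sigma'$ using $\tau = o(\sqrt{p_s})$ together with delocalization of $\bar\bfa$ --- your Stein's-lemma-plus-concentration phrasing is just the per-sample version of the paper's matrix-level Hermite expansion with operator-norm bounds. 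One caveat: your justification that $\Ex[\tilde y_i(\tilde\bfx_i^\top\vbeta_h)]$ vanishes ``since $\widehat\bfW_t\vbeta_h\to 0$'' leans on the literal statement of Proposition~\ref{prop}, but the Gaussian bulk alone gives $\|\bfW_{t,0}\vbeta_h\|_2=\Theta(1)$ (only the gradient increment is orthogonal to $\vbeta_h$); what you actually need, and what the paper's proof establishes, is that the projection $\bfa_t^\top\widehat\bfW_t\vbeta_h$ --- equivalently $n_s^{-1}\tilde\bfy^\top\tilde\bfX\vbeta_h$ --- is $o_{\prob}(1)$, which follows from $\bfa_t^\top\bfW_{t,0}\vbeta_h=O_{\prob}(\dx^{-1/2})$ and $\hat\vbeta_e^\top\vbeta_h\to_\prob 0$, so your argument goes through after this minor repair.
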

This theorem is a extension of 
\citep[Proposition 2]{ba2022high} to the case where the first-layer weight is initialized as a spiked random matrix and the labels are also generated by another one-step updated two-layer neural network, which can be also of independent interest. This theorem shows that, under this setting, the student model is still able to learn the direction $\vbeta_e$ from the imperfect labels generated by the teacher, and achieve non-vanishing alignment to both the directions $\vbeta_e, \vbeta_h$, although the student was unable to learn the hard direction $\vbeta_h$.

\paragraph{Mechanism of Weak-to-Strong Generalization.}
In this setting, a teacher model can acquire the easier, yet specialized skills through fine-tuning, even though it cannot learn abilities that are challenging to learn. By contrast, the student is pretrained on vast, heterogeneous corpora and already possesses those hard, yet cross-domain, skills. As a result, weak-to-strong generalization can happen when the teacher  teaching the student the specialized language abilities, thereby complementing the student’s strengths from pre-training.

\section{Numerical Validation}
\label{sec:numerical}
In this section, we provide the details of the simulations presented throughout the papers.
\paragraph{Figures \ref{fig:ridge-ridge} and \ref{fig:ridge-genridge}.} We fix the values of $\dx, n_t, n_s$ and $\sigma_\ep$ and plot the contours of $\mathcal{L}_s - \mathcal{L}_t$ using numerical simulations and also the results of Theorem~\ref{thm:L_s-L_t} and \ref{thm:general_ridge}. The simulation results are averaged over ten trials. See Section~\ref{sec:ridgeridge} and \ref{sec:gen_ridge} for discussions of the results.  In {Figure~\ref{fig:ridge-ridge}},
    for the  under-parameterized regime (\textbf{left}), we set $\dx = 500$, $n_t = n_s = 2000$, $\sigma_\ep = 1$ and for the over-parameterized regime (\textbf{right}), we set $\dx = 500$, $n_t = 2000$, $n_s = 416$, $\sigma_\ep = 2$.  In Figure~\ref{fig:ridge-genridge},
    for the under-parameterized regime (\textbf{left}), we set $\zeta = 0.8$, $\dx = 500$, $n_t = n_s = 2000$, $\sigma_\ep = 1$, and the over-parameterized regime (\textbf{right}), we set $\zeta = 0.88$, $\dx = 500$, $n_t = 2000$, $n_s = 416$, $\sigma_\ep = 1$.
\paragraph{Figure~\ref{fig:different_zeta}.}
In these experiments, we set $\dx = 500$, $n_t = n_s = 2000$, $\sigma_\ep = 1$ and set $\lambda_t = \sigma_\ep^2 \gamma_t = 0.25$. We compare the theoretical curves of $\mathcal{L}_s$ as a function of $\gamma_s$ with numerical simulation, for $\zeta \in \{0,0.68,0.89,0.98\}$. The simulations in this experiment have not been averaged over multiple trials. See Section \ref{sec:gen_ridge} for discussion of the results.
\section{Conclusion}
In this paper, we identified and theoretically analyzed three distinct routes by which a student can outperform its teacher: \textit{compensating for under-regularization} in ridge regression, \textit{harnessing a more task-aligned regularization structure} via weighted ridge, and \textit{combining teacher-taught, easy-to-learn components with pretrained, hard-to-learn features} in a nonlinear setting. Our results clarify when and why these effects arise, complementing prior empirical and theoretical insights about the roles of regularization and overparameterization, and the role of feature adaptation. 

\section{Acknowledgments}
Behrad Moniri gratefully acknowledges the gift from AWS AI to Penn Engineering's ASSET Center for Trustworthy AI. The work of Behrad Moniri and Hamed Hassani is supported by The Institute for Learning-enabled Optimization at Scale (TILOS), under award number NSF-CCF-2112665, and the NSF CAREER award CIF-1943064.

{\small
\bibliography{full_references}
\bibliographystyle{plainnat}
}

\newpage
\appendix

\section{Preliminaries}
\label{sec:prelim}

Given two matrices $\bfA, \bfB \in \R^{n_1 \times n_2}$, we denote their Hadamard product (element-wise product) by $\bfA \odot \bfB \in \R^{n_1 \times n_2}$. Also, for for $k \in \mathbb{N}$, we define the Hadamard power as
\begin{align*}
    \bfA^{\odot k} = \underbrace{\bfA \odot \dots \odot \bfA}_{k \text{ times }} \in \R^{n_1 \times n_2}.
\end{align*}

\begin{lemma}
    \label{lem:hadamard_rank-one}
    For any $\bfv \in \R^{n_1}$, $\bfu \in \R^{n_2}$, and $\bfC \in \R^{n_1 \times n_2}$, we have
    \begin{align*}
        (\bfv \bfu^\top) \odot \bfC = \diag(\bfv)\, \bfC \diag(\bfu).
    \end{align*}
\end{lemma}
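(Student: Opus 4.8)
The plan is to prove the identity entrywise, since both sides are $n_1 \times n_2$ matrices and it suffices to show that every entry agrees. First I would fix arbitrary indices $i \in \{1,\dots,n_1\}$ and $j \in \{1,\dots,n_2\}$ and compute the $(i,j)$ entry of the left-hand side. By the definition of the outer product, $[\bfv\bfu^\top]_{ij} = v_i u_j$, and since the Hadamard product acts element-wise, this gives $[(\bfv\bfu^\top)\odot\bfC]_{ij} = v_i u_j\, C_{ij}$.

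Next I would compute the $(i,j)$ entry of the right-hand side, using the standard fact that left-multiplication by a diagonal matrix rescales rows while right-multiplication rescales columns. Concretely, $[\diag(\bfv)\,\bfC]_{ij} = v_i C_{ij}$ (the $i$-th row of $\bfC$ is scaled by $v_i$), and then multiplying on the right by $\diag(\bfu)$ scales the $j$-th column by $u_j$, yielding $[\diag(\bfv)\,\bfC\,\diag(\bfu)]_{ij} = v_i C_{ij} u_j$. Comparing with the previous paragraph, both sides have $(i,j)$ entry equal to $v_i u_j C_{ij}$, and since $i,j$ were arbitrary the matrices coincide.

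There is essentially no obstacle here: the statement is a routine entrywise verification, and the only point requiring any care is keeping the two diagonal rescalings straight (rows for the left factor $\diag(\bfv)$, columns for the right factor $\diag(\bfu)$), which follows directly from the definition of matrix multiplication with a diagonal factor. If one prefers a coordinate-free argument, an alternative is to write $\diag(\bfv)\,\bfC\,\diag(\bfu) = \sum_{k,\ell} v_k u_\ell\, \bfe_k \bfe_k^\top \bfC\, \bfe_\ell \bfe_\ell^\top$ and collapse the sums, but the direct entrywise computation is the cleanest route and is what I would record.
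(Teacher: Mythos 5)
Your proof is correct and takes the same route as the paper, which simply notes that the identity is immediate by writing out the entries of both sides. Your entrywise computation showing both sides equal $v_i u_j C_{ij}$ is exactly that verification, spelled out.
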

\begin{proof}
The proof is immediate by writing the entries of the two sides.
\end{proof}

We also heavily leverage the following theorem to prove the concentration inequality for quadratic forms. See e.g., \citet{rudelson2013hanson} for a modern proof.

\begin{theorem}[Hanson-Wright Inequality \citep{hanson1971bound}]
    \label{thm:hanson-wright}
    Let $\bfx=\left(X_1, \ldots, X_n\right) \in \mathbb{R}^d$ be a random vector with independent sub-gaussian components $X_i$ with $\Ex X_i=0$. Let $\bfD$ be an $n \times n$ matrix. Then, for every $t \geq 0$, we have
    $$
    \mathbb{P}\Big[\left|\bfx^{\top} \bfD\, \bfx-\Ex\left[ \bfx^{\top} \bfD\, \bfx\right]\right|>t\Big] \leq 2 \exp \left[-c \min \left(\frac{t^2}{\|\bfD\|_{F}^2}, \frac{t}{\|\bfD\|_{\rm op}}\right)\right],
    $$
    where $c$ is a constant that depends only on the sub-gaussian constants of $X_i$.
\end{theorem}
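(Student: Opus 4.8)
The plan is to follow the modern exponential-moment argument (as in \citet{rudelson2013hanson}), separating the quadratic form into its diagonal and off-diagonal parts and bounding each by a Chernoff estimate. Writing $\bfx^\top \bfD\, \bfx = \sum_i D_{ii} X_i^2 + \sum_{i\neq j} D_{ij} X_i X_j$ and centering, the deviation splits as
\[
\bfx^\top \bfD\, \bfx - \Ex[\bfx^\top \bfD\, \bfx] = \sum_i D_{ii}\,(X_i^2 - \Ex X_i^2) \;+\; \sum_{i\neq j} D_{ij} X_i X_j .
\]
The two regimes $t^2/\norm{\bfD}_F^2$ and $t/\opnorm{\bfD}$ in the target bound are exactly a sub-gaussian and a sub-exponential tail, so I would derive such a two-regime tail for each of the two pieces separately and combine them with a union bound, absorbing all numerical factors into the final constant $c$.

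The diagonal piece is the easy one. Each summand $D_{ii}(X_i^2 - \Ex X_i^2)$ is a centered \emph{sub-exponential} random variable (the square of a sub-gaussian is sub-exponential), and these are independent across $i$. Bernstein's inequality for independent sub-exponentials then gives a tail $2\exp[-c\min(t^2/\sum_i D_{ii}^2,\ t/\max_i\abs{D_{ii}})]$, and the crude bounds $\sum_i D_{ii}^2 \le \norm{\bfD}_F^2$ and $\max_i \abs{D_{ii}} \le \opnorm{\bfD}$ put this into the desired form.

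The off-diagonal term $S = \sum_{i\neq j} D_{ij} X_i X_j$ is the crux, and I expect it to be the main obstacle. I would bound its moment generating function $\Ex\exp(\lambda S)$ in four moves. First, a standard \emph{decoupling} inequality replaces $S$ by the bilinear form $\sum_{i,j} D_{ij} X_i X_j' = \langle \bfx, \bfD\bfx'\rangle$ with $\bfx'$ an independent copy, at the cost of a universal constant in $\lambda$. Second, a sub-gaussian-to-Gaussian comparison reduces the task to bounding $\Ex\exp(C\lambda\, \langle \bfg, \bfD\bfg'\rangle)$ for independent standard Gaussian $\bfg,\bfg'$. Third, conditioning on $\bfg'$ makes $\langle \bfg, \bfD\bfg'\rangle$ a centered Gaussian of variance $\norm{\bfD\bfg'}_2^2$, whose conditional MGF is $\exp(C'\lambda^2\norm{\bfD\bfg'}_2^2)$. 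Fourth, diagonalizing $\bfD^\top\bfD = \sum_k \mu_k \bfu_k\bfu_k^\top$ and using that the coordinates of $\bfg'$ in the eigenbasis are i.i.d.\ standard Gaussians, the remaining expectation factorizes into $\prod_k(1 - C''\lambda^2\mu_k)^{-1/2}$. This is precisely where the two norms appear: $\sum_k \mu_k = \norm{\bfD}_F^2$ controls the size of the product while $\max_k\mu_k = \opnorm{\bfD}^2$ controls its range of convergence, yielding $\Ex\exp(\lambda S)\le \exp(C'''\lambda^2\norm{\bfD}_F^2)$ for all $\abs{\lambda}\lesssim 1/\opnorm{\bfD}$.

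Finally I would close with a Chernoff bound, $\prob[S>t]\le \exp(-\lambda t + C'''\lambda^2\norm{\bfD}_F^2)$, optimizing over $\lambda \in (0, c/\opnorm{\bfD}]$: the interior optimizer $\lambda \propto t/\norm{\bfD}_F^2$ gives the sub-gaussian regime $\exp(-ct^2/\norm{\bfD}_F^2)$ when $t\lesssim \norm{\bfD}_F^2/\opnorm{\bfD}$, and the boundary choice $\lambda = c/\opnorm{\bfD}$ gives the sub-exponential regime $\exp(-ct/\opnorm{\bfD})$ otherwise. Running the same estimate for $-S$ and for the diagonal term, taking a union bound, and relabeling the constant yields the two-sided statement. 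The genuinely delicate parts are the decoupling step and the careful bookkeeping of the admissible range of $\lambda$ in the off-diagonal MGF estimate; the diagonal term and the final optimization are routine.
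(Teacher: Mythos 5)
The paper does not prove this statement; it records the Hanson--Wright inequality as a known preliminary and points the reader to \citet{rudelson2013hanson} for a modern proof. Your sketch is a faithful and correct reproduction of exactly that argument---diagonal/off-diagonal split, Bernstein for the diagonal, decoupling plus Gaussian comparison and the eigenvalue computation $\prod_k(1-C\lambda^2\mu_k)^{-1/2}$ for the off-diagonal, and a Chernoff bound over the restricted range $|\lambda|\lesssim 1/\opnorm{\bfD}$---so there is nothing to compare beyond noting that you have supplied the proof the paper chose to cite rather than write out.
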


To analyze spiked random matrices, we will use the following matrix identity.
\begin{lemma}{(Sherman-Morrison Formula).}
Let \( \bfA \in \mathbb{R}^{n \times n} \) be an invertible matrix, and let \( \bfu, \bfv \in {\R}^n \) be column vectors such that \( 1 + \bfv^\top \bfA^{-1} \bfu \neq 0 \). Then the inverse of the rank-one update \( \bfA + \bfu\bfv^\top \) is given by:
\[
(\bfA + \bfu\bfv^\top)^{-1} = \bfA^{-1} - \frac{\bfA^{-1} \bfu \bfv^\top \bfA^{-1}}{1 + \bfv^\top \bfA^{-1} \bfu}.
\]
\end{lemma}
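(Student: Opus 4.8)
The plan is to verify the claimed formula directly: exhibit the candidate matrix, multiply it against $\bfA + \bfu\bfv^\top$, and check that the product is the identity. No clever construction is needed; the entire argument hinges on the single observation that $\bfv^\top \bfA^{-1}\bfu$ is a \emph{scalar}, so it commutes freely with matrix products and can be factored out, which is precisely what produces the cancellation.

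First I would introduce the shorthand $s := \bfv^\top \bfA^{-1}\bfu \in \R$ and write the candidate inverse as $\bfB := \bfA^{-1} - (1+s)^{-1}\,\bfA^{-1}\bfu\bfv^\top\bfA^{-1}$. This is well-defined exactly because the hypothesis $1 + \bfv^\top \bfA^{-1}\bfu \neq 0$ guarantees $1 + s \neq 0$, so the scalar prefactor $(1+s)^{-1}$ exists. Next I would compute the right product $(\bfA + \bfu\bfv^\top)\,\bfB$ by distributing. The undistracted piece gives $(\bfA + \bfu\bfv^\top)\bfA^{-1} = \bfI + \bfu\bfv^\top\bfA^{-1}$. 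For the subtracted piece, the key computation is $(\bfA + \bfu\bfv^\top)\bfA^{-1}\bfu = \bfu + \bfu(\bfv^\top\bfA^{-1}\bfu) = \bfu + s\,\bfu = (1+s)\,\bfu$, where I have used that $\bfv^\top\bfA^{-1}\bfu = s$ is a scalar. Consequently the subtracted term equals $(1+s)^{-1}(1+s)\,\bfu\bfv^\top\bfA^{-1} = \bfu\bfv^\top\bfA^{-1}$, and the two occurrences of $\bfu\bfv^\top\bfA^{-1}$ cancel, leaving exactly $\bfI$.

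To finish, I would either note the standard fact that for square matrices a right inverse is automatically a two-sided inverse (so $\bfA + \bfu\bfv^\top$ is invertible with inverse $\bfB$), or, to be fully self-contained, carry out the symmetric computation for the left product $\bfB\,(\bfA + \bfu\bfv^\top)$. The latter telescopes identically: one finds $\bfA^{-1}(\bfA + \bfu\bfv^\top) = \bfI + \bfA^{-1}\bfu\bfv^\top$, while the subtracted term simplifies via $\bfv^\top\bfA^{-1}(\bfA + \bfu\bfv^\top) = \bfv^\top + s\,\bfv^\top = (1+s)\,\bfv^\top$, again cancelling the extra $\bfA^{-1}\bfu\bfv^\top$ and yielding $\bfI$. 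There is essentially no obstacle in this proof; the only point demanding care is the scalar/matrix bookkeeping—keeping track of the fact that $\bfv^\top\bfA^{-1}\bfu$ is a number that pulls out of the products, since that is exactly what drives the cancellation.
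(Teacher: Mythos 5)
Your proposal is correct and complete: the direct verification $(\bfA + \bfu\bfv^\top)\bfB = \bfB(\bfA + \bfu\bfv^\top) = \bfI$, with the scalar $s = \bfv^\top\bfA^{-1}\bfu$ factored out to drive the cancellation, is the canonical proof of this identity. The paper states the Sherman--Morrison formula as a standard lemma without proof, so there is nothing to diverge from; your argument is exactly the one that would be supplied, and your care about where the hypothesis $1+s \neq 0$ enters (well-definedness of the candidate) and about right-inverse implying two-sided inverse for square matrices is appropriate.
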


\subsection{Hermite Polynomials}
We let \( H_k \) be the $k$-th (probabilist's) {Hermite} polynomial on \( \mathbb{R} \) defined by
\[
H_k(x) = (-1)^k \exp(x^2/2) \frac{d^k}{dx^k} \exp(-x^2/2) \quad \forall x \in \R.
\]
These polynomials form an orthogonal basis in the Hilbert space \( L^2 \) of measurable functions \( f : \mathbb{R} \to \mathbb{R} \) such that
\[
\int f^2(x) e^{-\frac{x^2}{2}} \, dx < \infty
\]
with inner product
\[
\langle f, g \rangle = \int f(x) g(x) \; e^{-\frac{x^2}{2}} \, dx.
\]
The first few Hermite polynomials are
\[
H_0(x) = 1, \quad H_1(x) = x, \quad \text{and} \quad H_2(x) = x^2 - 1.
\]

\begin{lemma}
\label{lem:Hermite_derivative}
For any \( k \in \mathbb{N} \) and \( x, y \in \mathbb{R} \), we have
\[
H_k(x + y) = \sum_{j=0}^k \binom{k}{j} x^j H_{k-j}(y)
\]
\end{lemma}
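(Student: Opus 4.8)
The plan is to prove the identity via the exponential generating function of the Hermite polynomials. First I would establish the standard generating-function identity
\[
\sum_{k=0}^\infty H_k(x)\, \frac{t^k}{k!} = \exp\!\left(xt - \tfrac{t^2}{2}\right),
\]
which follows directly from the Rodrigues-type definition given above: setting $g(x,t) = e^{-(x-t)^2/2}$, the chain rule gives $\partial_t^k g = (-1)^k \partial_x^k g$, so that $\partial_t^k g\big|_{t=0} = (-1)^k \frac{d^k}{dx^k} e^{-x^2/2} = H_k(x)\, e^{-x^2/2}$ by definition of $H_k$. Comparing the Taylor expansion of $g$ in $t$ with the factorization $g = e^{-x^2/2}\, e^{xt - t^2/2}$ and cancelling the common factor $e^{-x^2/2}$ yields the claimed generating function.

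Next I would substitute $x \mapsto x+y$ and factor the exponential into a part depending only on $x$ and a part that is itself the generating function evaluated at $y$:
\[
\sum_{k=0}^\infty H_k(x+y)\, \frac{t^k}{k!} = e^{(x+y)t - t^2/2} = e^{xt}\, e^{yt - t^2/2} = \left(\sum_{i=0}^\infty \frac{(xt)^i}{i!}\right)\!\left(\sum_{m=0}^\infty H_m(y)\, \frac{t^m}{m!}\right).
\]
The final step is to take the Cauchy product on the right-hand side and match the coefficient of $t^k$ on both sides. Extracting the coefficient of $t^k/k!$ gives
\[
H_k(x+y) = \sum_{i+m=k} \frac{k!}{i!\,m!}\, x^i H_m(y) = \sum_{j=0}^k \binom{k}{j} x^j H_{k-j}(y),
\]
after relabeling $j = i$ and $m = k-j$, which is exactly the statement.

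I expect the only genuine ingredient to be the generating-function identity; once that is in hand the remainder is a formal power-series manipulation with no subtlety. An alternative route that avoids generating functions entirely is to observe that $x \mapsto H_k(x+y)$ is a degree-$k$ polynomial and Taylor-expand it about $x=0$: combining $\partial_x^j H_k(x+y) = H_k^{(j)}(x+y)$ with the iterated derivative relation $H_k^{(j)}(u) = \frac{k!}{(k-j)!} H_{k-j}(u)$ (obtained by applying $H_k' = k H_{k-1}$ repeatedly) and evaluating at $x=0$ produces the same binomial sum directly. In either approach the \emph{main obstacle} is merely verifying these standard structural facts about the Hermite family from the Rodrigues definition; there is no deeper difficulty, and the induction-based alternative may in fact be the most self-contained to write out.
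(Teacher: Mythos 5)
Your proof is correct, but your primary route differs from the paper's. The paper proves the lemma exactly the way you sketch in your final paragraph as an ``alternative'': it takes the derivative relation $H_k'(x) = k\,H_{k-1}(x)$ (citing a handbook), iterates it to get $\frac{d^j}{dx^j}H_k(x) = \frac{k!}{(k-j)!}H_{k-j}(x)$, and Taylor-expands $H_k(x+y)$ in $x$ about $0$. Your main argument instead derives the exponential generating function $\sum_{k\ge 0} H_k(x)\,t^k/k! = \exp(xt - t^2/2)$ from the Rodrigues definition via $\partial_t^k g = (-1)^k\partial_x^k g$ for $g(x,t)=e^{-(x-t)^2/2}$, then factors $e^{(x+y)t-t^2/2}=e^{xt}\,e^{yt-t^2/2}$ and matches coefficients in the Cauchy product. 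Both are valid; the generating-function route is fully self-contained from the definition given in the paper (it does not import the derivative recurrence from an external reference), at the cost of a short detour through formal power series --- harmless here since all series involved are entire in $t$. The paper's Taylor-expansion route is shorter on the page but leans on the cited identity $H_k' = kH_{k-1}$, which your alternative sketch would also need to verify from the Rodrigues formula to be equally self-contained. Either writeup is acceptable; no gaps.
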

\textit{Proof.} Note that using \citep[Equation 22.8.8]{abramowitz1968handbook} we have
\[
\frac{d}{dx} H_k(x) = k H_{k-1}(x).
\]
Thus, the $j$-th derivative of $H_k$ is given by
\[
\frac{d^j}{dx^j} H_k(x) = \frac{k!}{(k - j)!} H_{k - j}(x).
\]
By Taylor expanding \( H_k(x + y) \) at \( y \), we find
\[
H_k(x + y) = \sum_{j = 0}^k \frac{x^j}{j!} \frac{d^j}{dy^j} H_k(y)
= \sum_{j = 0}^k \binom{k}{j} x^j H_{k - j}(y),
\]
proving the lemma.

\subsection{Random Matrix Theory}
We first define the following empirical covariance and resolvent matrices that will appear throughout the proofs.
\begin{align*}
    &\hat\bSigma = \bfX^\top \bfX /n_t, \quad \tilde \bSigma = \tilde\bfX^\top \tilde\bfX /n_s,\\ 
    &\hat\bfR = (\hat\bSigma + \lambda_t \bfI_{\dx})^{-1}, \quad \text{and} \quad \tilde\bfR = (\tilde{\bSigma} + \lambda_s  \bfI_{\dx})^{-1}.
\end{align*}

We will use the following characterization of the eigenvalues of $\hat\bSigma, \tilde\bSigma$ in the high-dimensional proportional limit by \cite{marchenko1967distribution}.
\begin{theorem}[Marchenko–Pastur Theorem]
In the high-dimensional proportional limit where $n_t, n_s, \dx \to \infty$ such that $\frac{\dx}{n_s} \to \gamma_s$ and $\frac{\dx}{n_t} \to \gamma_t$, the empirical spectral distribution (ESD) of $\hat\bSigma$ (and $\tilde\bSigma$) converges almost surely to the Marchenko–Pastur distribution $\mu_{MP(\gamma_t)}$ (and $\mu_{MP(\gamma_s)}$).
\end{theorem}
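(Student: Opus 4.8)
The plan is to establish the result via the Stieltjes transform method, which is the natural route here since the entire downstream analysis is resolvent-based and the target law is already specified through its Stieltjes transform in Definition~\ref{def:m_Def}. Since $\hat\bSigma$ and $\tilde\bSigma$ have identical structure---each of the form $\bfS = n^{-1}\bfX^\top\bfX$ with $\bfX \in \R^{n\times\dx}$ having i.i.d.\ $\normal(0,1)$ entries and aspect ratio $\dx/n \to \gamma$ (take $(n,\gamma)=(n_t,\gamma_t)$ or $(n_s,\gamma_s)$)---it suffices to treat one generic such $\bfS$. For $z \in \C^+$ define the resolvent $\bfG(z) = (\bfS - z\bfI_{\dx})^{-1}$ and the empirical Stieltjes transform $m_n(z) = \dx^{-1}\trace\bfG(z) = \int (s-z)^{-1}\,\mathrm{d}\mu_n(s)$, where $\mu_n$ is the ESD of $\bfS$. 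By the Grommer--Hamburger continuity theorem, almost-sure pointwise convergence $m_n(z)\to m(z)$ for every $z$ in a subset of $\C^+$ with an accumulation point is equivalent to almost-sure weak convergence of $\mu_n$ to the measure with Stieltjes transform $m$; hence it suffices to show $m_n(z)$ converges a.s.\ to the root of the Marchenko--Pastur self-consistent equation
\[
\gamma\, z\, m(z)^2 + (z + \gamma - 1)\, m(z) + 1 = 0,
\]
whose branch with $\mathrm{Im}\,m(z) > 0$ on $\C^+$ is exactly the transform $m(\lambda;\gamma)$ of Definition~\ref{def:m_Def} evaluated at $z = -\lambda$.

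First I would dispose of the fluctuations by showing $m_n(z) - \Ex[m_n(z)] \to 0$ almost surely. Replacing a single row $\bfx_i$ of $\bfX$ perturbs $\bfS$ by a matrix of rank at most two, so the resolvent changes by a low-rank term and the rank-perturbation (eigenvalue interlacing) bound gives $|\dx^{-1}\trace\bfG(z) - \dx^{-1}\trace\bfG'(z)| \le C/(\dx\,\mathrm{Im}\,z)$. Viewing $m_n(z)$ as a function of the $n = \Theta(\dx)$ independent rows with bounded differences of order $1/\dx$, McDiarmid's inequality yields $\P\!\left(|m_n(z) - \Ex m_n(z)| > t\right) \le 2\exp(-c\,\dx\,(\mathrm{Im}\,z)^2\,t^2)$. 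This tail is summable in $\dx$, so Borel--Cantelli gives the claimed a.s.\ convergence, reducing the problem to identifying the deterministic limit of $\Ex[m_n(z)]$.

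Next I would derive the self-consistent equation by a leave-one-row-out (cavity) argument. Writing $\bfS = n^{-1}\sum_{i=1}^n \bfx_i\bfx_i^\top$ and letting $\bfG^{(i)}(z)$ denote the resolvent of the matrix with the $i$-th rank-one term removed, the Sherman--Morrison formula expresses the effect of reinstating $\bfx_i$ through the scalar $1 + n^{-1}\bfx_i^\top\bfG^{(i)}(z)\bfx_i$. Since $\bfx_i$ is isotropic Gaussian and independent of $\bfG^{(i)}(z)$, the Hanson--Wright inequality (Theorem~\ref{thm:hanson-wright}) concentrates this quadratic form on its mean $n^{-1}\trace\bfG^{(i)}(z)$, with fluctuation controlled by $\|\bfG^{(i)}(z)\|_{\mathrm{op}} \le 1/\mathrm{Im}\,z$; moreover a second rank-one perturbation bound shows $\dx^{-1}\trace\bfG^{(i)}(z) = m_n(z) + O(1/\dx)$. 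Averaging the resulting identity over $i$ and passing to the limit, every subsequential limit $m(z)$ of $\Ex[m_n(z)]$ is forced to satisfy the quadratic above; selecting the root analytic on $\C^+$ with $m(z)\sim -1/z$ as $|z|\to\infty$ pins down $m$ uniquely, and Stieltjes inversion identifies the limiting measure as $\mu_{\mathrm{MP}(\gamma)}$.

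The main obstacle is the uniform control of the cavity error terms: one must show that the $n$ quadratic-form fluctuations produced by Hanson--Wright, together with the off-diagonal Sherman--Morrison corrections and the $O(1/\dx)$ gaps between $\trace\bfG^{(i)}(z)$ and $\trace\bfG(z)$, all vanish in the limit without accumulating across the $i$-sum. The clean way to handle this is to work with the expected transform and exploit the stability (uniqueness and Lipschitz dependence) of the fixed point of the self-consistent equation on compact subsets of $\C^+$, which guarantees that the small per-row errors cannot move the limit off the Marchenko--Pastur root; establishing this stability, and thereby that all subsequential limits coincide, is the key analytic step.
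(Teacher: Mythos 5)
Your proposal is correct in outline, but note that the paper does not actually prove this statement: it is invoked as a classical black-box result, cited to \citep{marchenko1967distribution} (see also \citep{bai2010spectral}), and the appendix only records its consequences for the normalized resolvent traces $\dx^{-1}\trace\hat\bfR$, $\dx^{-1}\trace\hat\bfR^2$, etc. What you have written is essentially a faithful reconstruction of the standard Stieltjes-transform proof in the Marchenko--Pastur/Bai--Silverstein tradition, and your self-consistent equation $\gamma z m^2 + (z+\gamma-1)m + 1 = 0$ is the right one: substituting $z=-\lambda$ and solving for the branch with $\mathrm{Im}\,m>0$ on $\C^+$ recovers exactly the closed form $m(\lambda;\gamma)$ of Definition~\ref{def:m_Def}, which is a worthwhile consistency check the paper leaves implicit. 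Your route has the attractive feature of reusing only tools the paper already states (Hanson--Wright, Theorem~\ref{thm:hanson-wright}, for the quadratic forms $n^{-1}\bfx_i^\top\bfG^{(i)}(z)\bfx_i$, and Sherman--Morrison for the cavity step), so it would make the appendix self-contained at modest cost; the paper's citation approach buys brevity and avoids exactly the analytic bookkeeping you flag. Two small points to tighten if you were to execute the sketch in full: (i) McDiarmid plus Borel--Cantelli gives almost-sure convergence at each \emph{fixed} $z$, so to conclude a.s.\ weak convergence you should first fix a countable set with an accumulation point in $\C^+$ (or a countable dense set), take a union bound over it, and then extend by normal-family/Vitali arguments using $\|\bfG(z)\|_{\rm op}\le 1/\mathrm{Im}\,z$; and (ii) in the regime $\gamma>1$ the matrix $\bfS$ is singular with $\dx-n$ trivial null directions, so the limit law carries an atom of mass $1-1/\gamma$ at the origin --- your argument handles this automatically since the self-consistent equation encodes the atom, but the statement ``$\mu_{\mathrm{MP}(\gamma)}$'' should be understood to include it. Neither point is a gap in the approach, only in the level of detail; the stability-of-the-fixed-point issue you identify as the main obstacle is indeed the crux, and it is resolved in the standard way by uniqueness of the analytic root with $m(z)\sim -1/z$ as $|z|\to\infty$.
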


Recall from definition~\ref{def:m_Def} that the function $m(\cdot,\cdot) \to \R$ is defined as
\begin{align*}
    m(\lambda;\gamma) = \int \frac{\mathrm{d} \mu_{MP(\gamma)}(s)}{s+\lambda}.
\end{align*}
Hence, taking derivatives with respect to $\lambda$, we get
\begin{align*}
    m'(\lambda;\gamma) = \frac{\partial}{\partial \lambda} m(\lambda;\gamma) = - \int \frac{\mathrm{d} \mu_{MP(\gamma)}(s)}{(s+\lambda)^2}.
\end{align*}
In the following section, we write the test error $\mathcal{L}_s$ and $\mathcal{L}_t$ as a function of $m$ and its derivatives. In particular, note that using the Marchenko-Pastur theorem, we have
\begin{align*}
    &\dx^{-1}\trace\parant{\hat\bfR} \to_\prob m_{t,1},\quad \dx^{-1}\trace\parant{\hat\bfR^2} \to_\prob m_{t,2},\quad \text{and}\\[0.2cm]
    &\dx^{-1}\trace\parant{\tilde\bfR} \to_\prob m_{s,1}, \quad\dx^{-1}\trace\parant{\tilde\bfR^2} \to_\prob m_{s,2},
\end{align*}
where for $p \in \{s, t\}$, we define $m_{p, 1} = m(\lambda_p, \gamma_p)$ and $m_{p, 2} = - \frac{\partial m }{\partial \lambda}\big|_{\lambda_p, \gamma_p}.$

In the following proofs, we will also use the asymptotic freeness of independent Wishart random matrices \citep{voiculescu1991limit,capitaine2007strong}. See \citep[Section 3.4 and 4.4]{feier2012methods} for a brief overview of free probability theory for random matrix theory.

\section{Proof of Proposition~\ref{thm:ridge-ridge-weak-error}}
To prove this theorem, note that the vector $\hat\vbeta_t$ can be written as
\begin{align*}
    \hat\vbeta_t &= (\bfX^\top \bfX + \lambda_t n_t \bfI_{\dx})^{-1} \bfX^\top \bfy\\[0.2cm]
    &= (\bfX^\top \bfX + \lambda_t n_t \bfI_{\dx})^{-1} \bfX^\top \left(\bfX\vbeta_\star + \boldsymbol{\ep}\right)\\[0.2cm]
    &= (\hat\bSigma + \lambda_t \bfI_{\dx})^{-1}\hat\bSigma\,\vbeta_\star + n_t^{-1}(\hat\bSigma+\lambda_t\bfI_{\dx})^{-1} \bfX^\top \boldsymbol{\ep},
\end{align*}
where we have used the fact that $\bfy = \bfX\vbeta_\star + \boldsymbol{\ep}$.
Thus, recalling the definition of $\hat\bfSigma$  and $\hat\bfR$ from the Section~\ref{sec:prelim}, we can write
\begin{align*}
    \hat\vbeta_t - \vbeta_\star &= (\hat\bfR\,\hat\bSigma- \bfI_{\dx})\,\vbeta_\star + n_t^{-1}\hat\bfR\, \bfX^\top \boldsymbol{\ep}\end{align*}
The test error of the teacher model is given by $\mathcal{L}_t = \sigma_\ep^2 + \|\hat\vbeta - \vbeta_\star\|_2^2$ in which
\begin{align*}
    \|\hat\vbeta_t - \vbeta_\star\|_2^2 = \vbeta_\star^\top (\hat\bfR\,\hat\bSigma- \bfI_{\dx})^\top(\hat\bfR\,\hat\bSigma- \bfI_{\dx}) \vbeta_\star + n_t^{-2} \boldsymbol{\ep}^\top\parant{\bfX\,\hat\bfR^2\bfX^\top}\boldsymbol{\ep} + o_{\prob}(1),
\end{align*}
where we have used the Hanson-Wright inequality  and the facts that $\boldsymbol{\ep}$ and $\vbeta_\star$ are independent mean zero random vectors. Again, by using the Hanson-Wright inequality and recalling that $(\boldsymbol{\ep}, \vbeta_\star)$ is independent of other sources of randomness in the problem, we can further simplify the above expression to arrive at
\begin{align*}
    \|\hat\vbeta_t - \vbeta_\star\|_2^2 &\to_{\prob} \dx^{-1} \trace\bracket{(\hat\bfR\,\hat\bSigma- \bfI_{\dx})^\top(\hat\bfR\,\hat\bSigma- \bfI_{\dx}) }+ \sigma_\ep^2 \gamma_t \dx^{-1}\trace\bracket{\hat\bSigma\,\hat\bfR^2}
\end{align*}
Note that $\hat\bfR \hat\bSigma = \bfI_{\dx} - \lambda_t \hat\bfR$. Thus, denoting the eigenvalues of $\hat\bSigma$ by $\{\sigma_k\}_{k = 1}^{d}$, we can use the Marchenko-Pastur Theorem for covariance matrices to write
\begin{align*}
    \dx^{-1} &\trace\bracket{(\hat\bfR\,\hat\bSigma- \bfI_{\dx})^\top(\hat\bfR\,\hat\bSigma- \bfI_{\dx}) } = \lambda_t^2 \dx^{-1} \trace\bracket{\hat\bfR^2} \to_\prob\lambda_t^2 m_{t,2}.
\end{align*}
Similarly, we have
\begin{align*}
    \dx^{-1}\trace\bracket{\hat\bSigma\,\hat\bfR^2} &= \dx^{-1}\trace\bracket{(\hat\bfR - \lambda_t \hat\bfR^2)} \to_\prob m_{t,1} - \lambda_t m_{t,2}.
\end{align*}
Putting these together yields
\begin{align*}
    \|\hat\vbeta_t - \vbeta_\star\|_2^2 
    &\to_\prob \lambda_t^2 m_{t,2} + \sigma_\ep^2 \gamma_t \parant{m_{t,1} - \lambda_t m_{t,2}}.\\[0.2cm]
    &=\lambda_t m_{t,2}\parant{\lambda_t - \sigma_\ep^2 \gamma_t} + \sigma_\ep^2 \gamma_t m_{t,1}.
\end{align*}
Plugging this into the expression of $\mathcal{L}_t$ gives
\begin{align*}
    \mathcal{L}_t \to_\prob \sigma_\ep^2 + \lambda_t m_{t,2}\parant{\lambda_t - \sigma_\ep^2 \gamma_t} + \sigma_\ep^2 \gamma_t m_{t,1},
\end{align*}
which concludes the proof for Proposition~\ref{thm:ridge-ridge-weak-error}.
\section{Proof of Theorem~\ref{thm:L_s-L_t}}
Recalling that $\tilde\bfy = \tilde\bfX\hat\vbeta_t$ and $\hat\vbeta_t = (\bfX^\top\bfX + \lambda_t n_t \bfI_{\dx})^{-1}\bfX^\top\bfy$, the vector $\hat\vbeta_s$ can be written as
\begin{align*}
    \hat\vbeta_s &= (\tilde{\bfX}^\top\tilde{\bfX} + \lambda_s n_s \bfI_{\dx})^{-1}\tilde\bfX^\top \tilde\bfy\\[0.2cm]
    &= (\tilde{\bfX}^\top\tilde{\bfX} + \lambda_s n_s \bfI_{\dx})^{-1}\,\tilde\bfX^\top \tilde\bfX \, (\bfX^\top \bfX + \lambda_t n_t \bfI_{\dx})^{-1}\bfX^\top \bfy\\[0.2cm]
    &= (\tilde{\bfX}^\top\tilde{\bfX} + \lambda_s n_s \bfI_{\dx})^{-1}\,\tilde\bfX^\top \tilde\bfX\, (\bfX^\top \bfX + \lambda_t n_t \bfI_{\dx})^{-1}\bfX^\top (\bfX \vbeta_\star + \boldsymbol{\ep}),
\end{align*}
where we have used $\bfy = \bfX\vbeta_\star+\boldsymbol{\ep}$. Using the definition of the matrices $\tilde\bSigma, \hat\bfR, \hat\bSigma, \tilde\bfR$ from Section~\ref{sec:prelim}, we can simplify the above expression as
\begin{align*}
    \hat\vbeta_s - \vbeta_\star = (\tilde\bfR \tilde\bSigma\hat\bfR\hat\bSigma-\bfI_{\dx})\,\vbeta_\star + n_t^{-1} \tilde\bfR \tilde\bSigma\hat\bfR \bfX^\top \boldsymbol{\ep}.
\end{align*}
Hence, we have
\begin{align*}
    \| \hat\vbeta_s - \vbeta_\star\|_2^2 &= \vbeta_\star^\top\,(\tilde\bfR \tilde\bSigma\hat\bfR\hat\bSigma-\bfI_{\dx})^\top(\tilde\bfR \tilde\bSigma\hat\bfR\hat\bSigma-\bfI_{\dx})\,\vbeta_\star + n_t^{-2} \boldsymbol{\ep}^\top \bfX \hat\bfR \tilde\bSigma\tilde\bfR^2\tilde\bSigma\hat\bfR\bfX^\top \boldsymbol{\ep} + o_{\prob}(1)\\[0.2cm]
    &= \dx^{-1} \trace\bracket{(\tilde\bfR \tilde\bSigma\hat\bfR\hat\bSigma-\bfI_{\dx})^\top(\tilde\bfR \tilde\bSigma\hat\bfR\hat\bSigma-\bfI_{\dx})}\\ &\hspace{2cm}+ \sigma_\ep^2 \gamma_t \dx^{-1} \trace\bracket{(\hat\bfR\hat\bSigma\hat\bfR)(\tilde\bSigma\tilde\bfR^2\tilde\bSigma)} + o_{\prob}(1),
\end{align*}
where we have used the Hanson-Wright inequality and the facts that $\boldsymbol{\ep}$ and $\vbeta_\star$ are independent of other sources of randomness in the problem. Now, it remain to analyze the following traces in the high-dimensional proportional limit:
\begin{align*}
    &\tau_1 = \dx^{-1} \trace\bracket{(\tilde\bfR \tilde\bSigma\hat\bfR\hat\bSigma-\bfI_{\dx})^\top(\tilde\bfR \tilde\bSigma\hat\bfR\hat\bSigma-\bfI_{\dx})},\quad \text{and}\\
    &\tau_2 = \dx^{-1}\trace\bracket{(\hat\bfR\hat\bSigma\hat\bfR)(\tilde\bSigma\tilde\bfR^2\tilde\bSigma)}.
\end{align*}

\paragraph{Analysis of $\tau_1$.} Note that $\tilde\bfR \tilde\bSigma = \bfI_{\dx} - \lambda_s \tilde\bfR$ and $\hat\bfR \hat\bSigma = \bfI_{\dx} - \lambda_t \hat\bfR$, which gives
\begin{align*}
    \tilde\bfR\tilde\bSigma\hat\bfR\hat\bSigma - \bfI_{\dx} = - \lambda_s \tilde\bfR - \lambda_t \hat\bfR + \lambda_s \lambda_t \tilde\bfR\hat\bfR.
\end{align*}
Plugging this into the expression for $\tau_1$, we arrive at
\begin{align*}
    \tau_1 = \lambda_s^2 \dx^{-1}\mathrm{Tr}\left[\tilde\bfR^2\right] &+ \lambda_t^2 \dx^{-1}\mathrm{Tr}\left[\hat\bfR^2\right] + \lambda_s^2\lambda_t^2 \dx^{-1}\mathrm{Tr}\left[\hat\bfR^2 \tilde\bfR^2\right] \\[0.1cm]
    &+ 2 \lambda_s \lambda_t \dx^{-1}\mathrm{Tr}\left[\tilde\bfR \hat\bfR\right]  - 2 \lambda_s \lambda_t^2 \dx^{-1}\mathrm{Tr}\left[\tilde\bfR\hat\bfR^2\right] - 2\lambda_s^2 \lambda_t \dx^{-1}\mathrm{Tr}\left[\hat\bfR \tilde\bfR^2\right]
\end{align*}
The limiting values of these traces can be computed as follows:
\begin{itemize}
    \item \textbf{Term 1 and 2}: Let $\tilde{s}_k$ be the $k$-th eigenvalue of $\tilde\bfSigma$. We can use the arguments in Section~\ref{sec:prelim} to write
    \begin{align*}
    \dx^{-1}\mathrm{Tr}\left[\tilde\bfR^2\right]  \to_\prob m_{s,2},
    \end{align*}
    where $m_{s,2}$ is defined in Definition~\ref{def:m_Def}. Similarly, for the second term, we have
    \begin{align*}
    \dx^{-1}\mathrm{Tr}\left[\hat\bfR^2\right] \to_\prob m_{t,2},
    \end{align*}
    where the term $m_{t,2}$ is defined in Definition~\ref{def:m_Def}.
    
    \item \textbf{Term 3, 4, 5, and 6:} To analyze $\dx^{-1}\mathrm{Tr}\left[\hat\bfR^2 \tilde\bfR^2\right]$, we can use the asymptotic freeness of independent Wishart random matrices \citep{voiculescu1991limit} (see also \cite{capitaine2007strong}), and the Stone-Weierstrass theorem to approximate the function $f(x) = {(x+s)^{-2}}$ using polynomials, to write
    \begin{align*}
        \dx^{-1}\mathrm{Tr}\left[\hat\bfR^2 \tilde\bfR^2\right] &= \parant{\dx^{-1}\mathrm{Tr}\left[\hat\bfR^2\right]}\parant{\dx^{-1}\mathrm{Tr}\left[\tilde\bfR^2\right]} + o_\prob(1)\\[0.2cm]
        &\to_\prob m_{s,2} m_{t,2}.
    \end{align*}
    Similarly, for the remaining terms, we have
    \begin{align*}
        &\dx^{-1} \mathrm{Tr}\left[\hat\bfR \tilde\bfR\right] \to_\prob m_{t,1} m_{s,1}\\[0.2cm]
        &\dx^{-1} \mathrm{Tr}\left[\hat\bfR \tilde\bfR^2\right] \to_\prob m_{t,1} m_{s,2},\quad \text{and}\\[0.2cm]
        &\dx^{-1} \mathrm{Tr}\left[\hat\bfR^2 \tilde\bfR\right] \to_\prob m_{t,2} m_{s,1}.
    \end{align*}
\end{itemize}
Putting these together, we arrive at the following conclusion:
\begin{align*}
    \tau_1 &\to_{\prob} \lambda_t^2\, m_{t,2} + \lambda_s^2\, m_{s,2} + \lambda_s^2 \lambda_t^2 m_{s,2} m_{t,2}\\ &\hspace{3cm}+ 2\lambda_t\lambda_s m_{t,1} m_{s,1} - 2 \lambda_s \lambda_t^2 m_{s,1} m_{t,2} - 2 \lambda_s^2 \lambda_t m_{s,2} m_{t,1}.
\end{align*}

\paragraph{Analysis of $\tau_2$.} Again, we can use the identities $\tilde\bfR \tilde\bSigma = \bfI_{\dx} - \lambda_s \tilde\bfR$ and $\hat\bfR \hat\bSigma = \bfI_{\dx} - \lambda_t \hat\bfR$, and the asymptotic freeness of independent Wishart random matrices to write
\begin{align*}
    \tau_2 &=\dx^{-1} \trace\bracket{(\hat\bfR\hat\bSigma\hat\bfR)(\tilde\bSigma\tilde\bfR^2\tilde\bSigma)}\\ 
    &= \dx^{-1} \mathrm{Tr}\bracket{\parant{\hat\bfR - \lambda_t \hat\bfR^2}\parant{\bfI_{\dx} - \lambda_s \tilde\bfR}^2}\\
    &= \dx^{-1} \mathrm{Tr}\bracket{\parant{\hat\bfR - \lambda_t \hat\bfR^2}}\cdot \dx^{-1}  \mathrm{Tr}\bracket{\parant{\bfI_{\dx} - \lambda_s \tilde\bfR}^2}
\end{align*}
Hence, we can use an argument similar to the argument for the term 3 above to write
\begin{align*}
    \tau_2    &\to_{\prob}   \Bigl(m_{t,1} - \lambda_t\, m_{t,2}\Bigr) \Bigl(1 + \lambda_s^2 m_{s,2} - 2\lambda_s m_{s,1}\Bigr).
\end{align*}

\paragraph{Putting everything together.} Putting the conclusions above together, the limiting loss difference can be written as
\begin{align*}
    \mathcal{L}_s - \mathcal{L}_t &= \sigma_\ep^2 + \tau_1 + \tau_2 + o_\prob(1)\\[0.2cm]
    &\to_{\prob} 
    (\sigma_\ep^2 \gamma_t - \lambda_t)\bracket{\parant{m_{t,1} - \lambda_t m_{t,2}}\parant{\lambda_s^2 m_{s,2} - 2 \lambda_s m_{s,1}}} + \lambda_s^2 m_{s,2}\parant{1 - \lambda_t m_{t,1}} := \Delta
\end{align*}
which completes the proof of the theorem.
\section{Proof of Theorem~\ref{thm:W2S-RidgeRidge}}
From Theorem~\ref{thm:ridge-ridge-strong-error}, we know that in the high-dimensional proportional limit, we have
\begin{align*}
    \mathcal{L}_s - \mathcal{L}_t 
    &\to_{\prob} 
    \Delta = (\sigma_\ep^2 \gamma_t - \lambda_t)\bracket{\parant{m_{t,1} - \lambda_t m_{t,2}}\parant{\lambda_s^2 m_{s,2} - 2 \lambda_s m_{s,1}}} + \lambda_s^2 m_{s,2}\parant{1 - \lambda_t m_{t,1}}.
\end{align*}
To study the $(\lambda_s, \lambda_t)$ pairs for which the strong model can outperform the teacher, we study the non-zero roots of the nonlinear equation $\Delta = 0$, which are the solutions to
\begin{align}
    \label{eq:main_eq}
    (\sigma_\ep^2 \gamma_t - \lambda_t) \parant{\frac{m_{t,1} - \lambda_t m_{t,2}}{ \lambda_t m_{t,1} - 1}} = \frac{\lambda_s m_{s,2}}{\lambda_s m_{s,2}  -2 m_{s,1}}
\end{align}
where the left-hand side is a function of teacher parameters, and the right-hand side is a function of the student parameters.

\paragraph{Right-Hand Side.} First, recall from Definition~\ref{def:m_Def} that 
\begin{align*}
    m_{s,1} = m(\lambda_s; \gamma_s), \quad m_{s,2} = -\frac{\partial}{\partial \lambda} m(\lambda; \gamma)|_{\lambda_s, \gamma_s}.
\end{align*}
in which
\begin{align*}
    m(\lambda; \gamma) =  -\frac{1}{2\gamma \lambda} \bracket{1 - \gamma + \lambda - \sqrt{(1 + \gamma + \lambda)^2  -4\gamma}}.
\end{align*}
Thus, after simple Algebraic manipulations, we can write the right-hand side of \eqref{eq:main_eq} as
\begin{align*}
    H_1 &:= \frac{\lambda_s m_{s,2}}{\lambda_s m_{s,2}  -2 m_{s,1}}\\ &= \frac{\lambda_s}{1 + \gamma_s^2 + 2\gamma_s (\lambda_s - 1) + \lambda_s + \lambda_s^2 + (1 - \gamma_s - \lambda_s)\sqrt{- 4\gamma_s + (1+\gamma_s+\lambda_s)^2}} \leq 0.
\end{align*}
We plot $H_1$ as a function of $\lambda_s$ for different values of $\gamma_s$ in  Figure~\ref{fig:appendix_figure_1}. It can be seen that this function $H_1$ undergoes a phase transition at $\gamma_s=1$. When $\gamma_s<1$, the equation $H_1(\lambda, \gamma) = c$ for any $c\leq 0$ has only one solution for $\lambda_s$ and the function is strictly decreasing. However, for $\gamma_s > 0$, the function $H_1$ is non-monotone and  $H_1(\lambda, \gamma) = c$ can have either none, one, or two solutions for $\lambda_s$.  We will algebraically prove this fact below.

After simple algebraic manipulations, we find that setting $H_1(\lambda_s, \gamma_s) = c$ for some $c \leq 0$, we can have two potential solutions for $\lambda_s$ given by 
\begin{align}
    \label{eq:2_sol}
    \lambda_s^{\pm}(c,\gamma_s) = - \frac{1 + 2c + 5c^2 + 4c \gamma_s + 4c^2 \gamma_s \pm (1+3c) \sqrt{(c-1)^2 + 8c(1+c)\gamma_s}}{4c(1+c)}.
\end{align}
By inspecting the solutions, we find that when $\gamma_s\leq 1$, only the solution $\lambda_s^{-}$ is valid. However, when $\gamma_s > 1$, both $\lambda_s^+$ and $\lambda_s^-$ become valid solutions as long as
\begin{align}
    \label{eq:c_bound}
    c < \frac{1}{1 - 4\gamma_s + \sqrt{\gamma_s(\gamma_s - 1)}},
\end{align}
which ensures that $(c-1)^2 + 8c (1+c)\gamma_s \geq 0$.

Figure~\ref{fig:appendix_figure_2} shows the values of $\lambda_s$ for which $H_2(\lambda_s, \gamma_s) = c$ as a function of $c$, for different values of $\gamma_s$. The case of $\gamma_s = 1$ is shown with a blue dashed line. When $\gamma_s >1$,  two solutions can exist for $\lambda_s$. In this case, the largest $c$ for which two solutions exits are given by \eqref{eq:c_bound}. However, for $\gamma_s<1$, there is always one solution.

\begin{figure}
    \centering
    \includegraphics[width=0.7\linewidth]{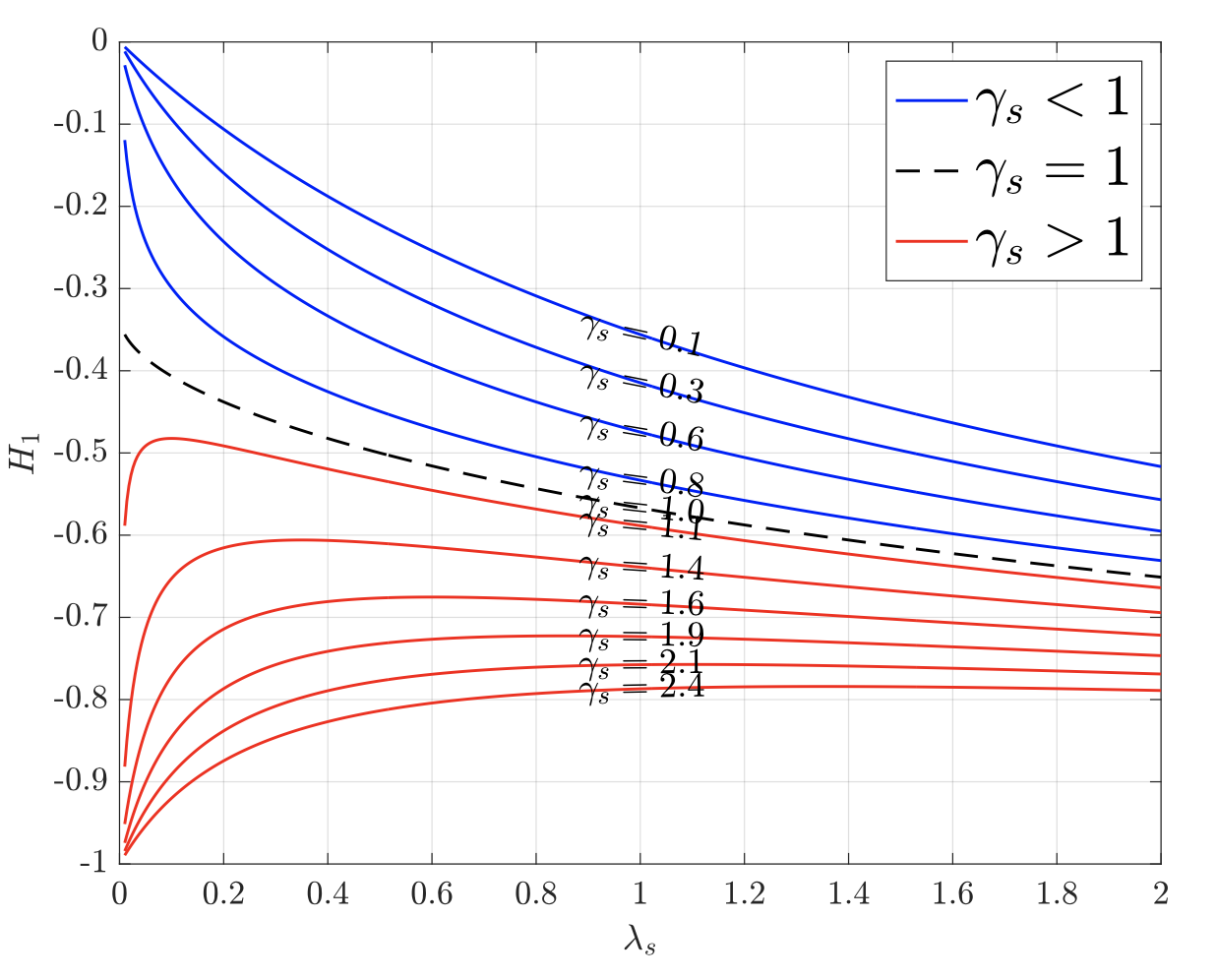}
    \caption{The function $H_1$, as a function of $\lambda_s$, for different values of $\gamma_s$. For the case with $\gamma_s >1$, the equation $H_1(\lambda_s) =c$ with $c<0$ can have two solutions. However, for $\gamma_s<1$, there is always one solution.}
    \label{fig:appendix_figure_1}
\end{figure}

\begin{figure}
    \centering
    \includegraphics[width=0.7\linewidth]{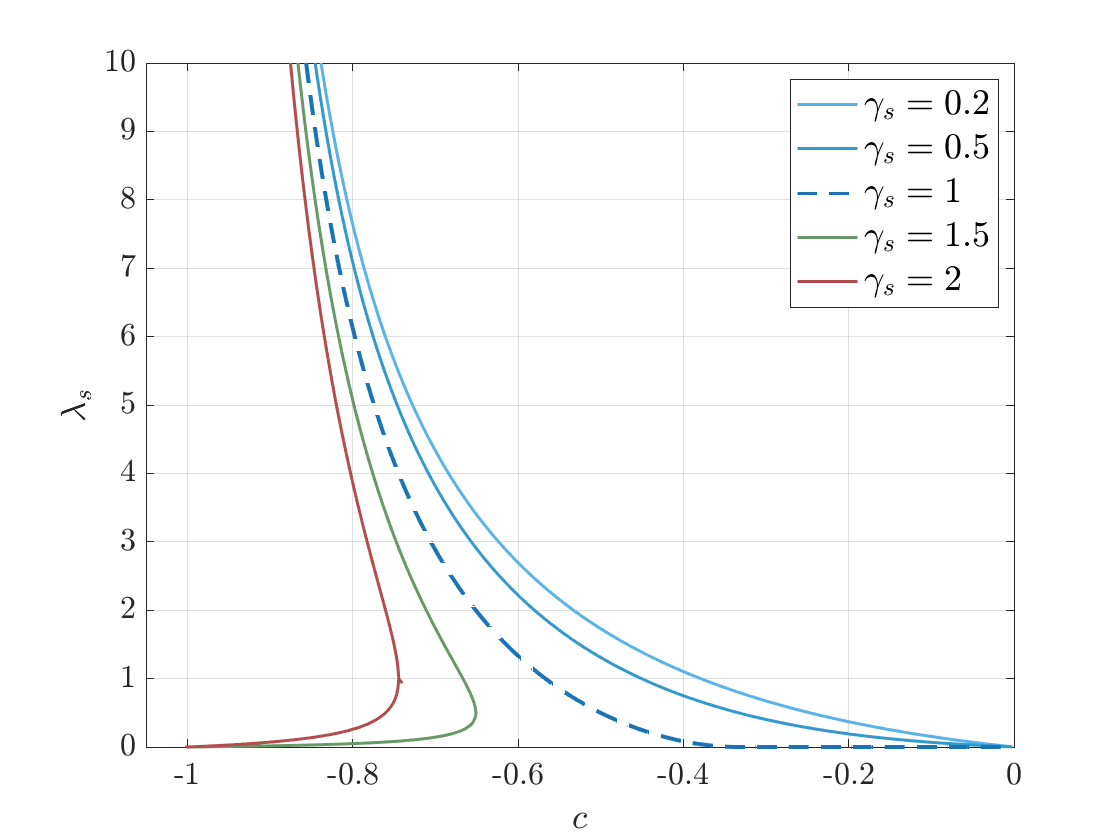}
    \caption{The parameters $\lambda_s$ for which $H_1(\lambda_s) = c$, for different values of $c$. Two solutions can exist when $\gamma_s > 1$. However, for $\gamma_s<1$, only one solution can exist.}
    \label{fig:appendix_figure_2}
\end{figure}

\paragraph{Left-Hand Side.} We will now turn our attention to the left-hand side of \eqref{eq:main_eq}. Recall that 
\begin{align*}
    m_{t,1} = m(\lambda_t; \gamma_t), \quad m_{t,2} = - \frac{\partial}{\partial \lambda}m(\lambda;\gamma)|_{\lambda_t, \gamma_t}.
\end{align*}
We plug these into the right-hand side of \eqref{eq:main_eq}, and after simplification, we arrive at
\begin{align*}
    &H_2 := (\sigma_\ep^2 \gamma_t - \lambda_t) \parant{\frac{m_{t,1} - \lambda_t m_{t,2}}{ \lambda_t m_{t,1} - 1}} = \frac{\sigma_\ep^2 \gamma_t - \lambda_t}{\sqrt{-4\gamma_t +(1+\gamma_t + \lambda_t)^2}}.
\end{align*}
Hence, $\sigma_\ep^2 \gamma_t - \lambda_t$ determines the sign of $H_2$.

\paragraph{Putting Everything Together.} After characterizing the functions $H_1$ and $H_2$, we can use these characterizations to prove the theorem.

\begin{itemize}
    \item When $\sigma_\ep^2 \gamma_t< \lambda_t$, we have $H_2 > 0$. Noting that $H_1 \leq 0$, we find that in this case, there is no solution for $\lambda_s$ such that $H_1 = H_2$.
    \item When $\sigma_\ep^2 \gamma_t \geq \lambda_t$, we have $H_2 \leq 0$. Based on the analysis above for the right-hand side of \eqref{eq:main_eq}, two cases can happen:
    \begin{itemize}
        \item If $\gamma_s < 1$, we always have a solution $\bar\lambda$ given by $\lambda_s^{-}(c, \gamma_s)$ from \eqref{eq:2_sol} with
        \begin{align}
            \label{eq:c_our_case}
            c =  \frac{\sigma_\ep^2 \gamma_t - \lambda_t}{\sqrt{-4\gamma_t +(1+\gamma_t + \lambda_t)^2}}
        \end{align}
        such that $H_1 = H_2$.
        \item If $\gamma_s > 1$, as long \eqref{eq:c_bound} holds; i.e., 
        \begin{align*}
            c =  \frac{\sigma_\ep^2 \gamma_t - \lambda_t}{\sqrt{-4\gamma_t +(1+\gamma_t + \lambda_t)^2}} \leq \frac{1}{1 - 4\gamma_s + \sqrt{\gamma_s(\gamma_s - 1)}},
        \end{align*}
        two solutions exists for $\lambda_s$ that satisfy for $H_1 = H_2$. The solutions are given by $(\lambda_s^-, \lambda_s^+)$ from \eqref{eq:2_sol}. Consequently,
        we have $\Delta < 1$ as long as $\lambda_s \in (\lambda_s^-, \lambda_s^+)$.
    \end{itemize}
\end{itemize}
These together finish the proof.

\section{Non-Monotone Student Test Error Curves}
\label{sec:dd}
In this section, we study the the test error $\mathcal{L}_s$ as a function of $\gamma_s$ and show that the student model also exhibits the \textit{double descent} phenomenon,  where we can see a second bias-variance tradeoff in the
test error beyond the interpolation limit \citep{belkin2019reconciling}; i.e., the test error initially increases as $\dx$ is increased, and then decreases. This is in line with findings in different linear regression settings such as standard ridge/ridgeless regression  \citep{hastie2022surprises,nakkiranoptimal}, ridge regression with correlated samples \citep{atanasov2024risk,moniri2024asymptotics}, and weighted ridge regression \citep{wu2020optimal}.

\begin{figure}
    \centering
    \includegraphics[width=0.7\linewidth]{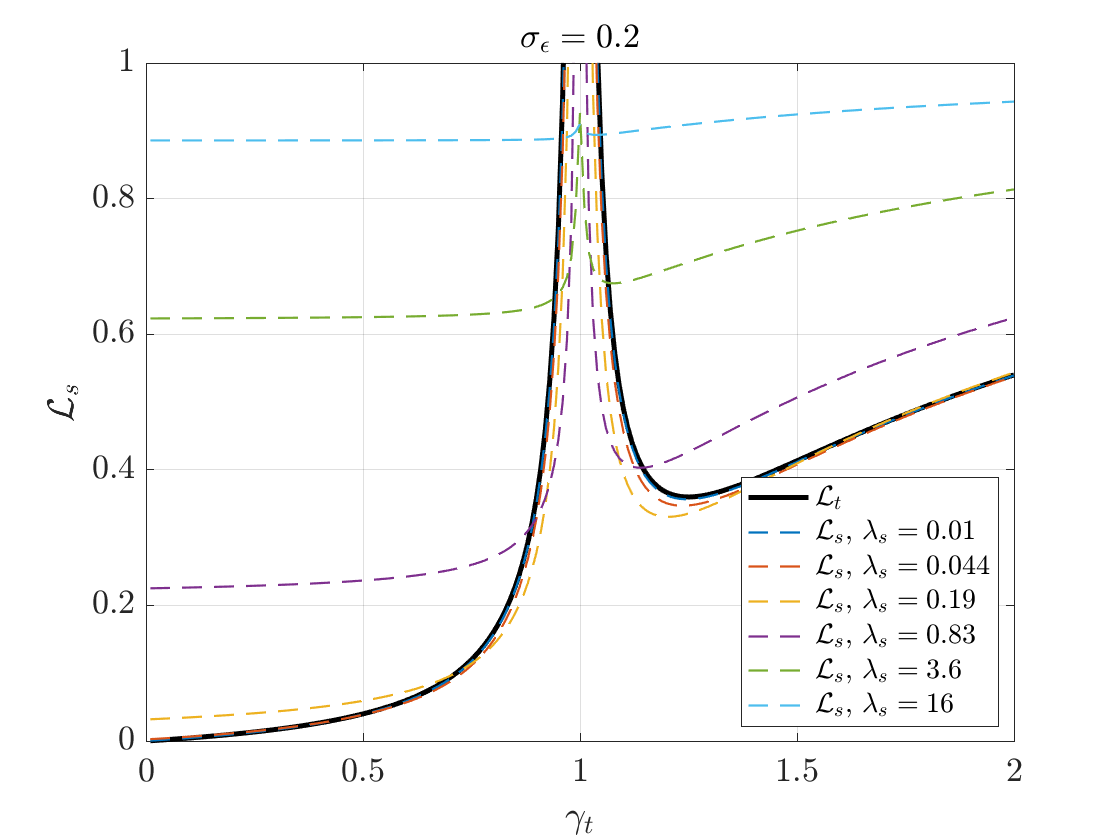}
    \caption{The test error of the student model $\mathcal{L}_s$ as a function of $\gamma_t$ for $\sigma_\ep = 0.2, \gamma_s = 0.1$, $\lambda_t \to 0$ (ridgeless), and different values of $\lambda_s$.}
    \label{fig:app_fig3}
\end{figure}
\begin{figure}
    \centering
    
    \includegraphics[width=0.7\linewidth]{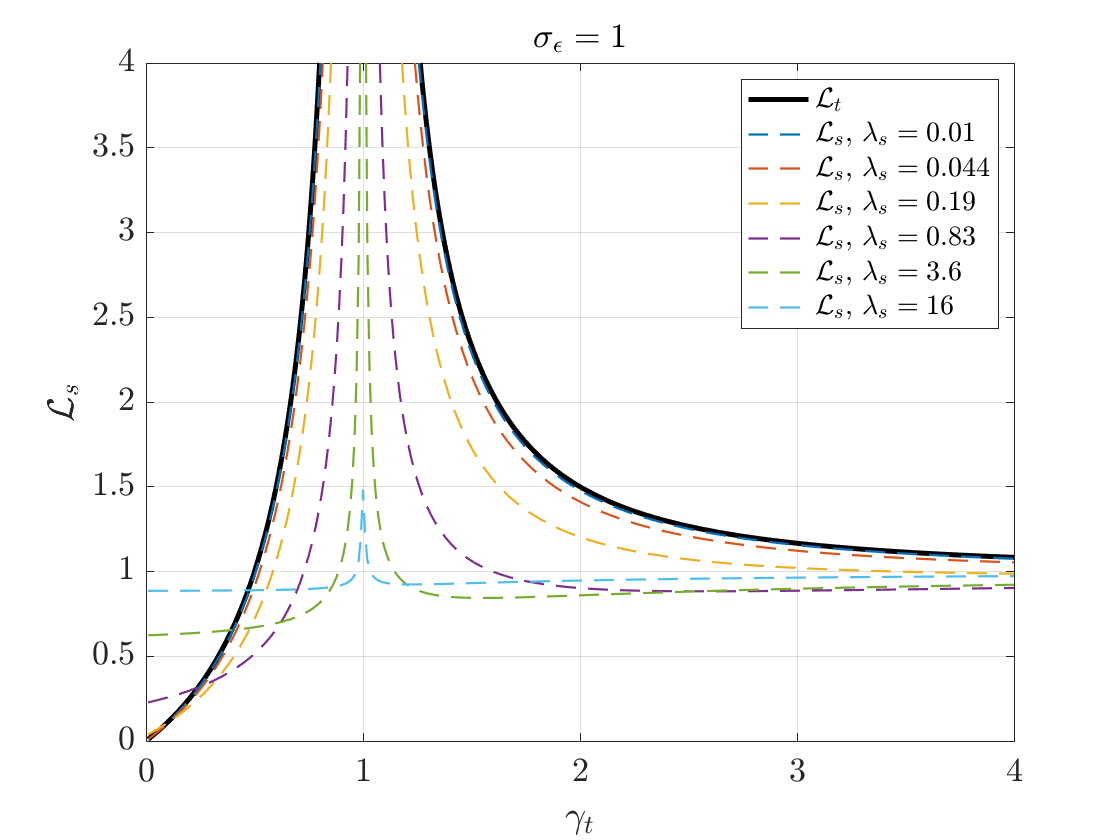}
    \caption{The test error of the student model $\mathcal{L}_s$ as a function of $\gamma_t$ for $\sigma_\ep = 1, \gamma_s = 0.1$, $\lambda_t \to 0$ (ridgeless), and different values of $\lambda_s$.}
    \label{fig:app_fig4}
\end{figure}

\begin{figure}
        \centering
    \includegraphics[width=0.7\linewidth]{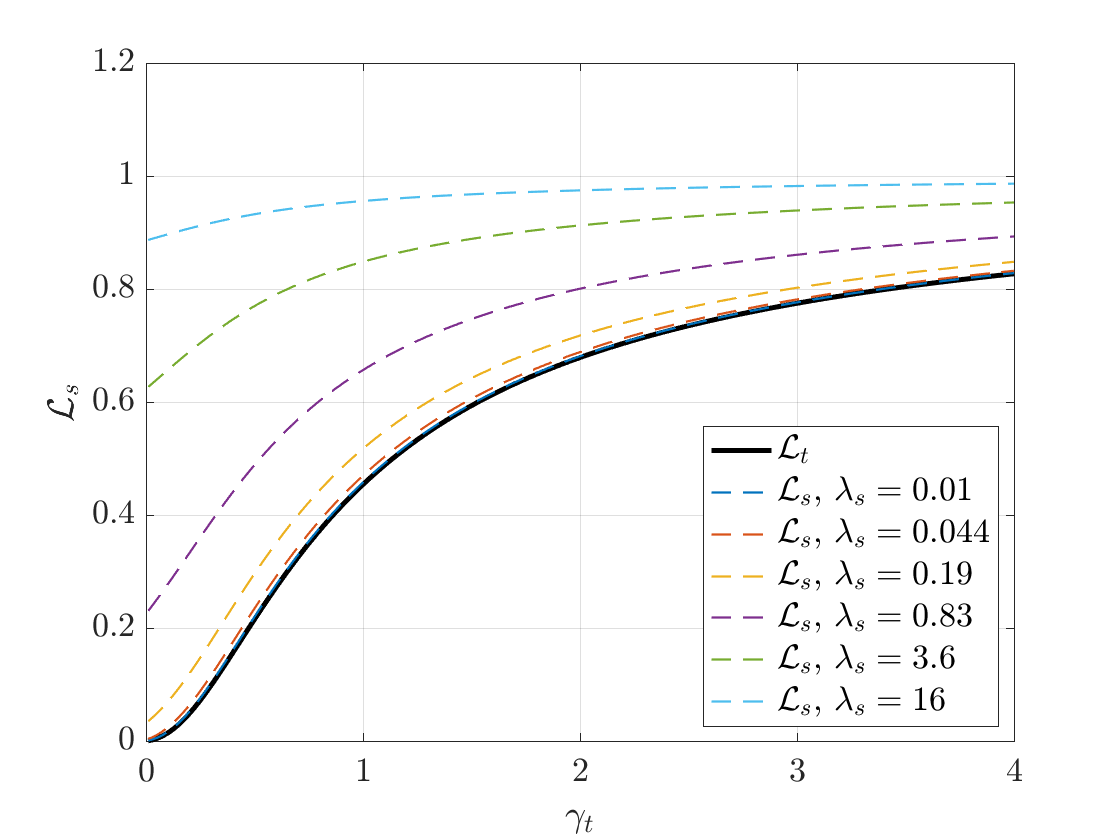}
    \caption{The test error of the student model $\mathcal{L}_s$ as a function of $\gamma_t$ for $\sigma_\ep = 1, \gamma_s = 0.1$, $\lambda_t = \lambda_t^\star = \sigma_\ep^2 \gamma_t$ (optimal ridge regularizer), and different values of $\lambda_s$.}
    \label{fig:app_fig5}
\end{figure}
\begin{figure}
    \centering
    \includegraphics[width=0.7\linewidth]{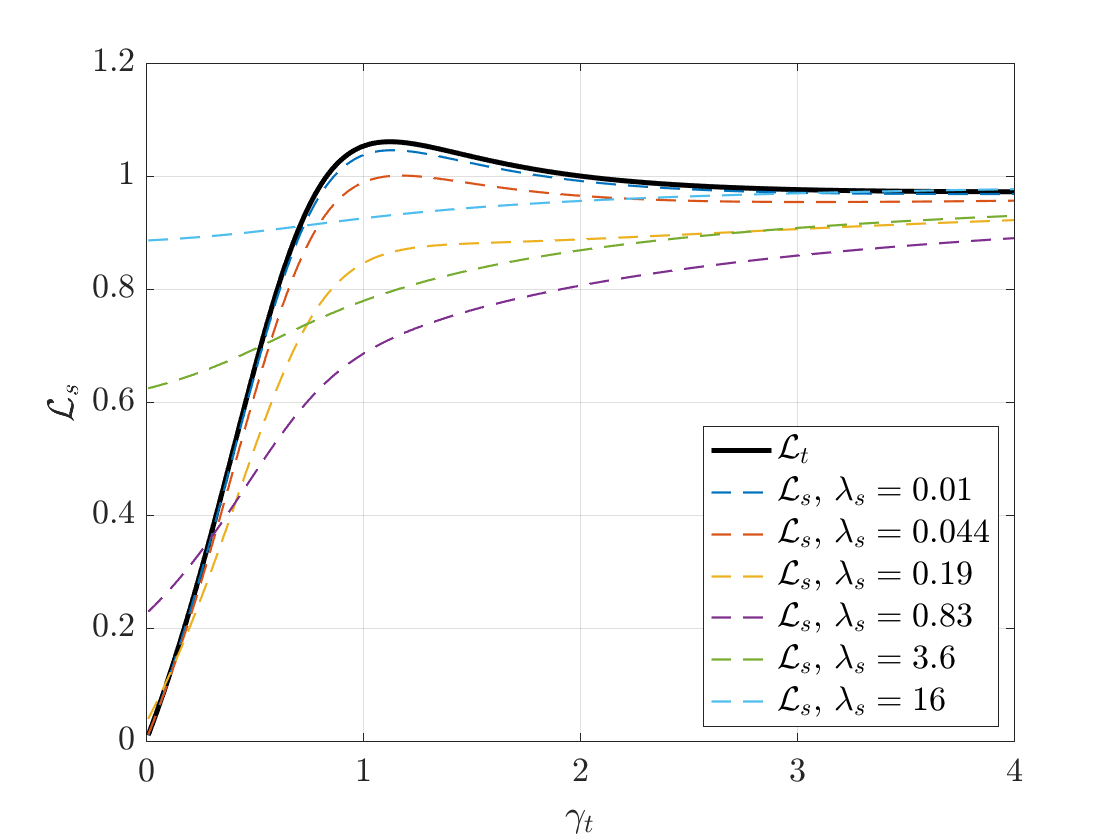}
    \caption{The test error of the student model $\mathcal{L}_s$ as a function of $\gamma_t$ for $\sigma_\ep = 1, \gamma_s = 0.1$, $\lambda_t = 0.15 \lambda_t^\star = 0.15 \sigma_\ep^2 \gamma_t$, and different values of $\lambda_s$.}
    \label{fig:app_fig6}
\end{figure}

We use the theoretical prediction from Theorem~\ref{thm:L_s-L_t} and plot the test error of the student model $\mathcal{L}_s$ as a function of $\gamma_t$. In Figure~\ref{fig:app_fig3}, we set the teacher regularizer to $\lambda_t \to 0$ (ridgeless regression) and set $\gamma_s = 0.1, \sigma_\ep = 0.2$. We consider the same setting but with $\sigma_\ep = 1$ in Figure~\ref{fig:app_fig4}. We observe that in both settings, the student has a non-monotone behavior for different values of $\lambda_s$, with a peak happening at the interpolation threshold $\gamma_t = 1$.

In Figure~\ref{fig:app_fig5}, we consider the same setting as Figure~\ref{fig:app_fig4} with $\gamma_s = 0.1, \sigma_\ep = 1$ and set $\lambda_t = \lambda_t^\star = \sigma_\ep^2 \gamma_t$ (i.e., the optimal ridge regularizer). We observe that optimal regularization of the teacher model completely mitigates double descent in the student model; i.e., the test loss of the student model becomes monotone as a function of $\gamma_t$. This is in line with the findings of \citep{nakkiranoptimal} for standard ridge regression. Also, we observe that as predicted by Theorem~\ref{thm:W2S-RidgeRidge}, the student can never ouperform the teacher.

In Figure~\ref{fig:app_fig6}, we consider the same setting as~\ref{fig:app_fig5} but we set $\lambda_t = 0.15\lambda_t^\star$. We observe the in this setting where the teacher is still under-regularized, the student model still exhibits a non-monotone test error as a fucntio of $\gamma_t$.

\section{Proof of Theorem~\ref{thm:general_ridge}}
First, recall from Definition~\ref{spiked_gamma} that
\begin{align*}
    \bfGamma = \bfI_{\dx} + \dx\, \hat\vbeta \hat\vbeta^\top.
\end{align*}
Using the Sherman-Morrison formula, we have
\begin{align*}
    \bfGamma^{-1} = \bfI_{\dx} - \frac{\dx \hat\vbeta\hat\vbeta^\top}{1 + \dx \hat\vbeta^\top \hat\vbeta} = \bfI_{\dx} - \parant{1+o(1)} \, \hat\vbeta\hat\vbeta^\top.
\end{align*}
In the setting of this theorem, we have $\hat\vbeta_s = (\tilde\bSigma + \lambda_s \bfGamma^{-1})^{-1} \tilde\bfX^\top \tilde\bfy/n_s.$ Thus, we now focus on the generalized resolvent matrix $(\tilde\bSigma + \lambda_s \bfGamma^{-1})^{-1}$. Using the Sherman-Morrison formula, this matrix can be expanded as
\begin{align*}
    (\tilde\bSigma + \lambda_s \bfGamma^{-1})^{-1} &= (\tilde\bSigma + \lambda_s \bfI_{\dx} -  \lambda_s\hat\vbeta\hat\vbeta^\top)^{-1} = \tilde\bfR + \frac{\lambda_s}{1 - \lambda_s \hat\vbeta^\top \tilde\bfR\hat\vbeta}\;\tilde\bfR\;\hat\vbeta\hat\vbeta^\top\tilde\bfR.
\end{align*}
For simplicity, we define the scaler
\begin{align*}
    \nu := \frac{\lambda_s}{1 -  \lambda_s \hat\vbeta^\top \tilde\bfR\,\hat\vbeta}.
\end{align*}
Hence, plugging the Sherman-Morrison expression back into the expression for $\hat\vbeta_s$, we have
\begin{align*}
    \hat\vbeta_s &= \bracket{\tilde\bfR + \nu\, \tilde\bfR\; \hat\vbeta\hat\vbeta^\top \tilde\bfR} \tilde\bfX\tilde\bfy/n_s = \bracket{\tilde\bfR + \nu\, \tilde\bfR\; \hat\vbeta\hat\vbeta^\top \tilde\bfR} \tilde\bSigma\hat\vbeta_t\\[0.2cm]
    &=\bracket{\tilde\bfR + \nu\, \tilde\bfR\; \hat\vbeta\hat\vbeta^\top \tilde\bfR} \tilde\bSigma\parant{\bfR\hat\bSigma\vbeta_\star + \bfR \bfX^\top \boldsymbol{\ep}/n_t}\\[0.2cm]
    &= \tilde\bfR\tilde\bSigma\bfR\hat\bSigma\vbeta_\star + \tilde\bfR \tilde\bSigma \bfR \frac{\bfX^\top \boldsymbol{\ep}}{n_t} + \nu \bracket{\tilde\bfR\hat\vbeta\hat\vbeta^\top \tilde\bfR \tilde\bSigma\hat\bfR\hat\bSigma \vbeta_\star + \tilde\bfR \hat\vbeta\hat\vbeta^\top \tilde\bfR \tilde\bSigma \hat\bfR\frac{\bfX^\top \boldsymbol{\ep}}{n_t}}.
\end{align*}
We define $\bft_1, \bft_2, \bft_3 \in \R^{\dx}$ as 
\begin{align*}
    &\bft_1 = \tilde\bfR\tilde\bSigma\bfR\hat\bSigma\vbeta_\star - \vbeta_\star, \quad \bft_2 =  \tilde\bfR \tilde\bSigma \bfR \frac{\bfX^\top \boldsymbol{\ep}}{n_t},\\
    &\bft_3 =  \nu \bracket{\tilde\bfR\hat\vbeta\hat\vbeta^\top \tilde\bfR \tilde\bSigma\hat\bfR\hat\bSigma \vbeta_\star + \tilde\bfR \hat\vbeta\hat\vbeta^\top \tilde\bfR \tilde\bSigma \hat\bfR\frac{\bfX^\top \boldsymbol{\ep}}{n_t}}.
\end{align*}
With this definition, $\mathcal{L}_s = \sigma_\ep^2 + \|\vbeta_s - \vbeta_\star\|_2^2$ can be written as
\begin{align*}
    \mathcal{L}_s &= \sigma_\ep^2 + \|\bft_1 + \bft_2 + \bft_3\|_2^2\\[0.2cm]
    &= \|\bft_1\|_2^2 + \|\bft_2\|_2^2 + \|\bft_3\|_2^2 + 2 \bft_1^\top \bft_2 + 2 \bft_2^\top \bft_3 + 2\bft_1^\top \bft_3.
\end{align*}
We will analyze each term separately:

\begin{itemize}
    \item The first and second terms $\|\bft_1\|_2^2+\|\bft_2\|_2^2$ have already been calculated in in the proof of Theorem~\ref{thm:ridge-ridge-strong-error}, we have
    \begin{align*}
        \|\bft_1\|_2^2+\|\bft_2\|_2^2 - \mathcal{L}_t \to_\prob \Delta,
    \end{align*}
    where $\Delta$ is defined in Theorem~\ref{thm:ridge-ridge-strong-error}.

    \item For the third term, we can write
    \begin{align*}
        \|\bft_3\|_2^2 = (\hat\vbeta^\top \tilde\bfR^2 \,\hat\vbeta) \cdot \parant{\frac{\lambda_s}{1 - \lambda_s \hat\vbeta^\top \tilde\bfR\, \hat\vbeta}}^2 \cdot \parant{\hat\vbeta^\top \tilde\bfR \tilde\bSigma \hat\bfR \hat\bSigma \vbeta_\star}^2.
    \end{align*}
    From the definition of $m_{s,1}$ and $m_{s,2}$ and using the Marchenko-Pastur theorem, we have $\hat\vbeta^\top \tilde\bfR^2 \,\hat\vbeta \to_\prob m_{s,2}$, and also $\hat\vbeta^\top \tilde\bfR \, \hat\vbeta \to_\prob m_{s,1}$. Also, we can write
    \begin{align*}
        \hat\vbeta^\top \tilde\bfR \tilde\bSigma \hat\bfR \hat\bSigma \vbeta_\star &= \zeta \dx^{-1} \mathrm{Tr}\parant{\tilde\bfR\tilde\bSigma\hat\bfR\hat\bSigma} = \zeta\, \dx^{-1} \mathrm{Tr}\parant{\tilde\bfR\tilde\bSigma} \cdot \dx^{-1} \mathrm{Tr}\parant{\hat\bfR\hat\bSigma} \\ 
        &\to_\prob \zeta\cdot  (1 - \lambda_s m_{s,1})\cdot\parant{1 - \lambda_t m_{t,1}},
    \end{align*}
    where $\zeta$ is defined in Assumption~\ref{assump:spiked_assumption}, and we have used the asymptotic freeness of independent Wishart random matrices \citep{voiculescu1991limit,capitaine2007strong}. Hence,
    \begin{align*}
        \|\bft_3\|_2^2\to_\prob \frac{\lambda_s^2 \zeta^2 m_{s,2}}{(1 - \lambda_s m_{s,1})^2}(1 - \lambda_s m_{s,1})^2(1 - \lambda_t m_{t,1})^2 = {\lambda_s^2 \zeta^2 m_{s,2}}(1 - \lambda_t m_{t,1})^2.
    \end{align*}
    \item For the fourth term, note that
    \begin{align*}
        2\bft_1^\top \bft_3 = 2 ((\tilde\bfR\tilde\bSigma\hat\bfR\hat\bSigma - \bfI_{\dx})\hat\bSigma\vbeta_\star)^\top  \tilde\bSigma\hat\bfR \bfX^\top \boldsymbol{\ep}/n_t \to_\prob 0,
    \end{align*}
    using the Hanson-Wright inequality and the fact that $\boldsymbol{\ep}$ is mean zero and independent of all other sources of randomness in the problem.

    \item Similarly to the fourth term, for the fifth term we can write
    \begin{align*}
        2 \bft_2^\top \bft_3 &= 2 \nu \parant{\tilde\bfR \tilde\bSigma \hat\bfR \frac{\bfX^\top \boldsymbol{\ep}}{n_t}}^\top  \bracket{\tilde\bfR\hat\vbeta\hat\vbeta^\top \tilde\bfR \tilde\bSigma\hat\bfR\hat\bSigma \vbeta_\star + \tilde\bfR \hat\vbeta\hat\vbeta^\top \tilde\bfR \tilde\bSigma \hat\bfR\frac{\bfX^\top \boldsymbol{\ep}}{n_t}}\\[0.2cm]
        &= 2\nu\parant{\vbeta^\top \tilde\bfR \tilde\bSigma\hat\bfR\hat\bSigma \vbeta_\star}\parant{n_t^{-1}\boldsymbol{\ep}^\top\bfX \hat\bfR \tilde\bSigma\tilde\bfR^2 \hat\vbeta}\\ &\hspace{3cm}+ 2 \nu \parant{n_t^{-1}\hat\vbeta^\top \tilde\bfR \tilde\bSigma \hat\bfR{\bfX^\top \boldsymbol{\ep}}}\parant{n_t^{-1}\boldsymbol{\ep}^\top\bfX \hat\bfR \tilde\bSigma\tilde\bfR^2 \hat\vbeta}.
    \end{align*}
    Using the Hanson-Wright inequality and the fact that $\boldsymbol{\ep}$ is mean zero and independent of all other sources of randomness in the problem, we have $n_t^{-1}\boldsymbol{\ep}^\top\bfX \hat\bfR \tilde\bSigma\tilde\bfR^2 \hat\vbeta \to_\prob 0$. Hence, 
    \begin{align*}
        2 \bft_2^\top \bft_3 \to_\prob 0.
    \end{align*}
    \item The sixth term can be expanded as follows:
    \begin{align*}
        2\bft_1^\top \bft_3 &= 2 \nu \parant{\hat\vbeta^\top \tilde\bfR \tilde\bSigma\hat\bfR\hat\bSigma\vbeta_\star + n_t^{-1}\hat\vbeta^\top \tilde\bfR \tilde\bSigma\hat\bfR \bfX^\top \boldsymbol{\ep}} \cdot \parant{\hat\vbeta^\top \tilde\bfR (\tilde\bfR \tilde\bSigma\hat\bfR\hat\bSigma-\bfI_{\dx})\vbeta_\star}\\
        &= 2 \nu \parant{\hat\vbeta^\top \tilde\bfR \tilde\bSigma\hat\bfR\hat\bSigma\vbeta_\star}\cdot \parant{\hat\vbeta^\top \tilde\bfR^2 \tilde\bSigma \hat\bfR \hat\bSigma\vbeta_\star - \hat\vbeta^\top \tilde\bfR \vbeta_\star} + o_\prob(1),
    \end{align*}
    where again we have used the Hanson-Wright inequality and the fact that $\boldsymbol{\ep}$ is mean zero and independent of all other sources of randomness in the problem. Above we have already shown above that 
    \begin{align*}
        {\hat\vbeta^\top \tilde\bfR \tilde\bSigma\hat\bfR\hat\bSigma\vbeta_\star} \to_\prob \zeta\cdot  (1 - \lambda_s m_{s,1})\cdot\parant{1 - \lambda_t m_{t,1}}.
    \end{align*}
    With a similar argument, we have
    \begin{align*}
        \hat\vbeta^\top \tilde\bfR^2 \tilde\bSigma \hat\bfR \hat\bSigma\vbeta_\star &= \zeta \cdot \dx^{-1}\mathrm{Tr}\parant{\tilde\bfR^2 \tilde\bSigma \hat\bfR \hat\bSigma} + o_\prob(1)\\[0.2cm]
        &=\zeta \cdot \dx^{-1}\mathrm{Tr}\parant{\tilde\bfR^2 \tilde\bSigma} \cdot \dx^{-1}\mathrm{Tr} \parant{\hat\bfR \hat\bSigma} + o_\prob(1)\\[0.2cm]
        &\to_\prob \zeta (1 - \lambda_t m_{t,1}) \cdot {\parant{1 - \lambda_t m_{t,1}}\cdot\parant{m_{s,1}-\lambda_s m_{s,2}}}.
    \end{align*}
    Also, $\hat\vbeta^\top \tilde\bfR \,\vbeta_\star \to_\prob \zeta m_{s,1}$. Hence, putting all together, we get
    \begin{align*}
        2\bft_1^\top \bft_3 \to_\prob 2 \zeta^2 \lambda_s (1 - \lambda_t m_{t,1}) \bracket{\lambda_t \lambda_s m_{t,1}m_{s,2} - \lambda_t m_{t,1}m_{s,1} - \lambda_s m_{s,1}}.
    \end{align*}
\end{itemize}
Thus, adding all the terms together, we have
\begin{align*}
    \mathcal{L}_s - \mathcal{L}_t \to_{\prob} \Delta - \zeta^2 \Delta_{\bfGamma}
\end{align*}
where the expression for $\Delta$ is given in Theorem~\ref{thm:ridge-ridge-strong-error}, and $\Delta_{\bfGamma}$ is given by
    \begin{align*}
        \Delta_{\bfGamma} := \lambda_s \parant{-1 + \lambda_t m_{t,1}} \Big[- 2 \lambda_t m_{s,1} m_{t,1} + \lambda_s m_{s,2} (-1 + \lambda_t m_{t,1})\Big].
    \end{align*}
This concludes the proof.

\section{Proof of Proposition~\ref{prop}}
The training loss for the teacher model is given by
\begin{align*}
    \widehat{\mathcal{L}}_t := - \frac{1}{n_t} \sum_{i = 1}^{n_t} y_i \hat{f}_t(\bfx_i) = - \frac{1}{n_t} \sum_{i = 1}^{n_t} y_i \bfa_t^\top \sigma(\bfW_t\bfx_i).
\end{align*}
Taking derivatives with respect to the matrix $\bfW_t$, we arrive at
\begin{align*}
    \nabla_{\bfW_t}\widehat{\mathcal{L}}_t = -\frac{1}{n_t} \bracket{\parant{\bfa_t\, \bfy^\top} \odot \sigma'(\bfW_t \bfX^\top)}\bfX
\end{align*}
Let $c_{\sigma, 1}$ be the first Hermite coefficient of the activation function $\sigma$, and define  \( \sigma_\perp : \mathbb{R} \to \mathbb{R} \) as
\[
\sigma_{\perp}(z) = \sigma(z) - c_{\sigma, 1} z, \quad \forall z \in \R,
\]
where $\Ex_{z \sim \normal(0,1)}[\sigma_{\perp}(z)] = 0$. Thus, we can write
\begin{align*}
    \nabla_{\bfW_t}\widehat{\mathcal{L}}_t &= -\frac{1}{n_t} \bracket{\parant{\bfa_t\, \bfy^\top} \odot \parant{c_{\sigma,1} + \sigma_\perp'(\bfW_{t,0} \bfX^\top)}}\bfX\\[0.3cm]
    &= - \frac{c_{\sigma,1}}{n_t} \bfa_t\, \bfy^\top \bfX  -\frac{1}{n_t} \bracket{\parant{\bfa_t\, \bfy^\top} \odot{\sigma_\perp'(\bfW_{t,0} \bfX^\top)}}\bfX
\end{align*}
By construction, the matrix $\sigma_\perp'(\bfW_{t,0}\bfX^\top)$ has mean zero entries. Thus, using \cite[Theorem 5.44]{vershynin2010introduction}, we have $\|\sigma_\perp'(\bfW_{t,0}\bfX^\top)\|_{\rm op} = O(\sqrt{n_t})$. Hence, using Lemma~\ref{lem:hadamard_rank-one}, we have
\begin{align*}
    \frac{1}{n_t} \Big\| \bracket{\parant{\bfa_t\, \bfy^\top} \odot{\sigma_\perp'(\bfW_{t,0} \bfX^\top)}}\bfX\Big\|_{\rm op} &= \frac{1}{n_t} \Big\| {\mathrm{diag}\parant{\bfa_t}{\sigma_\perp'(\bfW_{t,0} \bfX^\top)}\,\mathrm{diag} (\bfy)}\bfX\,\Big\|_{\rm op} \\[0.2cm]
    &= \frac{1}{n_t} \cdot \frac{\mathrm{polylog}(p_t)}{\sqrt{p_t}} \cdot \sqrt{n_t} \cdot \mathrm{polylog}({n_t}) \sqrt{n_t}\\ 
    &= \tilde{O}\parant{\frac{1}{\sqrt{p_t}}},
\end{align*}
where we have used the fact that $\|\bfX\|_{\rm op} = O(\sqrt{n_t})$ \cite[Theorem 7.3.1]{vershynin2010introduction}, and the sub-gaussian maximal inequality to get $\|\bfa\|_\infty = p_t^{-1/2}{\mathrm{polylog}(p_t)}$. Similarly, using the sub-Weibull maximal inequality \citep[Proposition A.6 and Remark A.1]{kuchibhotla2022moving}, we have $\|\bfy\|_\infty = O(\mathrm{polylog}(n_t))$. As a result, for any $\vbeta\in \R^{\dx}$ with $\|\vbeta\|_2 = 1$, we have
\begin{align*}
    \|\nabla_{\bfW_t} \widehat{\mathcal{L}}_t \, \vbeta\|_{2} = n_t^{-1}\vbeta^\top \bfX^\top \bfy + o_\prob(1).
\end{align*}
We will now study the case where $\vbeta$ is the easy or the hard direction.
\paragraph{Easy direction.} First, we let $\vbeta = \vbeta_e$. In this case, we have
\begin{align*}
    \|\nabla_{\bfW_t} \widehat{\mathcal{L}}_t \, \vbeta_e\|_{2} &= n_t^{-1}\vbeta_e^\top \bfX^\top \parant{\sigma_e(\bfX\vbeta_e)+\sigma_h(\bfX\vbeta_h)} + o_\prob(1).
\end{align*}
Note that $\bfX\vbeta_e \in \R^{n_t}$ is a vector of i.i.d. $\normal(0,1)$ entries. Thus, using the weak law of large numbers, we have
\begin{align*}
    n_t^{-1} \vbeta_e^\top \bfX^\top \sigma_e(\bfX\vbeta_e) \to_\prob \Ex_{z\sim\normal(0,1)}[z \sigma_e(z)] = c_{\sigma_e, 1}.
\end{align*}
Also, recall the assumption that $\vbeta_e$ and $\vbeta_h$ are orthonormal vectors and $\bfX$ is a matrix with i.i.d. $\normal(0,1)$ entries. Thus, $\bfX\vbeta_e$ and $\bfX\vbeta_h$ are independent and we have
\begin{align*}
    n_t^{-1} \vbeta_e^\top \bfX^\top \sigma_h(\bfX\vbeta_h) \to_\prob 0.
\end{align*}
Thus, the gradient has a non-trivial alignment to the easy direction. Consequently, for $\widehat\bfW_t = \bfW_{t,1} - \eta_t \nabla_{\bfW_0}\widehat{\mathcal{L}}_t$, with $\eta_t = \Theta(1)$, we have
${\|\widehat\bfW_t \vbeta_e\|_{\rm op}} \to_\prob c >0$, proving the first part of the proposition.
\paragraph{Hard direction.} For the hard direction $\vbeta = \vbeta_h$, we have
\begin{align*}
    \|\nabla_{\bfW_t} \widehat{\mathcal{L}}_t \, \vbeta_h\|_{2} &= n_t^{-1}\vbeta_h^\top \bfX^\top \parant{\sigma_e(\bfX\vbeta_e)+\sigma_h(\bfX\vbeta_h)} + o_\prob(1).
\end{align*}
The first term $n_t^{-1} \vbeta_h^\top \bfX^\top \sigma_e(\bfX\vbeta_e)$ can be shown to be $o(1)$ with an argument identical to the argument above. For the second term, note that $\bfX \vbeta_h \in \R^{n_t}$ is a vector with independent $\normal(0,1)$ entries. Using the weak law of large numbers, we have
\begin{align*}
    n_t^{-1} \vbeta_h^\top \bfX^\top \sigma_h(\bfX\vbeta_h) \to_\prob \Ex_{z \normal(0,1)}[z \sim \sigma_h(z)] = c_{\sigma_h, 1} = 0,
\end{align*}
where we have used the fact that the information exponent of $\sigma_h$ is lager than one; i.e., $c_{\sigma_h, 1}=0$. This  shows that the gradient has no alignment to the hard direction, completing the proof.

\section{Proof of Theorem~\ref{thn:feature2}}
From the proof of Proposition~\ref{prop}, we have
\begin{align*}
    \widehat\bfW_{t} = \bfW_{t,0} + c_{\sigma,1}\eta_t \bfa_t\hat\vbeta_e^\top + \bfDelta,
\end{align*}
where $\hat\vbeta_e = n_t^{-1} \bfX^\top \bfy$ and $\|\bfDelta\|_{\rm op} = o(1)$. Given the fresh indepednent set of samples $\tilde\bfX$, the updated teacher model labels them as
\begin{align*}
    \tilde\bfy = \tilde\bfF \bfa_t, \quad \text{with}\quad  \tilde\bfF = \sigma(\tilde\bfX\widehat\bfW_t^\top) \in \R^{n_s \times p_t}
\end{align*}
and the training loss for the student model given by
\begin{align*}
    \widehat{\mathcal{L}}_s := - \frac{1}{n_s} \sum_{i = 1}^{n_s} \tilde{y}_i \hat{f}_t(\tilde\bfx_i) = - \frac{1}{n_s} \sum_{i = 1}^{n_s} \tilde{y}_i \bfa_s^\top \sigma(\bfW_s\tilde\bfx_i).
\end{align*}
Taking derivatives with respect to the matrix $\bfW_t$, we arrive at
\begin{align}    
    \label{eq:student_derivative}\nabla_{\bfW_s}\widehat{\mathcal{L}}_s\Big|_{\bfW_{s,0}} = -\frac{1}{n_s} \bracket{\parant{\bfa_s\, \tilde\bfy^\top} \odot \sigma'(\bfW_{s,0} \tilde\bfX^\top)}\tilde\bfX.
\end{align}
To analyze the gradient, we should first characterize $\sigma'(\bfW_{s,0} \tilde\bfX^\top)$ and $\tilde\bfy$.

\paragraph{Analysis of $\tilde\bfy$.} The feature matrix $\tilde\bfF$ is given by
\begin{align*}
     \tilde\bfF = \sigma(\tilde\bfX\widehat\bfW_t^\top) = \sigma(\tilde\bfX\bfW_{t,0}^\top + c_{\sigma, 1}\eta_t\tilde\bfX\hat\vbeta_e \bfa_t^\top),
\end{align*}
which is a nonlinear transform applied element-wise to a spiked random matrix. Following the recent results in nonlinear random matrix theory (e.g., \cite{moniri_atheory2023,wang2022spectral,moniri2024signal,guionnet2023spectral,feldman2025spectral}), in the regime where $\eta_t = \Theta(1)$, we Hermite expand the nonlinearity as follows:
\begin{align*}
     \tilde\bfF &= \sigma\parant{\tilde\bfX\widehat\bfW_t^\top} = \sigma\parant{\tilde\bfX\bfW_{t,0}^\top + c_{\sigma, 1}\eta_t\tilde\bfX\hat\vbeta_e \bfa_t^\top}\\[0.2cm]
     &= \sum_{k = 1}^{\infty} c_{\sigma, k} H_{k}\parant{\tilde\bfX\bfW_{t,0}^\top + c_{\sigma, 1}\eta_t\tilde\bfX\hat\vbeta_e \bfa_t^\top}.
\end{align*}
Using Lemma~\ref{lem:Hermite_derivative} element-wise, we can expand this matrix further
\begin{align*}
     \tilde\bfF &=  \sum_{k = 1}^{\infty}\sum_{j = 0}^{k} {k \choose j} c_{\sigma, 1}^j\eta_t^j c_{\sigma, k}  H_{k-j}\parant{\tilde\bfX\bfW_{t,0}^\top}  \odot \parant{(\tilde\bfX\hat\vbeta_e )^{\odot j}\bfa_t^{\odot j \top}}\\
     &= \sum_{k = 1}^{\infty}c_{\sigma, k}  H_{k}\parant{\tilde\bfX\bfW_{t,0}^\top}  + \sum_{k = 1}^{\infty} c_{\sigma, 1}^k\eta_t^k c_{\sigma, k} \parant{(\tilde\bfX\hat\vbeta_e )^{\odot k}\bfa_t^{\odot k \top}}\\
     &\hspace{2cm}+ \sum_{k = 1}^{\infty}\sum_{j = 1}^{k-1} {k \choose j} c_{\sigma, 1}^j\eta_t^j c_{\sigma, k}  H_{k-j}\parant{\tilde\bfX\bfW_{t,0}^\top}  \odot \parant{(\tilde\bfX\hat\vbeta_e )^{\odot j}\bfa_t^{\odot j \top}}.
\end{align*}
Note that the first sum can be written as
\begin{align*}
     \sum_{k = 1}^{\infty}c_{\sigma, k}  H_{k}\parant{\tilde\bfX\bfW_{t,0}^\top} = \sigma(\tilde\bfX\bfW_{t,0}^\top).
\end{align*}
In the second sum, by a simple sub-multiplicativity argument, the $k$-th term has an operator norm bounded by
\begin{align*}
     \left\|c_{\sigma, 1}^k\eta_t^k c_{\sigma, k} \parant{(\tilde\bfX\hat\vbeta_e )^{\odot k}\bfa_t^{\odot k \top}}\right\|_{\rm op} = O\parant{  p_t^{1 - k/2}}
\end{align*}
which is $o(\sqrt{p_t})$ when $k>1$. Moreover, using Lemma~\ref{lem:hadamard_rank-one}, the $(k,j)$-th term of the third sum has an operator upper bounded by
\begin{align*}
    \Big\| {k \choose j} c_{\sigma, 1}^j\eta_t^j c_{\sigma, k}  &H_{k-j}\parant{\tilde\bfX\bfW_{t,0}^\top}  \odot \parant{(\tilde\bfX\hat\vbeta_e )^{\odot j}\bfa_t^{\odot j \top}}\Big\|_{\rm op}\\ &=  \Big\| {k \choose j} c_{\sigma, 1}^j\eta_t^j c_{\sigma, k} \diag\parant{(\tilde\bfX\hat\vbeta_e )^{\odot j}} H_{k-j} \parant{\tilde\bfX\bfW_{t,0}^\top}  \diag\parant{\bfa_t^{\odot j }}\Big\|_{\rm op}\\[0.2cm]
    &= \tilde{O}\parant{p_t^{-j/2} \cdot n_s^{1/2}} = o(p_t^{1/2}).
\end{align*}
Putting everything together, we have
\begin{align*}
    \tilde\bfF = \sigma(\tilde\bfX\bfW_{t,0}^\top) + c_{\sigma, 1}^2 \eta_t {(\tilde\bfX\hat\vbeta_e )\bfa_t^{\top}} + \bfDelta,
\end{align*}
where $\|\bfDelta\|_{\rm op} = o\parant{\sqrt{n_s}}$. Hence, recalling that $\|\bfa_t\|_2 = \Theta(1)$, we have
\begin{align}
    \label{eq:tilde_y}
    \tilde\bfy = \tilde\bfF\bfa_t =  \sigma(\tilde\bfX\bfW_{t,0}^\top)\,\bfa_t + c_{\sigma, 1}^2 \eta_t {(\tilde\bfX\hat\vbeta_e )} + \boldsymbol{\delta}
\end{align}
in which $\|\boldsymbol{\delta}\|_2 = o(\sqrt{n_s})$.

\paragraph{Derivative Term $\sigma'(\bfX\bfW_{s,0}^\top)$.}
Recall that we have $\bfW_{s,0} = \bar\bfW_{s,0} + \tau\,\bar\bfa\,\vbeta_h^\top$. Hence, 
\begin{align*}
     \sigma'(\tilde\bfX\bfW_{s,0}^\top) = \sigma'(\tilde\bfX\bar\bfW_{s,0}^\top + \tau\, (\tilde\bfX\vbeta_h)\, \bar\bfa^\top).
\end{align*}
This is again a nonlinearity applied element-wise to a spiked random matrix. Similar to the argument for $\tilde\bfF$, we can use Lemma~\ref{lem:Hermite_derivative} to write
\begin{align*}
     \sigma'(\tilde\bfX\bfW_{s,0}^\top) &=  \sum_{k = 1}^{\infty}\sum_{j = 0}^{k} {k \choose j} \tau^j c_{\sigma', k}  H_{k-j}\parant{\tilde\bfX\bar\bfW_{s,0}^\top}  \odot \parant{(\tilde\bfX\vbeta_h )^{\odot j}\,\bar\bfa^{\odot j \top}}\\
     &= \sum_{k = 1}^{\infty}c_{\sigma', k}  H_{k}\parant{\tilde\bfX\bar\bfW_{s,0}^\top}  + \sum_{k = 1}^{\infty} \tau^k c_{\sigma', k} \parant{(\tilde\bfX\vbeta_h )^{\odot k}\,\bar\bfa^{\odot k \top}}\\
     &\hspace{2cm}+ \sum_{k = 1}^{\infty}\sum_{j = 1}^{k-1} {k \choose j} \tau^j c_{\sigma', k}  \, H_{k-j}\parant{\tilde\bfX\bar\bfW_{s,0}^\top}  \odot \parant{(\tilde\bfX\vbeta_h )^{\odot j}\bar\bfa^{\odot j \top}}.
\end{align*}
Similar to the reasoning used for $\tilde\bfF$, we have
\begin{align*}
    \Big\| {k \choose j} \tau^j c_{\sigma', k}  \, H_{k-j}\parant{\tilde\bfX\bar\bfW_{s,0}^\top}  \odot \parant{(\tilde\bfX\hat\vbeta_h )^{\odot j}\bar\bfa^{\odot j \top}}\Big\|_{\rm op} = \tilde{O}\parant{n_s^{1/2}\parant{\frac{\tau}{\sqrt{p_s}}}^j}
\end{align*}
which is $o(\sqrt{n_s})$ as long as $\tau = o(\sqrt{p_s})$. Thus, we have
\begin{align}
    \label{eq:sigma_prime}
    \sigma'\parant{\tilde\bfX\bfW_{s,0}^\top} &= \sigma'\parant{\tilde\bfX\bar\bfW_{s,0}^\top}  + \sum_{k = 1}^{\infty} \tau^k c_{\sigma', k} \parant{(\tilde\bfX\hat\vbeta_h )^{\odot k}\,\bar\bfa^{\odot k \top}} + \bar\bfDelta
\end{align}
in which $\|\bar\bfDelta\|_{\rm op} = o(\sqrt{n_s})$.

\paragraph{Gradient of the Loss.} Now, we have all the ingredients to study the gradient of the loss function of the student model. Plugging \eqref{eq:sigma_prime} into \eqref{eq:student_derivative}, we have
\begin{align*}    
    \nabla_{\bfW_s}\widehat{\mathcal{L}}_s\Big|_{\bfW_{s,0}} &= -\frac{1}{n_s} \bracket{\parant{\bfa_s\, \tilde\bfy^\top} \odot \bracket{\sigma'\parant{\bar\bfW_{s,0}\tilde\bfX^\top}  + \sum_{k = 1}^{\infty} \tau^k c_{\sigma', k} \parant{\bar\bfa^{\odot k}\,(\tilde\bfX\hat\vbeta_h )^{\odot k\top} } + \bar\bfDelta}}\tilde\bfX,
\end{align*}
which we decompose as $\nabla_{\bfW_s}\widehat{\mathcal{L}}_s\Big|_{\bfW_{s,0}} = \bfG_1 + \bfG_2$ where $\bfG_1$ and $\bfG_2$ are defined as
\begin{align*}
    \bfG_1 &= -\frac{1}{n_s} \bracket{\parant{\bfa_s\, \tilde\bfy^\top} \odot {\sigma'\parant{\bar\bfW_{s,0}\tilde\bfX^\top} }}\tilde\bfX,\\[0.2cm]
    \bfG_2 &=-\frac{1}{n_s} \bracket{\parant{\bfa_s\, \tilde\bfy^\top} \odot \parant{\; \sum_{k = 1}^{\infty} \tau^k c_{\sigma', k} \parant{\bar\bfa^{\odot k} (\tilde\bfX\hat\vbeta_h )^{\odot k\top}} + \bar\bfDelta}}\tilde\bfX.
\end{align*}
We will analyze each component separately.

\paragraph{Analysis of $\bfG_1$.} This term can be written as
\begin{align*}
    \bfG_1 = -\frac{1}{n_s} \bracket{\parant{\bfa_s\, \tilde\bfy^\top} \odot {\sigma'\parant{\bar\bfW_{s,0}\tilde\bfX^\top} }}\tilde\bfX.
\end{align*}
Let $c_{\sigma, 1}$ be the first Hermite coefficient of the activation function $\sigma$, and define  \( \sigma_\perp : \mathbb{R} \to \mathbb{R} \) as
\[
\sigma_{\perp}(z) = \sigma(z) - c_{\sigma, 1} z, \quad \forall z \in \R,
\]
where $\Ex_{z \sim \normal(0,1)}[\sigma_{\perp}(z)] = 0$. We can write write
\begin{align}
    \label{eq:G1}
    \bfG_1 = -\bfa_s \parant{\frac{\tilde\bfy^\top\tilde\bfX }{n_s}} -\frac{1}{n_s} \bracket{\parant{\bfa_s\, \tilde\bfy^\top} \odot {\sigma'_\perp\parant{\bar\bfW_{s,0}\tilde\bfX^\top} }}\tilde\bfX.
\end{align}
By using Lemma~\ref{lem:hadamard_rank-one} and by a similar argument to the one used in the proof of Proposition~\ref{prop},  the operator norm of the second term of \eqref{eq:G1} be upper bounded as
\begin{align*}
    \bigg\|\frac{1}{n_s} \bracket{\parant{\bfa_s\, \tilde\bfy^\top} \odot {\sigma'_\perp\parant{\bar\bfW_{s,0}\tilde\bfX^\top} }}\tilde\bfX\bigg\|_{\rm op} = o(1).
\end{align*}
Using the characterization of $\tilde\bfy$ in \eqref{eq:tilde_y}, the first term of \eqref{eq:G1} can be written as $-\bfa_s \hat\vbeta^\top$ with
\begin{align*}
    \hat\vbeta = \frac{1}{n_s} \tilde\bfX^\top \tilde\bfy = \frac{1}{n_s} \tilde\bfX^\top \bracket{
    \sigma(\tilde\bfX\bfW_{t,0}^\top)\bfa_t + c_{\sigma, 1}^2 \eta_t {(\tilde\bfX\hat\vbeta_e )} + \boldsymbol{\delta}}
\end{align*}
This vector aligns to the easy target direction:
\begin{align}
    \label{eq:beta_hat}
    \vbeta_e^\top\hat\vbeta &= \frac{1}{n_s} (\tilde\bfX\vbeta_e)^\top\bracket{
    \sigma(\tilde\bfX\bfW_{t,0}^\top)\bfa_t + c_{\sigma, 1}^2 \eta_t {(\tilde\bfX\hat\vbeta_e )} + \boldsymbol{\delta}}\nonumber\\
    &= \frac{1}{n_s} (\tilde\bfX\vbeta_e)^\top\bracket{
    \sigma(\tilde\bfX\bfW_{t,0}^\top)\bfa_t + c_{\sigma, 1}^2 \eta_t {(\tilde\bfX\hat\vbeta_e )} } + o(1)\nonumber\\
    &= \frac{c_{\sigma, 1}^2 \eta_t}{n_s} (\tilde\bfX\vbeta_e)^\top{
    {(\tilde\bfX\hat\vbeta_e )} } + o(1) \to_\prob c_e>0.
\end{align}
\paragraph{Analysis of $\bfG_2$.} To analyze this component, first note that we can write
\begin{align*}
    \bfG_2 &=- { \parant{\; \sum_{k = 1}^{\infty} \frac{\tau^k c_{\sigma', k} }{n_s}\parant{\parant{\bfa_s \odot \bar\bfa^{\odot k}}\, \parant{\tilde\bfy \odot (\tilde\bfX\hat\vbeta_h )}^{\odot k\top}} + \bar\bfDelta}}\tilde\bfX.
\end{align*}
The operator norm of the $k-$th term of the sum is given by
\begin{align*}
    \Bigg\|{ \bracket{\frac{\tau^k c_{\sigma', k} }{n_s}\parant{\parant{\bfa_s \odot \bar\bfa^{\odot k}}\, \parant{\tilde\bfy \odot (\tilde\bfX\hat\vbeta_h )}^{\odot k\top}} + \bar\bfDelta}}\tilde\bfX\Bigg\|_{\rm op} = O\parant{\parant{\frac{\tau}{\sqrt{p_s}}}^k } = o(1),
\end{align*}
where we have used the sub-multiplicativity of the operator norm, $\tau = o(\sqrt{p_s})$, and the fact that $\|\tilde\bfy \odot (\tilde\bfX\hat\vbeta_h )\|_2 = \Theta(\sqrt{n_s})$ and $\|\bfa_s \odot \bar\bfa^{\odot k}\|_{2} = O(p_s^{-k/2})$.

\paragraph{Putting everything together.} Using the the results of the analyses above, we have
\begin{align*}
    \widehat\bfW_s = \bar\bfW_{s,0} + \tau \, \bar\bfa \, \vbeta_h^\top + \eta_s \bfa_s \hat\vbeta^\top + \mathring{\bfDelta}
\end{align*}
where $\|\mathring{\bfDelta}\|_{\rm op} = o(1)$. Hence, recalling \eqref{eq:beta_hat}, the updated weight matrix has non-vanishing correlation with both the easy and hard directions, completing the proof.

\end{document}